\theoremstyle{plain}
\newtheorem{theorem}{Theorem}[section]
\newtheorem{lemma}[theorem]{Lemma}
\newtheorem{corollary}[theorem]{Corollary}
\theoremstyle{definition}
\newtheorem{definition}[theorem]{Definition}
\theoremstyle{remark}
\icmltitlerunning{Masked Bayesian Neural Networks}
\begin{document}

\twocolumn[
\icmltitle{Masked Bayesian Neural Networks : \\
Theoretical Guarantee and its Posterior Inference}



\icmlsetsymbol{equal}{*}

\begin{icmlauthorlist}
\icmlauthor{Insung Kong}{snu}
\icmlauthor{Dongyoon Yang}{snu}
\icmlauthor{Jongjin Lee}{snu}
\icmlauthor{Ilsang Ohn}{Inha}
\icmlauthor{Gyuseung Baek}{obzen}
\icmlauthor{Yongdai Kim}{snu}
\end{icmlauthorlist}

\icmlaffiliation{snu}{Department of Statistics, Seoul National University}
\icmlaffiliation{Inha}{Department of Statistics, Inha University}
\icmlaffiliation{obzen}{Obzen}

\icmlcorrespondingauthor{Yongdai Kim}{ydkim0903@gmail.com}

\icmlkeywords{Machine Learning, ICML}

\vskip 0.3in
]



\printAffiliationsAndNotice{}  

\begin{abstract}

Bayesian approaches for learning deep neural networks (BNN)
have been received much attention and successfully applied to various applications. Particularly, BNNs have the merit of having better generalization ability as well as better uncertainty quantification. For the success of BNN,
search an appropriate architecture of the neural networks is an important task, and
various algorithms to find good sparse neural networks have been proposed.
In this paper, we propose a new node-sparse BNN model which has good theoretical properties and is computationally feasible. 
We prove that the posterior concentration rate to the true model is near minimax optimal and adaptive to the smoothness of the true model. 
In particular the adaptiveness is the first of its kind for node-sparse BNNs.
In addition, we develop a novel MCMC algorithm which makes the Bayesian inference of the node-sparse BNN model feasible in practice. 
\end{abstract}

\section{Introduction}
\label{sec1}

Bayesian approaches for learning deep neural networks (DNN), which is called Bayesian Neural Networks (BNN) \cite{mackay1992practical, neal2012bayesian},
have been received much attention and successfully applied to various applications. 
Particularly, BNNs have the merit of having better generalization ability as well as better uncertainty quantification \cite{wilson2020bayesian, izmailov2021bayesian}. Applications of BNNs range from recommender systems \cite{wang2015collaborative} to topic modeling \cite{gan2015scalable}, medical diagnosis \cite{filos2019systematic}, and astrophysics \cite{cranmer2021bayesian}, to name just a few. 

Various BNN models to 
have desirable theoretical properties 
have been proposed. 
In particular, edge-sparse BNNs are the main focus of research of BNNs.
Fast posterior concentrate rates of edge-sparse BNNs have been studied by
\cite{polson2018posterior, cherief2020convergence, bai2020efficient, lee2022asymptotic}. 
Particularly, \citet{polson2018posterior} uses Spike-and-Slab prior on each edge and prove the near minimax concentrate rate of the posterior, but posterior computation is difficult. 
To ease computation, \citet{molchanov2017variational} uses variational dropout \cite{kingma2015variational} to search good edge-sparse BNNs 
and \citet{deng2019adaptive} and \citet{wang2021bayesian} develop edge-sparse learning algorithms via adaptive empirical Bayesian methods. 
\citet{nalisnick2019dropout} shows that multiplicative noise induces structured shrinkage priors on a network’s weights.
However, theoretical justifications of these algorithms lack.

Node-sparse BNNs are useful alternatives to edge-sparse BNNs because their
inferential costs are lighter than edge-sparse BNNs.
Using scale-mixture priors, \citet{louizos2017bayesian} and \citet{ghosh2019model} develop
node-sparse BNNs based on the VI, but theoretical justifications of these algorithms are not available.
\citet{jantre2021layer} derives the posterior concentration
rate of  a node-sparse VI approach using Spike-and-Slab prior.
However, their result does not meet the minimax optimal rate and is not adaptive to the smoothness
of the true model.

In this paper, we propose a new node-sparse BNN model called the masked BNN (mBNN) which has good theoretical properties and is computationally feasible. 
We prove that the posterior concentration rate to the true model is near minimax optimal and adaptive to the smoothness of the true model. 
The adaptiveness, which is the first of its kind for node-sparse BNNs, means that the mBNN selects optimal sparse architectures without knowing the complexity of the true model.
Moreover, we develop a novel MCMC algorithm which makes the Bayesian inference of the node-sparse BNN model possible. 

Our proposed node-sparse BNN can be also used for compressing complex DNNs.
Mots recent DNNs are based on complex network architectures consisting of multiple 
non-linear hidden layers, which results in expensive computation costs and requirements of excessive storage capacities when they are deployed to application systems. 
Various methodologies to alleviate memory and computation costs
have been suggested.
A popular approach is to prune iteratively unnecessary edges 
\cite{han2015learning, frankle2018lottery} or unnecessary nodes \cite{wen2016learning, he2017channel, wang2019eigendamage, chin2020towards}. 
However, such algorithms do not provide 
proper uncertainty quantification. By analyzing image data with CNNs,
we illustrate that our node-sparse BNN is good at compressing DNNs  without hampering
the ability of uncertainty quantification.

Our contributions are summarized as follows:
\begin{itemize}
	\item We develop a node-sparse prior for DNNs such that the posterior concentration rate to the true model is near minimax optimal adaptively to the smoothness of the true model and Bayesian inference with a specially designed MCMC algorithm is possible. 
 	

	\item We implement an efficient MCMC algorithm for searching good node-sparse BNNs. In particular, we develop a local informed proposal distribution in the Metropolis-Hastings (MH) algorithm to search good node-sparse architectures efficiently.

	\item By numerical experiments, we illustrate that the mBNN outperforms other Bayesian approaches 
    including nonsparse BNN and VI algorithms in terms of generalization and uncertainty quantification.
	 
\end{itemize}

\section{Preliminaries}
\label{sec2}
\subsection{Notation}
Let $\mathbb{R}$ and $\mathbb{N}$ be the sets of real numbers and natural numbers, respectively. For an integer $n \in \mathbb{N}$, we denote $[n] := \{1,\dots,n\}$. A capital letter denotes a random variable or matrix interchangeably
whenever its meaning is clear, and a vector is denoted by a bold letter, e.g. $\bm{x} := (x_1 , \dots, x_d)^{\top}$. 
For a $d$-dimensional vector $\bm{x} \in \mathbb{R}^d$, we denote $|\bm{x}|_p := (\sum_{j=1}^{d}|x_j|^p)^{1/p}$ for $1 \leq p < \infty$, $|\bm{x}|_{0} := \sum_{j=1}^{d }\mathbb{I}(x_j \ne 0)$ and $|\bm{x}|_{\infty} := \max_{j \in [d]}|x_j|$. 
For a real-valued function  $f : \mathcal{X} \to \mathbb{R}$  and $1 \leq p < \infty$, we denote $||f||_{p,n} := (\sum_{i=1}^{n} f(\bm{x}_i)^p/n)^{1/p}$ and $||f||_{p,\mathrm{P}_{\bm{X}}} := \left(\int_{\bm{X} \in \mathcal{X}} f(\bm{X})^p d\mathrm{P}_{\bm{X}} \right)^{1/p}$ where
$\mathrm{P}_{\bm{X}}$ is a probability measure defined on input space $\mathcal{X}$.
Moreover, we define $||f||_{\infty} = \operatorname{sup}_{\bm{x} \in \mathcal{X}}|f(\bm{x})|.$ 
For $b_1 \leq b_2$, we define $f_{[b_1 , b_2 ]}(\cdot) := \min( \max(f(\cdot),b_1), b_2)$ which is a truncated version of $f$ on $b_1$ and $b_2$. 
We denote $\circ$ and $\odot$ as the composition of functions and element-wise product
of vectors or matrices, respectively.
For a probability vector $\bm{q} \in \mathbb{R}^r$, 
we denote $\operatorname{Cat}(\bm{q})$ as the categorical distribution with the
probabilities of each category being $\bm{q}$ and
$\operatorname{Multi}_{\left( N, \bm{q} \right)}$
as the distribution of N many selected balls without replacement from the jar containing $r$ many balls whose selection probabilities are $\bm{q}$.
For technical simplicity, we assume $\mathcal{X} = [-1,1]^d$.

\subsection{Data generating process} \label{sec_generative}
We consider two supervised learning problems : regression and classification. 
In regression problems, the input vector $\bm{X}$ and the response variable $Y \in \mathbb{R}$ are generated from the model
\begin{align}
\begin{split} \label{reg}
\bm{X} \sim & \mathrm{P}_{\bm{X}}, \\
Y|\bm{X} \sim & N(f_0 (\boldsymbol{X}), \sigma_0^2), 
\end{split}
\end{align}
where $\mathrm{P}_{\bm{X}}$ is the probability measure defined on $\mathcal{X}.$ Here, $f_0 : \mathcal{X} \to \mathbb{R}$ and $\sigma_0^2 > 0$ are the unknown true regression function and unknown variance of the noise, respectively.

For $K$-class classification problems, the input vector $\bm{X}$ and the response variable $Y \in [K]$ are generated from the model
\begin{align}
\begin{split} \label{cla}
\bm{X} \sim & \mathrm{P}_{\bm{X}}, \\
Y|\bm{X} \sim & \operatorname{Cat}\left(\operatorname{softmax}(\bm{f}_0 (\boldsymbol{X}))\right),
\end{split}
\end{align}
where $\mathrm{P}_{\bm{X}}$ is the probability measure defined on $\mathcal{X}$ and  $\bm{f}_0 : \mathcal{X} \to \mathbb{R}^K$ is the logit of the unknown true conditional class probability function.

\section{Masked Bayesian Neural Network}
\label{sec3}

To construct a node-sparse BNN, we propose to use masking vectors which screen some nodes of the hidden layers. We first define the masked Deep Neural Network (mDNN) model, and then propose the mBNN on the top of the mDNN by specifying a prior appropriately.

\subsection{Deep Neural Network}

For $L \in \mathbb{N}$ and $\bm{p} = (p^{(0)}, p^{(1)}, ... , p^{(L)}, p^{(L+1)})^{\top} \in \mathbb{N}^{L+2}$, DNN with the $(L, \bm{p})$ architecture is a DNN model which has $L$ hidden layers and 
$p^{(l)}$ many nodes at the $l$-th hidden layer for $l\in [L].$
The input and output dimensions are $p^{(0)}$ and $p^{({L+1})},$ respectively.
The output of the DNN model can be written as
\begin{equation}\label{DNN}
f_{\bm{\theta}}^{\operatorname{DNN}}(\cdot) := A_{L+1} \circ \rho \circ A_{L} \dots \circ \rho \circ A_{1} (\cdot),
\end{equation}
where $A_l : \mathbb{R}^{p^{(l-1)}} \mapsto \mathbb{R}^{p^{(l)}}$ for $l \in [L+1]$ is an affine map defined as $A_l (\boldsymbol{x}) := W_l \bm{x} + \bm{b}_l$ with $W_l \in \mathbb{R}^{p^{(l)} \times p^{(l-1)}}$ and $\bm{b}_l \in \mathbb{R}^{p^{(l)}}$ 
and $\rho$ is the RELU activation function.
The DNN model is parameterized by $\bm{\theta}$ which is the concatenation of the weight matrices and bias vectors, that is
\begin{equation*}
\bm{\theta} := (\operatorname{vec}(W_1)^{\top}, b_1^{\top},\dots,\operatorname{vec}(W_{L+1})^{\top}, b_{L+1}^{\top})^{\top}.
\end{equation*}

\subsection{Masked Deep Neural Network} \label{sec_3.2}
For a given standard DNN, the corresponding masked DNN (mDNN) is constructed
by simply adding masking parameters to the standard DNN model.
For $l \in [L]$, the mDNN model screens
the nodes at the $l$-th hidden layer using the binary masking vector $\bm{m}^{(l)} \in \{0,1\}^{p^{(l)}}$. When        
$\left(\bm{m}^{(l)}\right)_j = 0,$ 
the $j$-th node at the $l$-th hidden layer becomes inactive.
The output of the mDNN model with the $(L, \bm{p})$ architecture can be written as
\begin{equation*}
f_{\bm{M}, \bm{\theta}}^{\operatorname{mDNN}}(\cdot) := A_{L+1} \circ \rho_{\bm{m}^{(L)}} \circ A_{L} \dots \circ \rho_{\bm{m}^{(1)}} \circ A_{1} (\cdot),
\end{equation*}
where $A_l$ for $l \in [L+1]$ is the affine map defined in (\ref{DNN})
and $\rho_{\bm{m}^{(l)}} : \mathbb{R}^{p^{(l)}} \to \mathbb{R}^{p^{(l)}}$ for $l \in [L]$ is the masked-RELU activation function defined as
\begin{equation*}
\rho_{\bm{m}^{(l)}} \left(\begin{array}{c}
x_{1} \\ \vdots \\ x_{p^{(l)}} \end{array}\right)
:=\bm{m}^{(l)} \odot \left(\begin{array}{c}
\max(x_1,0) \\ \vdots \\ \max(x_{p^{(l)}},0) \end{array}\right).
\end{equation*}
The model is parameterized by $\bm{M}$ and $\bm{\theta}$, 
where $\bm{M} \in \{0,1\}^{\sum_{l=1}^{L} p^{(l)}}$ is the concatenate of the all masking vectors
\begin{equation*}
    \bm{M} := \left({\bm{m}^{(1)}}^{\top} , \dots, {\bm{m}^{(L)}}^{\top}\right)^{\top}
\end{equation*}
and $\bm{\theta}$ is the concatenation of the weight matrices and the bias vectors in the standard DNN model.

Note that the mDNN is nothing but a standard DNN with the architecture
$(L,\tilde{\bm{p}})$
where $(\tilde{\bm{p}})_l=\left|\bm{m}^{(l)}\right|_{0}.$
That is, the mDNN is a reparameterization of the standard DNN using the masking vectors. 
This reparameterization, however, allows us to develop an efficient MCMC algorithm, in particular for searching good architectures (i.e. good masking vectors).

\subsection{Prior and posterior distribution} \label{section_prior}

We adopt a data-dependent prior $\Pi_n$ on the parameters $\bm{M}$ and $\bm{\theta}$ 
(as well as $\sigma^2$ for regression problems). We consider a data-dependent prior to ensure 
the optimal posterior concentration rate and adaptiveness. 
We assume that a priori $\bm{M}$ and $\bm{\theta}$ 
(as well as $\sigma^2$ for regression problems) are independent.
 
For $\bm{M},$ we use the following hierarchical prior. For each $\bm{m}^{(l)}$ with $l \in [L]$, let $s^{(l)} := |\bm{m}^{(l)}|_0$ be the sparsity of the masking vector $\bm{m}^{(l)}$. We put a prior mass on $s^{(l)}$ by    
\begin{align} \label{prior_s}
\Pi_n (s^{(l)}) {\propto} e^{-(\lambda \log n)^5 {s^{(l)}}^2} \qquad \text{ for } s^{(l)} \in [p^{(l)}],
\end{align}
where $\lambda>0$ is a hyper-parameter.
Note that the prior on $s^{(l)}$ regularizes the width of the network, and more strong regularization is enforced as more data are accumulated.
Given the sparsity level $s^{(l)}$, the masking vector $\bm{m}^{(l)}$ is sampled from the set $\{\bm{m}^{(l)} \in \{0,1\}^{p^{(l)}} : |\bm{m}^{(l)}|_0 =  s^{(l)}\}$ uniformly. In other words,
\begin{align} \label{prior_m}
\Pi_n (\bm{m}^{(l)} | s^{(l)}) \stackrel{iid}{\propto}& \ \frac{1}{{p^{(l)} \choose s^{(l)}}} \mathbb{I}(|\bm{m}^{(l)}|_0 = s^{(l)})
\end{align}
and the prior of $\bm{m}^{(l)}$ is the product of (\ref{prior_s}) and (\ref{prior_m}) 

For the prior of $\bm{\theta},$ we assume that 
\begin{align} \label{prior_theta}
\theta_i \stackrel{iid}{\sim} \mathfrak{p}(\theta_i)  \qquad i \in \left[T\right],
\end{align}
where $T := \sum_{l=0}^{L} (p^{(l)}+1)p^{(l+1)}$ is the length of the vector $\bm{\theta},$ and choose $\mathfrak{p}$ carefully to ensure desirable
theoretical properties.
For high-dimensional linear regression problems, \citet{castillo2012needles} and \citet{castillo2015bayesian} notice that using a heavy-tailed distribution for the prior of the regression coefficients is essential for theoretical optimality. Motivated by these observations, we consider a heavy-tailed distribution for $\mathfrak{p}.$
Let $\mathfrak{P}$ be the class of polynomial tail distributions on $\mathbb{R}$ defined as
\begin{align*} 
    \mathfrak{P} := \left\{ \mathfrak{p} : \lim_{x \to \infty} \frac{x^{-\log x}}{\mathfrak{p}(x)} \to 0 \text{ and } \lim_{x \to \infty} \frac{x^{-\log x}}{\mathfrak{p}(-x)} \to 0 \right\}.
\end{align*}
Examples of polynomial tail distributions are the Cauchy distribution and Student's t-distribution. On the other hand, the Gaussian and Laplace distributions do not belong to $\mathfrak{P}.$
We assume that $\mathfrak{p}$ belongs to $\mathfrak{P}.$
We will show in Section \ref{sec4} that any prior in $\mathfrak{P}$
yields the optimal posterior concentration rate. 

For the prior of $\sigma^2$ in regression problems,
a standard distribution such as the inverse-gamma distribution can be used.
Any distribution whose density is positive at the true $\sigma^2_0$
works for theoretical optimality.



\subsection{Comparison with MC-dropout}
The idea of masking nodes in DNN has been already used in various algorithms. 
Dropout \cite{srivastava2014dropout} and MC-dropout \cite{gal2016dropout}
are two representative examples, where they
randomly mask the nodes of DNN during the training phase. While Dropout abolishes the masking vectors and uses the scaled-down version of the trained weights
in the prediction phase, 
MC-dropout uses the trained weights obtained in the training phase multiplied by a random masking vectors. 
Since the masking vectors is treated as a random vectors following its posterior distribution at the prediction phase, our mBNN is similar to MC-dropout.
A key difference between the mBNN and MC-dropout, however, is that
the mBNN learns the distribution of the masking vectors from data 
via the posterior distribution but MC-dropout does not.
That is, the mBNN learns the architecture of DNN from data, which makes
the mBNN have good theoretical and empirical properties.

\section{Theoretical optimalities}
\label{sec4}
In this section, we derive the posterior concentration rates of the mBNNs for regression and classification problems, which are minimax optimal up to a logarithmic factor. 
In addition, we show that the mBNN achieves the optimal sparsity asymptotically.
We assume that $\mathcal{D}^{(n)} := \{ (\boldsymbol{X}_i , Y_i) \}_{i \in [n]}$ are independent copies following the true distribution $\mathbb{P}_0$ specified by either the model (\ref{reg}) or model (\ref{cla}).

\subsection{Posterior concentration rate for nonparametric regression}
We consider the nonparametric regression model (\ref{reg}).
We assume the true regression function $f_0$ belongs to the $\beta$-Hölder class $\mathcal{H}_d^\beta$.\footnote{Theoretical results for hierarchical composition functions are provided in Appendix \ref{App_comp}.}
Here, the $\beta$-Hölder class $\mathcal{H}_d^\beta$ is given as
\begin{equation*}
\mathcal{H}_d^\beta := \{ f : [-1,1]^d \to \mathbb{R} ; ||f||_{\mathcal{H}^\beta} < \infty \},
\end{equation*}
where $||f||_{\mathcal{H}^\beta}$ denotes the Hölder norm defined by
\begin{align*}
||f||_{\mathcal{H}^\beta} &:=  \sum_{\bm{\alpha} : |\bm{\alpha}|_1 <\beta} \left\|\partial^{\bm{\alpha}} f\right\|_{\infty} \\
&+\sum_{\bm{\alpha}:|\bm{\alpha}|_1 =\lfloor\beta\rfloor} \sup _{\underset{\bm{x}_1 \ne \bm{x}_2}{\bm{x}_1, \bm{x}_2 \in [-1,1]^d}} 
\frac{\left|\partial^{\bm{\alpha}} f(\bm{x}_1)-\partial^{\bm{\alpha}} f(\bm{x}_2)\right|}{|\bm{x}_1-\bm{x}_2|_{\infty}^{\beta-\lfloor\beta\rfloor}}.
\end{align*}
For inference, we consider the probabilistic model
\begin{equation*}
Y_i \stackrel{ind.}\sim N( f_{\bm{M}, \bm{\theta} [-F,F]}^{\operatorname{mDNN}}(\bm{X}_i), \sigma^2),  
\end{equation*}
where $f_{\bm{M}, \bm{\theta}}^{\operatorname{mDNN}}(\bm{X}_i)$ has the $(L_n, \bm{p}_n)$ architecture with $L_n$ and $\bm{p}_n$ given as
\begin{align}
    L_n :=& \left\lceil C_L \log n \right\rceil, \label{L_n} \\
	p_n :=& \left\lceil C_p \sqrt{n} \right\rceil, \nonumber \\
	\bm{p_n} :=& (d, p_n , \dots, p_n, 1)^{\top} \in \mathbb{N}^{L_n + 2} \label{p_n1}
\end{align}
for positive constants $C_L$ and $C_p$ that are defined in Lemma \ref{kohler2021rate}.
Then, the likelihood $\mathcal{L} (w | \mathcal{D}^{(n)})$ of $w := (\bm{M}, \bm{\theta}, \sigma^2)$ is expressed as
\begin{align*}
    (2 \pi \sigma^2 )^{-\frac{n}{2}}
    \exp \left(-\frac{\sum_{i=1}^n (Y_i -  f_{\bm{M}, \bm{\theta}\ [-F, F]}^{\operatorname{mDNN}}(\bm{X}_i))^2}{2 \sigma^2}\right), 
\end{align*}
and the corresponding posterior distribution is given as
\begin{align*}
    \Pi_n (w \mid \mathcal{D}^{(n)}) \propto
    \Pi_n (w) \mathcal{L}(w | \mathcal{D}^{(n)}), 
\end{align*}
where $\Pi_n$ is the prior defined on Section \ref{section_prior}.

In the following theorem, we show that
the mBNN model achieves the optimal (up to a logarithmic factor) posterior concentration rate to the true regression function. 

\begin{restatable}[Posterior Concentration of the mBNN for regression problems]{theorem}{thmone} \label{theorem1}
	Assume $f_0 \in \mathcal{H}_d^\beta$, $\beta < d$, and there exist $F>0$ and $\sigma^2_{max}>0$ such that $\|f_0\|_{\infty} \leq F$ and $\sigma^2_0 \leq \sigma^2_{max}$.
	Consider the mDNN model with the ($L_n, \bm{p}_n$) architecture, where $L_n$ and $\bm{p}_n$ are given in (\ref{L_n}) and (\ref{p_n1}).
	If we put the prior given as (\ref{prior_s}), (\ref{prior_m}) and (\ref{prior_theta}) over  $\bm{M}$, $\bm{\theta}$ and any prior on $\sigma^2$ whose density (with respect to Lebesgue measure) is positive on its support $(0,\sigma_{max}^2]$,
	the posterior distribution concentrates to the $f_0$ and $\sigma_0^2$ at the rate $\varepsilon_{n}=n^{-\beta /(2 \beta+d)} \log ^{\gamma}(n)$ for $\gamma > \frac{5}{2}$ in the sense that
	\begin{align*}
	\Pi_n \Big( (f, \sigma^2) : \  
	& || f - f_0 ||_{2, \mathrm{P}_{X}} \\ &+ |\sigma^2 - \sigma_0^2 | > M_n \varepsilon_{n}  \Bigm\vert \mathcal{D}^{(n)}\Big) \overset{\mathbb{P}_{0}^{n}}
 {\to} 0
	\end{align*}
	as $n \to \infty$ for any $M_n \to \infty$, where $\mathbb{P}_{0}^{n}$ is the probability measure of the training data $\mathcal{D}^{(n)}$. 
\end{restatable}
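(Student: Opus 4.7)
The plan is to apply the standard Ghosal--van der Vaart posterior concentration theorem. For the Gaussian regression model it suffices to verify three ingredients at rate $\varepsilon_n$: (i) a KL prior-mass bound, that the prior charges the Kullback--Leibler neighborhood of $(f_0, \sigma_0^2)$ with mass at least $e^{-c_1 n \varepsilon_n^2}$; (ii) a sieve $\mathcal{F}_n$ with log-covering number $\log N(\varepsilon_n, \mathcal{F}_n, \|\cdot\|_{\infty}) \le c_2 n \varepsilon_n^2$; and (iii) $\Pi_n(\mathcal{F}_n^c) \le e^{-(c_1+4) n \varepsilon_n^2}$. Truncation of the mDNN output to $[-F, F]$ lets me convert the resulting Hellinger contraction into simultaneous contraction of $\|f - f_0\|_{2,\mathrm{P}_X}$ and of $|\sigma^2 - \sigma_0^2|$, so it remains to verify (i)--(iii) for the mBNN prior.

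For (i), I would invoke Lemma \ref{kohler2021rate} to obtain a standard DNN $f^*$ of architecture $(L_n, \bm{p}_n)$ with $\|f^* - f_0\|_{\infty} \lesssim \varepsilon_n$, per-layer active-node count $s^*$ that is a polynomial in $n^{d/(2(2\beta+d))}$ and $\log n$, and weights bounded by a polynomial in $n$. Realising $f^*$ as an mDNN by setting the masks to indicate its active nodes, the KL-neighborhood probability factors into a mask part and a weight part. The mask part is at least $\prod_{l=1}^{L_n} \binom{p_n}{s^*}^{-1} e^{-(\lambda \log n)^5 (s^*)^2}$, and the weight part, using positivity and polynomial tails of any $\mathfrak{p} \in \mathfrak{P}$, is at least $e^{-C L_n (s^*)^2 (\log n)^2}$; the $\sigma^2$ prior contributes a positive constant near $\sigma_0^2$. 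The dominant exponent is $(\lambda \log n)^5 (s^*)^2 L_n$, which matches $n \varepsilon_n^2 = n^{d/(2\beta+d)} (\log n)^{2\gamma}$ exactly when $\gamma > 5/2$, yielding the required lower bound.

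For (ii) and (iii), I would take the sieve
\begin{equation*}
\mathcal{F}_n := \bigl\{(\bm{M}, \bm{\theta}) : \max_{l} |\bm{m}^{(l)}|_0 \le \bar{s}_n,\ |\bm{\theta}|_\infty \le B_n \bigr\}
\end{equation*}
with $\bar{s}_n$ a constant multiple of $s^*$ and $B_n = e^{n^{\eta}}$ for some small $\eta > 0$. A standard Lipschitz argument on compositions of affine maps and ReLU yields $\log N(\varepsilon_n, \mathcal{F}_n, \|\cdot\|_{\infty}) \lesssim L_n \bar{s}_n^2 \log(B_n p_n L_n / \varepsilon_n)$, which is $O(n \varepsilon_n^2)$. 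For the complement, the sparsity prior (\ref{prior_s}) contributes $L_n e^{-(\lambda \log n)^5 \bar{s}_n^2}$, and the weight tail, since $\mathfrak{p} \in \mathfrak{P}$ gives tail mass at most $B_n^{-\log B_n}$ per coordinate and the total weight count is $T = O(L_n p_n^2)$, contributes at most $T \cdot B_n^{-\log B_n}$; both can be made $\le e^{-C n \varepsilon_n^2}$ for any prescribed $C$.

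The main obstacle is the delicate exponent balancing in (i). Because the node-sparsity prior (\ref{prior_s}) is quadratic in $s$ with leading coefficient $(\lambda \log n)^5$, the approximating architecture is forced to have per-layer width $s^* \lesssim \sqrt{n^{d/(2\beta+d)}}$ up to log factors. A node-sparse network with $s$ active nodes per layer has $\asymp L s^2$ active weights, so this is exactly compatible with the minimax requirement of $\asymp n^{d/(2\beta+d)}$ effective parameters, but only because $\gamma > 5/2$ supplies the extra $(\log n)^{2\gamma-5}$ slack between $(\log n)^5$ and $(\log n)^{2\gamma}$. A secondary subtlety is why $\mathfrak{p}$ must belong to $\mathfrak{P}$: its tail $x^{-\log x}$ is essential to make $\Pi_n(\mathcal{F}_n^c)$ negligible despite $T$ being polynomial in $n$, while positivity near zero is what charges an $\varepsilon_n$-box around $f^*$'s weights at only a log-quadratic cost, echoing the heavy-tailed prior phenomenon of \cite{castillo2012needles}.
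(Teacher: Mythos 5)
Your overall architecture (KL prior mass + sieve entropy + small prior mass on the sieve complement, then Hellinger $\to$ $L_2$) is the right skeleton, and your prior-mass computation in (i) — approximating network from Lemma \ref{kohler2021rate} with weights of size up to $n$, heavy tails of $\mathfrak{p}$ paying only $e^{-(\log n)^2}$ per coordinate, and the $(\lambda\log n)^5 (s^*)^2 L_n \asymp n^{d/(2\beta+d)}(\log n)^5 \le n\varepsilon_n^2$ balancing for $\gamma>5/2$ — matches the paper's Step 1. But your handling of (ii)--(iii) has a genuine gap. You read the definition of $\mathfrak{P}$ backwards: the condition $\lim_{x\to\infty} x^{-\log x}/\mathfrak{p}(x)=0$ is a \emph{lower} bound on the tail (the density must be heavier than $x^{-\log x}$; that is why Gaussian and Laplace are excluded), so membership in $\mathfrak{P}$ gives no upper bound whatsoever on $\Pi(|\theta_i|>B_n)$, and your claim that the weight tail contributes "at most $T\cdot B_n^{-\log B_n}$" does not follow. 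For a sufficiently heavy-tailed $\mathfrak{p}\in\mathfrak{P}$ the complement of your sieve $\{|\bm{\theta}|_\infty\le B_n\}$ need not have exponentially small prior mass. Independently, with $B_n=e^{n^\eta}$ your Lipschitz covering bound gives $\log N \gtrsim L_n\bar{s}_n^2\log B_n \asymp n^{d/(2\beta+d)+\eta}(\log n)^{O(1)}$, which is polynomially larger than $n\varepsilon_n^2$, so the entropy condition fails for any $\eta>0$. The paper sidesteps both problems at once by \emph{not} bounding $|\bm{\theta}|_\infty$ in the sieve at all: the sieve is defined only by node sparsity, and the entropy of the truncated ReLU class $\mathcal{F}^{\operatorname{DNN}}(L_n,\bm{s}_n,F)$ is controlled via its VC dimension $O(L_n S_n\log S_n)$ (Theorem 9.4 of Gy\"orfi et al.\ plus Theorem 6 of Harvey et al.), which is independent of weight magnitudes. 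This is precisely what makes arbitrary heavy-tailed priors admissible.

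Two further points. First, taking $\bar{s}_n$ a \emph{constant} multiple of $s^*$ is too small: then $\Pi(\max_l|\bm{m}^{(l)}|_0>\bar{s}_n)\approx L_n e^{-(\lambda\log n)^5\bar{s}_n^2}$ with exponent $\asymp n^{d/(2\beta+d)}(\log n)^4$, which is dominated by the prior-mass exponent $\asymp n^{d/(2\beta+d)}(\log n)^5$ from (i), so the ratio $\Pi(\mathcal{F}_n^c)/\Pi(B_{KL})$ blows up; the paper inflates $s_n$ by an extra logarithmic factor ($s_n^2\asymp n^{d/(2\beta+d)}(\log n)^{3\tau}$ versus $v_n^2\asymp n^{d/(2\beta+d)}(\log n)^{-1}$) exactly so that $(\lambda\log n)^5 s_n^2$ beats both that exponent and $2n\varepsilon_n^2$. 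Second, the conversion to $\|f-f_0\|_{2,\mathrm{P}_X}$ is not automatic from truncation: the non-i.i.d.\ testing framework with fixed design delivers contraction in the \emph{empirical} average Hellinger distance, hence in $\|\cdot\|_{2,n}$, and the paper needs a separate uniform law of large numbers over the sieve (its Step 2, via Theorem 19.3 of Gy\"orfi et al.) to pass to the population norm; your proposal omits this step.
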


The convergence rate $n^{-\beta /(2 \beta+d)}$ is known to be minimax lower bound when estimating the $\beta$-Hölder smooth function \cite{tsybakov2009introduction}. Our concentration rate is near optimal up to a logarithmic factor and adaptive to the smoothness of the true model.  

\paragraph{Comparison with other works}
Similar convergence rates are derived in the non-bayesian theoretical deep learning literature \cite{schmidt2020nonparametric, kohler2021rate}.
However, the architectures considered in \citet{schmidt2020nonparametric} and \citet{kohler2021rate}
depend on the smoothness $\beta$ of the true regression function, which is rarely known in practice. 
In contrast, architecture and prior of the mBNN do not depend on the smoothness $\beta$, which makes the mBNN very attractive.

\citet{polson2018posterior}, \citet{cherief2020convergence} and \citet{bai2020efficient} also derive
near optimal concentration rates for edge-sparse BNNs.
Posterior computations for their smoothness-adaptive models, however, are almost impossible and inferential cost of edge-sparse BNNs could be large.
\citet{jantre2021layer} provides the posterior concentration rate for node-sparse BNN, but 
their result does not guarantee minimax optimality and is not adaptive to the smoothness of the true model.
Theorem \ref{theorem1} is the first result for theoretical optimality of the Bayesian analysis for node-sparse DNNs.


\subsection{Posterior concentration rate for binary classification}
Theoretical results for regression problem can be extended to classification problem. 
We consider the classification problem (\ref{cla}) with $K=2$, and denote $f_0 := \left(\bm{f}_0\right)_2 - \left(\bm{f}_0\right)_1.$
We assume that $f_0$ belongs to the $\beta$-Hölder class $\mathcal{H}_d^\beta$.
For inference, we consider the probabilistic model
\begin{align*}
Y_i \stackrel{ind.}\sim \operatorname{Bernoulli}(\phi \circ f_{\bm{M}, \bm{\theta} [-F, F]}^{\operatorname{mDNN}}(\bm{X}_i)),  
\end{align*} 
where $\phi$ is the sigmoid function and $f_{\bm{M}, \bm{\theta}}^{\operatorname{mDNN}}$ has the $(L_n, \bm{p}_n)$ architecture with $L_n$ and $\bm{p}_n$ given as (\ref{L_n}) and (\ref{p_n1}).
Then, the likelihood $\mathcal{L}(w | \mathcal{D}^{(n)})$ of $w := (\bm{M}, \bm{\theta})$ is expressed as
\begin{align*}
     \prod_{i=1}^n (\phi \circ f_{\bm{M}, \bm{\theta} [-F, F]}^{\operatorname{mDNN}}(\bm{X}_i))^{Y_i}
     (1-\phi \circ f_{\bm{M}, \bm{\theta} [-F, F]}^{\operatorname{mDNN}}(\bm{X}_i))^{1-Y_i}.
\end{align*}
In the following theorem, we prove that
the mBNN model achieves the optimal (up to a logarithmic factor) posterior concentration rate to the true conditional class probability adaptive to the smoothness of the true model.

\begin{restatable}[Posterior Concentration of the mBNN for classification problems]{theorem}{thm2} \label{theorem2}
	Assume $f_0 \in \mathcal{H}_d^\beta$, $\beta < d$, and there exists
    $F>0$ such that $\|f_0\|_{\infty} \leq F$.
	Consider the mDNN model with the ($L_n, \bm{p}_n$) architecture where $L_n$ and $\bm{p}_n$ are given in (\ref{L_n}) and (\ref{p_n1}).
	If we put the prior given as (\ref{prior_s}), (\ref{prior_m}) and (\ref{prior_theta}) over $\bm{M}$ and $\bm{\theta}$,
	the posterior distribution concentrates to the true conditional class probability at the rate $\varepsilon_{n}=n^{-\beta /(2 \beta+d)} \log ^{\gamma}(n)$ for $\gamma > \frac{5}{2}$ in the sense that
	\begin{align*}
	\Pi_n \Big( f : 
	||\phi \circ f - \phi \circ f_0 ||_{2, \mathrm{P}_{X}} > M_n \varepsilon_{n}  \Bigm\vert \mathcal{D}^{(n)}\Big) \overset{\mathbb{P}_{0}^{n}}{\to} 0
	\end{align*}
	as $n \to \infty$ for any $M_n \to \infty$, where $\mathbb{P}_{0}^{n}$ is the probability measure of the  training data $\mathcal{D}^{(n)}$.     
\end{restatable}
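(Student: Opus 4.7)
The plan is to follow the same Ghosal--Ghosal--van~der~Vaart style posterior contraction framework that underlies Theorem \ref{theorem1}, modifying only the likelihood-dependent pieces. The three classical conditions to verify are (i) a prior-mass (KL-ball) condition at rate $\varepsilon_n$, (ii) a sieve $\mathcal{F}_n$ with $\log$-covering number of order $n\varepsilon_n^2$ and prior complement mass at most $\exp(-C n\varepsilon_n^2)$, and (iii) existence of exponentially consistent tests on $\mathcal{F}_n$. Crucially, the sieve-and-entropy side depends only on the mDNN function class, so the construction from Theorem \ref{theorem1}, driven by the super-exponential tail of (\ref{prior_s}) and the polynomial tail of $\mathfrak{p}\in\mathfrak{P}$, transfers verbatim. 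Only the likelihood-dependent steps need genuine adjustment.

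For the prior-mass step, I reuse the same masked-DNN approximation result for $\beta$-H\"older functions (Kohler--Langer style, as employed for regression) to exhibit parameters $(\bm{M}^*,\bm{\theta}^*)$ in the $(L_n,\bm{p}_n)$ architecture with $\|f_{\bm{M}^*,\bm{\theta}^*}^{\operatorname{mDNN}}-f_0\|_\infty\lesssim n^{-\beta/(2\beta+d)}\log^{\gamma'} n$. The key translation from sup-norm to the Bernoulli model is that the sigmoid $\phi$ has derivative $\phi(1-\phi)\le 1/4$, and after truncation of logits to $[-F,F]$ the predicted probabilities lie in $[\phi(-F),\phi(F)]$, a compact subinterval of $(0,1)$. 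Consequently both the KL divergence and the centered second moment of the log-likelihood ratio between $\mathrm{Bernoulli}(\phi\circ f)$ and $\mathrm{Bernoulli}(\phi\circ f_0)$ are bounded, uniformly over such truncated $f,f_0$, by a constant multiple of $\|f-f_0\|_{2,\mathrm{P}_X}^2\le \|f-f_0\|_\infty^2$. Then the same small-ball computation used for Theorem \ref{theorem1}, combining the prior on $s^{(l)}$ with the polynomial-tail lower bound on $\mathfrak{p}$ around $\bm{\theta}^*$, yields $\Pi_n(K_n(\varepsilon_n))\gtrsim \exp(-C n\varepsilon_n^2)$.

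For the sieve and tests, I take $\mathcal{F}_n$ to be the set of mDNNs with sparsity levels $s^{(l)}$ bounded by a constant multiple of $n^{d/(2\beta+d)}\log^{\gamma''}n$ and weights bounded in absolute value by $n^{C}$; the prior mass outside is controlled by (\ref{prior_s}) and the $\mathfrak{P}$-tail in exactly the same way as for regression. The covering-number bound in sup-norm for such mDNNs carries over because masking only reduces the effective width. Exponential tests are built using the Hellinger distance between $\mathrm{Bernoulli}(\phi\circ f)$ and $\mathrm{Bernoulli}(\phi\circ f_0)$, which, again thanks to the $[-F,F]$ truncation, is equivalent up to multiplicative constants to $\|\phi\circ f-\phi\circ f_0\|_{2,\mathrm{P}_X}$. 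Standard Le~Cam/Birg\'e tests then exist at exponential level $\exp(-cn\varepsilon_n^2)$ from the covering number.

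The step I expect to be the main obstacle, or at least the one requiring most care, is the uniform comparability of the various distances: Hellinger $\asymp$ $L_2$ of sigmoids $\asymp$ $L_2$ of truncated logits $\lesssim$ $L_\infty$ of truncated logits. Without the $[-F,F]$ truncation these equivalences degenerate as $\phi$ saturates, and the KL and its variance could blow up, breaking both the prior-mass lower bound and the test construction. Once these equivalences are set up and the common constants made explicit, the Ghosal--van~der~Vaart theorem delivers posterior contraction in Hellinger at rate $\varepsilon_n$, which by the equivalence immediately translates into the stated contraction of $\phi\circ f$ toward $\phi\circ f_0$ in $\|\cdot\|_{2,\mathrm{P}_X}$.
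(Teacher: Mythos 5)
Your overall architecture matches the paper's: the paper also reduces Theorem \ref{theorem2} to the Ghosal--van der Vaart machinery, uses the truncation of the logits to $[-F,F]$ to keep the Bernoulli success probabilities in the compact interval $[\phi(-F),\phi(F)]$, and thereby gets the two-sided equivalence $h_n \asymp \|\phi\circ f_1-\phi\circ f_2\|_{2}$ together with KL and variance bounds of order $\|f_1-f_2\|_\infty^2$ (via a Lemma 3.2-of-van der Vaart-type bound); the prior-mass lower bound is obtained exactly as you describe from the Kohler--Langer approximant and the polynomial-tail density, and the sieve complement is handled by the prior (\ref{prior_s}) on the sparsities. (One cosmetic difference: the paper conditions on the design, proves contraction in the empirical norm $\|\cdot\|_{2,n}$, and then adds a separate empirical-process step, Theorem 19.3 of Gy\"orfi et al., to pass to $\|\cdot\|_{2,\mathrm{P}_X}$; your implicit use of the joint i.i.d.\ formulation would fold that step into the Hellinger metric on the joint law, which is also legitimate.)

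There are, however, two concrete problems in your sieve construction. First, you put the weight bound $|\theta_i|\le n^{C}$ into $\mathcal{F}_n$ and claim the prior mass outside is controlled by ``the $\mathfrak{P}$-tail.'' But $\mathfrak{P}$ only imposes a \emph{lower} bound on the tails (heavier than $x^{-\log x}$), and its canonical members (Cauchy, Student-$t$) satisfy $\mathfrak{p}(|\theta_i|>n^{C})\asymp n^{-cC}$; a union bound over the $T_n\asymp n\log n$ coordinates then gives only a polynomially small complement mass, which is enormously larger than the required $o\bigl(e^{-cn\varepsilon_n^2}\bigr)=o\bigl(e^{-cn^{d/(2\beta+d)}(\log n)^{2\gamma}}\bigr)$. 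The remaining-mass condition therefore fails for that part of the complement. The paper avoids this entirely by \emph{not} truncating the weights in the sieve: the metric entropy of the truncated-output class $\mathcal{F}^{\operatorname{mDNN}}(L_n,\bm{p}_n,F,s_n)$ is controlled through its VC dimension ($V^{+}\lesssim L_n S_n\log S_n$ by Harvey et al.\ together with Theorem 9.4 of Gy\"orfi et al.), which holds uniformly over all weight magnitudes. Second, your per-layer sparsity bound $s^{(l)}\lesssim n^{d/(2\beta+d)}\log^{\gamma''}n$ is off by a square root: with that width the number of active parameters is $\asymp L_n (s^{(l)})^2 \asymp n^{2d/(2\beta+d)}$ up to logs, so the log-covering number exceeds $n\varepsilon_n^2\asymp n^{d/(2\beta+d)}(\log n)^{2\gamma}$ and the entropy condition fails. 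The correct choice, as in the paper, is $s_n\asymp\bigl(n^{d/(2\beta+d)}(\log n)^{3\tau}\bigr)^{1/2}$, so that $L_n S_n\log S_n\asymp n^{d/(2\beta+d)}(\log n)^{4+3\tau}\le n\varepsilon_n^2$, and this same choice is what makes the prior complement mass $\Pi_n(|\bm{m}^{(l)}|_0>s_n)$ small enough relative to $\Pi_n(B_n^{*})$. With these two corrections the rest of your argument goes through as planned.
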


\subsection{Guaranteed sparsity of the mBNN}

Not only the fast concentration rate, the mBNN achieves the optimal sparsity too.
Theorem \ref{cor1}, which is a by-product of the proofs for Theorems \ref{theorem1} and \ref{theorem2},
gives the level of sparsity of the mBNN.

\begin{theorem}[Guaranteed sparsity of the mBNN] \label{cor1}
Under the assumptions in Theorem \ref{theorem1} or Theorem \ref{theorem2}, we have
\begin{align*}
	\Pi_n \Big( f : f= f_{\bm{M}, \bm{\theta} [-F,F]}^{\operatorname{mDNN}}, \ \max_{l \in [L]} |\bm{m}^{(l)}|_0 > \mathfrak{s}_n  \Bigm\vert \mathcal{D}^{(n)}\Big) \overset{\mathbb{P}_{0}^{n}}
 {\to} 0
\end{align*}
	as $n \to \infty$, where $\mathfrak{s}_n$ is defined as
\begin{align*}
\mathfrak{s}_n := \left\lceil C_p \left(n^\frac{d}{2\beta + d} (\log n) \right)^{1/2} \right\rceil
\end{align*}
and $\mathbb{P}_{0}^{n}$ is the probability measure of the training data $\mathcal{D}^{(n)}$. 
\end{theorem}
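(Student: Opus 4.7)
My plan is to read Theorem~\ref{cor1} as a direct by-product of the sieve construction that already underlies the proofs of Theorems~\ref{theorem1} and~\ref{theorem2}. Any Ghosal--Ghosh--van der Vaart style posterior concentration argument identifies a sieve $\mathcal{F}_n$ in the parameter space satisfying (a) a KL prior mass condition around the truth of order $e^{-c_1 n \varepsilon_n^2}$, (b) an entropy bound $\log N(\varepsilon_n, \mathcal{F}_n, \|\cdot\|) \leq c_2 n \varepsilon_n^2$, and (c) a sieve complement mass bound $\Pi_n(\mathcal{F}_n^c) \leq e^{-(c_1+4) n \varepsilon_n^2}$. A standard by-product of the proof of concentration is that $\Pi_n(\mathcal{F}_n^c \mid \mathcal{D}^{(n)}) \overset{\mathbb{P}_0^n}{\to} 0$, so the theorem reduces to the assertion that the sieve employed in Theorems~\ref{theorem1}/\ref{theorem2} can be taken with the width cap $\max_{l \in [L_n]} |\bm{m}^{(l)}|_0 \leq \mathfrak{s}_n$ already baked into its definition.

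Accordingly I would take
\begin{equation*}
\mathcal{F}_n := \bigl\{ (\bm{M}, \bm{\theta}) : \max_{l \in [L_n]} |\bm{m}^{(l)}|_0 \leq \mathfrak{s}_n, \ |\bm{\theta}|_\infty \leq B_n \bigr\}
\end{equation*}
for a polynomially growing $B_n$. Conditions (a) and (b) are inherited from the proofs of Theorems~\ref{theorem1} and~\ref{theorem2}: the Kohler--Langer approximation used to secure KL prior mass produces an mDNN whose layer widths are already $O(\mathfrak{s}_n)$, and truncating the weights to $B_n$ makes the class Lipschitz and yields a polynomial-in-$n$ covering number. Condition (c) is the main new ingredient, and I would verify it by a union bound over layers using the prior in (\ref{prior_s}):
\begin{equation*}
\Pi_n\bigl(\exists\, l : s^{(l)} > \mathfrak{s}_n\bigr) \;\leq\; L_n \sum_{s > \mathfrak{s}_n} e^{-(\lambda \log n)^5 s^2} \;\lesssim\; L_n \, e^{-(\lambda \log n)^5 \mathfrak{s}_n^2},
\end{equation*}
with the large-weight contribution $\Pi_n(|\bm{\theta}|_\infty > B_n)$ handled separately by the polynomial tail of $\mathfrak{p} \in \mathfrak{P}$, which is subexponentially small in $n$ once $B_n$ is a small power of $n$.

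The main obstacle is the logarithmic-exponent bookkeeping needed to fit (c) together with the rate $\varepsilon_n$. Since $\mathfrak{s}_n^2 \asymp n^{d/(2\beta+d)} \log n$ and $n \varepsilon_n^2 \asymp n^{d/(2\beta+d)} (\log n)^{2\gamma}$, the required inequality $(\lambda \log n)^5 \mathfrak{s}_n^2 \geq (c_1+4)\, n \varepsilon_n^2$ reduces to $\lambda^5 (\log n)^{6-2\gamma} \geq c_1+4$; with $\gamma > 5/2$ taken in the near-optimal regime $\gamma < 3$ this holds with room to spare for any $\lambda > 0$, whereas a larger $\gamma$ would simply force a modestly larger leading constant in $\mathfrak{s}_n$. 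One also has to confirm that the constants $\lambda$, $C_p$ and $C_L$ chosen in Theorems~\ref{theorem1} and~\ref{theorem2} are compatible with this sieve choice, but since the approximation and entropy calculations there already live within this $\mathcal{F}_n$, no new restriction arises. Once (c) is in hand, the standard sieve-complement post-processing of the posterior concentration theorem delivers Theorem~\ref{cor1} without further work.
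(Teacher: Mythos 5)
Your core argument is the paper's own: Theorem \ref{cor1} is proved there simply by citing Step 3 of the proof of Theorem \ref{theorem1}, which bounds the prior mass of $\{\max_{l}|\bm{m}^{(l)}|_0 > s_n\}$ by a union bound over the $L_n$ layers using the tail of the prior (\ref{prior_s}), compares this to the lower bound on the prior mass of the Kullback--Leibler neighborhood of the truth, and invokes the standard sieve-complement lemma (Lemma 1 of Ghosal and van der Vaart) to convert the prior-mass ratio into posterior negligibility. Your exponent bookkeeping comparing $(\lambda\log n)^5\mathfrak{s}_n^2$ with $n\varepsilon_n^2$ and working in the near-optimal regime of $\gamma$ matches the paper's device of taking $\tau=\gamma-\tfrac52\in(0,1)$ (with $\tau=1/3$ giving exactly the stated $\mathfrak{s}_n$).

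One claim in your plan is wrong, although it turns out to be inessential for this particular theorem: you add a weight truncation $|\bm{\theta}|_\infty\le B_n$ to the sieve and assert that $\Pi_n(|\bm{\theta}|_\infty>B_n)$ is subexponentially small for polynomially growing $B_n$. Under the heavy-tailed prior $\mathfrak{p}\in\mathfrak{P}$ (e.g.\ Cauchy), $\Pi_n(|\theta_i|>n^a)\asymp n^{-a}$, and with $T_n\asymp n\log n$ parameters the union bound yields only a polynomially small quantity, nowhere near the $e^{-c\,n\varepsilon_n^2}=e^{-c\,n^{d/(2\beta+d)}(\log n)^{2\gamma}}$ required by the sieve-complement condition. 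This is precisely why the paper's sieve $\mathcal{F}_n'$ restricts only the mask sparsity and leaves the weights unbounded: the entropy of $\mathcal{F}^{\operatorname{mDNN}}(L_n,\bm{p}_n,F,s_n)$ is controlled through the VC dimension of the output-truncated DNN class, which requires no bound on $|\bm{\theta}|_\infty$. Since Theorem \ref{cor1} concerns only the mask sparsity, dropping the weight truncation from your sieve leaves the remainder of your argument intact and equivalent to the paper's.
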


\citet{yarotsky2017error} proves that the lower bound of the sparsity 
of DNNs to approximate functions in $\mathcal{H}_d^\beta$ 
is equal to $\mathfrak{s}_n$ up to a logarithmic factor.
That is, the mBNN automatically learns the optimal sparsity from data.

\section{Posterior inference}
\label{sec5}


Let $\mathcal{D}^{(n)} := \{ (\boldsymbol{X}_i , Y_i) \}_{i \in [n]}$ be training data. 
Let $w$ denote all of the parameters in the mBNN, that is, 
$w := (\bm{M}, \bm{\theta}, \sigma^2)$ for the regression problem (\ref{reg}) and $w := (\bm{M}, \bm{\theta})$ for the classification problem (\ref{cla}). 
For a new test example $\bm{x} \in \mathcal{X}$, the prediction with the mBNN is done by the predictive distribution:
\begin{equation*}
p(y \mid \boldsymbol{x}, \mathcal{D}^{(n)})=\int_{w} p(y \mid \bm{x}, w) \Pi_n (w \mid \mathcal{D}^{(n)}) dw,
\end{equation*}
where $\Pi_n (w \mid \mathcal{D}^{(n)})$ is the posterior posterior distribution of $w$.
When the integral is difficult to be evaluated, it is common to approximate it by the Monte Carlo method
\begin{equation*}
p(y \mid \bm{x}, \mathcal{D}^{(n)}) \approx \frac{1}{T} \sum_{t=1}^T p(y \mid \bm{x}, w^{(t)}),
\end{equation*}
where $w^{(t)} \sim \Pi_n(w \mid \mathcal{D}^{(n)})$.
In this section, we develop a MCMC algorithm to sample $w$ efficiently from
$\Pi_n(w \mid \mathcal{D}^{(n)})$.

\subsection{MCMC algorithm} \label{sec_MCMC}

The proposed MCMC algorithm samples $\bm{\theta}$ (and $\sigma^2$) given $\bm{M}$ and $\mathcal{D}^{(n)}$ and then
samples $\bm{M}$ given $\bm{\theta}$ (and $\sigma^2$) and $\mathcal{D}^{(n)}$, and iterates these two samplings until convergence. 

There are various efficient sampling algorithms for $\bm{\theta}$ (and $\sigma^2$) given $\bm{M}$ and $\mathcal{D}^{(n)}$
such as Hamiltonian Monte Carlo (HMC) \cite{neal2011mcmc}, Stochastic Gradient Langevin Dynamics (SGLD) \cite{welling2011bayesian} and Stochastic Gradient HMC (SGHMC) \cite{chen2014stochastic}. 
In practice, we select a sampling algorithm among those 
depending on the sizes of data and model.

For generating $\bm{M}$ from its conditional posterior,
we consider the Metropolis-Hastings (MH) algorithm. 
The hardest part is to design a good proposal distribution since the dimension of $\bm{M}$ is quite large and all entries are binary. 
In the next subsection, we propose
an efficient proposal distribution for $\bm{M}.$

\subsection{Proposal  for the MH algorithm} \label{sec5.2}

Essentially, sampling $\bm{M}$ is equivalent to sampling a large dimensional binary vector. 
A well known strategy for sampling a large dimensional binary vector
is to use the MH algorithm with
the locally informed proposal \cite{umrigar1993accelerated, zanella2020informed} 
given as
\begin{align} 
    q\left(\bm{M}^{\star} \mid \bm{M}\right) 
    \propto e^{\frac{1}{2}\left( l(\bm{M}^{\star})-l(\bm{M})\right)} \mathbb{I}\left(\bm{M}^{\star} \in H(\bm{M})\right), \label{hamming}
\end{align}
where $l(\bm{M})$ is the log-posterior of $\bm{M}$ and $H(\bm{M})$ is the Hamming ball of a certain size around $\bm{M}$.
The key point of (\ref{hamming}) is to give more probability to $\bm{M}^{\star}$ whose posterior probability is high. 
While powerful, this locally informed proposal requires to compute $l(\bm{M}^{\star})$ for every $\bm{M}^{\star} \in H(\bm{M}),$ which is time consuming.
To resolve this problem, we propose a proposal distribution which only uses the information of the current $\bm{M}$.

First, we select either {\it birth} or {\it death}, where
{\it birth} makes some inactive nodes become active and 
{\it death} makes some active nodes become inactive.
When {\it death} is selected, motivated by the pruning algorithms \cite{lee2018snip, tanaka2020pruning}, our strategy is to prune less sensitive nodes. 
That is, we delete nodes which do not affect much to the current DNN model when their values are changed.
On the other hand, when {\it birth} is selected, we choose some of inactive nodes with equal probabilities
and make them active. 
We use equal probabilities because the posterior of the edges connected to the inactive nodes
is the same as the prior and thus there is no reason to prefer certain inactive nodes more. Moreover, the proposal with equal probabilities for {\it birth} is helpful for 
increasing the acceptance rate of {\it death} to result in fast mixing.

\begin{algorithm}[t] 
    \small
	\caption{The algorithm of the proposed MH algorithm} \label{algMH}
	\begin{algorithmic}[1]
		\STATE Sample $u \sim \operatorname{Bernoulli}(0.5)$,
  $N \sim \operatorname{Uniform}([N_{\operatorname{max}}])$.
		\STATE Calculate the selection probability $\bm{Q}_u$ using \\
  (\ref{birth}) or (\ref{death}).
		\STATE Sample $N$ many nodes $\{i_1 , \dots, i_N\}$ by
		\begin{equation*} 
		    \{ i_1 , \dots, i_N \} \sim \operatorname{Multi}_{\left( N, \bm{Q}_u \right)}. \label{our_proposal2}
		\end{equation*}		
		\STATE $\bm{M}^{\star} = \operatorname{flip}({\bm{M}^{\text{curr}}, \{i_1 , \dots, i_N\}}),$ where $\bm{M}^{\text{curr}}$ is the current masking vectors.
		\STATE Accept or reject $\bm{M}^{\star}$ with the acceptance probability (\ref{eq:accept}).
	\end{algorithmic}
\end{algorithm}

To be more specific, we first select the move $u \in \{0,1\}$ with probability 1/2, 
where $u=0$ and $u=1$ indicates {\it birth} and {\it death}, respectively.
In addition, we select an inteager $N$ from $[N_{\max}]$ uniformly, where $N_{\max}$ is a prespecified positive integer. Then, we select randomly $N$ nodes among the nodes whose masking values are $u$ 
following $\operatorname{Multi}_{\left( N, \bm{Q}_u \right)}$,  
where 
\begin{align} 
    \bm{Q}_0 \propto& \ \mathbb{I}(\bm{m}_j^{(l)} = 0), &&\text{({\it birth})} \label{birth}\\
    \bm{Q}_1 \propto& \ \mathbb{I}(\bm{m}_j^{(l)} = 1) \exp\left( -|\nabla l(\bm{M})|/2 \right). &&\text{({\it death})}  \label{death}   
\end{align} 
Even though $l(\bm{M})$ is defined only on binary vectors,
the gradient $\nabla l(\bm{M}) =\partial l(\bm{M})/\partial \bm{M}$
of $l(\bm{M})$ can be defined by extending the domain of $l(\bm{M})$ appropriately.
Finally, we flip the masking values of the selected nodes to have a new proposal $\bm{M}^{*}.$
To sum up, the proposal distribution first selects a set of nodes $\{ i_1 , \dots, i_N \}$ following $\operatorname{Multi}_{\left( N, \bm{Q}_u \right)}$ and changes their mask values to $(1-\bm{M}_{i_1} , \dots, 1-\bm{M}_{i_N})$. 
Then, we accept the new proposal $\bm{M}^{*}$ with probability
\begin{align}
\label{eq:accept}
\min \left( 1 , \frac{\Pi_n(\bm{M}^{\star}\mid X^{(n)}, Y^{(n)} , \bm{\theta})}{\Pi_n( \bm{M} \mid X^{(n)}, Y^{(n)}, \bm{\theta})}
\frac{q\left(\bm{M} \mid \bm{M}^{\star}\right)}{q\left(\bm{M}^{\star} \mid \bm{M}\right)}
 \right),
\end{align}
where
\begin{align*}
\frac{q\left(\bm{M} \mid \bm{M}^{\star}\right)}{q\left(\bm{M}^{\star} \mid \bm{M}\right)}
= \frac{\operatorname{Multi}_{( N, \bm{Q}^{\star}_{1-u} )}  (\{i_1 , \dots, i_N\})}{\operatorname{Multi}_{( N, \bm{Q}_u )}  (\{i_1 , \dots, i_N\})}
\end{align*}
and $\bm{Q}^{\star}_{1-u}$ is the selection probability vector defined by (\ref{birth}) or (\ref{death})
with the mask vectors $\bm{M}^{\star}.$
Note that some inactive nodes become active when $u=0$ (i.e. {\it birth} of nodes) and some active nodes become inactive when $u=1$ (i.e. {\it death} of nodes). 
The proposed MH algorithm is summarized in Algorithm \ref{algMH}. 

One may consider a linear approximation of $l$ in (\ref{hamming}) using the gradient information at $\bm{M}$, as is suggested by \citet{grathwohl2021oops} and \citet{zhang2022langevin}.
However, we found that the linear approximation
is not accurate for the mBNN, which is partly  because the corresponding DNN is not locally smooth enough.
In section \ref{sec6_5}, we provide the results of the experiment for comparing several proposal distributions
which support the choice of (\ref{birth}) and (\ref{death}).

\begin{figure}[t!]
\centering
\subfigure[BNN]{\includegraphics[width=0.45\linewidth]{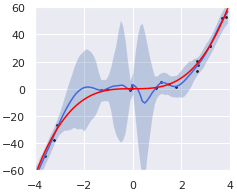}}
\hfill
\subfigure[mBNN]{\includegraphics[width=0.45\linewidth]{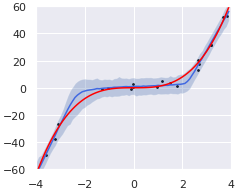}}
\caption{\textbf{Simulated data.}
Predictive distributions of BNN and the mBNN on simulated data.
The true polynomial function and the noisy observations are plotted by the red line and black dots, respectively. For each predictive distribution,
the mean  is drawn by the blue line and
the 95\% predictive interval is shown as the shaded areas.
} \label{poly_simul}
 \vskip -0.2in
\end{figure}

\section{Experiment} \label{sec6}

In this section, we perform experiments to empirically justify the usefulness of the mBNN. 
In Section \ref{sec6_1} and \ref{sec6_2}, we conduct an experiment to demonstrate the necessity of masking variables using simulation and real datasets. 
In Section \ref{sec6_3}, we apply the mBNN to a Bayesian structural time series model
to illustrate its practical usefulness.
In Section \ref{sec6_4}, we extend the mBNN to CNN and experimentally show it 
is also useful for compressing large complex DNNs.
In Section \ref{sec6_5}, we investigate the efficiency of the proposal distribution (\ref{birth}) and (\ref{death}) in the MH algorithm.
All the experimental details as well as the results of additional 
numerical experiments are given in Appendix \ref{App_setting} and \ref{App_additional}.
The code is available at \href{https://github.com/ggong369/mBNN}{https://github.com/ggong369/mBNN}.

\subsection{Simulation} \label{sec6_1}
We obtain the predictive distribution when the true regression function is the noisy polynomial regression problem considered in \citet{hernandez2015probabilistic}.
Inputs $x_i$ are sampled from $x_i \sim \operatorname{Uniform}(-4,4)$ and the corresponding outputs $y_i$ are obtained by $y_i = x_i^3 + \epsilon_i$, where $\epsilon_i \sim N(0,9)$.
We generate 20 training examples and compare the predictive distribution of the mBNN with that of BNN by generating 1000 MCMC samples from each model.
For each model, the two hidden layer MLP with the layer sizes (1000,1000) is used.

The mean and 95\% predictive intervals of the predictive distributions of BNN and the mBNN
are presented in Figure \ref{poly_simul}. 
The figure illustrates that BNN quantifies uncertainty on a data sparse region overly to have a too wide predictive interval. 
Note that the true regression function is assumed to smooth and hence it is possible to transfer information on data dense regions to data sparse regions. 
Thus, proper uncertainty quantification even on data sparse regions could be possible. 
The coverage probability of the predictive interval of the mBNN
is 95.3\%, which is obtained by
generating additional 1000 test sample, while that of BNN is 98.6\%.
The results indicate that deleting unnecessary nodes is important not only for fast convergence rates but also proper uncertainty quantification.

\begin{table}[t!] 
  \caption{\textbf{Real dataset.}
  Performance comparison of BNN, NS-VI and the mBNN on UCI datasets.
  The boldface numbers are the best one among the three methods.}
  \begin{adjustbox}{center,max width=\linewidth}
  \setlength{\aboverulesep}{0.2pt}
  \setlength{\belowrulesep}{0.2pt}
  \scalebox{0.75}{\small
    \begin{tabular}{c|c|cccc}
      \hline
       \textbf{Dataset} & \textbf{Method} & \textbf{Coverage} & \textbf{RMSE} & \textbf{NLL} & \textbf{CRPS} \\
    \hline
    \multirow{3}{*}{Boston} 
    &  BNN   & 0.912(0.006)  & 3.411(0.145) & 2.726(0.087) & 1.738(0.052)   \\
    &  NS-VI & 0.746(0.010)  & 3.079(0.179) & 4.242(0.568) & 1.661(0.069)   \\
      &  mBNN & \textbf{0.933}(0.007)  & \textbf{2.902}(0.143) & \textbf{2.472}(0.085) &  \textbf{1.462}(0.055)  \\
    \hline
      \multirow{3}{*}{Concrete}   
      & BNN & 0.908(0.008) & 5.080(0.178) & 3.091(0.055) & 2.658(0.072)  \\
    &  NS-VI & 0.568(0.011) & 5.046(0.149) & 9.713(0.686) & 2.965(0.088)    \\
        & mBNN	& \textbf{0.912}(0.009) & \textbf{4.913}(0.180) & \textbf{3.027}(0.046) & \textbf{2.628}(0.087)  \\
    \hline
     \multirow{3}{*}{Energy}
     & BNN	& \textbf{0.945}(0.005) & 0.591(0.017) & 0.902(0.025) & 0.322(0.007)  \\
    &  NS-VI  & 0.913(0.006)  & 1.322(0.117) & 1.792(0.121) & 0.720(0.055)  \\
      & mBNN	& \textbf{0.945}(0.004) & \textbf{0.474}(0.015) & \textbf{0.670}(0.034) & \textbf{0.256}(0.006)  \\
     \hline
     \multirow{3}{*}{Yacht}
     & BNN & 0.977(0.006) & 0.675(0.048) & 1.011(0.056) & 0.332(0.013) \\
    &  NS-VI & 0.932(0.011)  & 1.842(0.096) & 1.915(0.063) & 0.969(0.042)   \\
      & mBNN & \textbf{0.953}(0.008) & \textbf{0.664}(0.044) & \textbf{0.932}(0.062) & \textbf{0.318}(0.015)  \\
    \hline
    \end{tabular}
    }
  \end{adjustbox}
 \vskip -0.2in \label{table1}
\end{table}

\subsection{Real dataset}\label{sec6_2}

We evaluate BNN, node-sparse VI (NS-VI) \cite{louizos2017bayesian} and the mBNN on four UCI regression datasets (\textit{Boston, Concrete, Energy, Yacht}). 
For each dataset, we construct 20 random 90-to-10 train-test splits to provide the standard errors.
For each method, the two hidden layer MLP with the layer sizes (1000,1000) is used. 
We select 20 models from several thousands MCMC samples 
and compare the predictive distributions obtained by the selected 20 models
in terms of generalization and uncertainty quantification.
 
In Table \ref{table1}, we report the means and standard errors of the performance measures 
on the 20 repeated experiments. For the performance measures, 
the coverage probability of the 95\% predictive interval (Coverage),
the root mean square error (RMSE)
of the Bayes estimator, the negative log-likelihood (NLL)  
and continuous ranked probability score (CRPS) \cite{gneiting2007strictly}
on test data are considered.


It is obvious that the mBNN outperform the other two competitors with respect to all of the four measures. That is, the mBNN is good at not only estimating the regression function and but also
uncertainty quantification. It is noticeable that NS-VI performs too badly, which suggests that
full Bayesian analysis of DNN is must.
More detailed results such as the sparsity and the sensitivity to the hyper-parameter selection are provided in Appendix \ref{App_additional}.

\begin{figure}[t]
\centering
\subfigure[Linear ; (RMSE, NLL) = (1.34, 2.36)]{\includegraphics[width=0.8\linewidth]{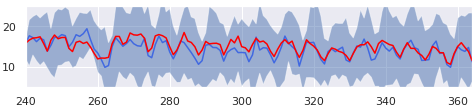}}
\subfigure[BNN ; (RMSE, NLL) = (1.59, 2.10)]{\includegraphics[width=0.8\linewidth]{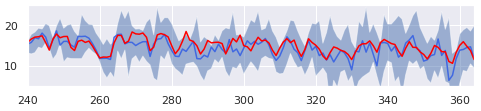}}
\subfigure[mBNN ; (RMSE, NLL) = (1.21, 1.64)]{\includegraphics[width=0.8\linewidth]{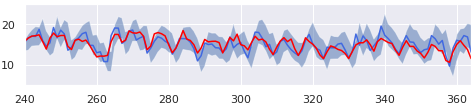}}
\caption{\textbf{Daily search volume dataset.} 
Predictive distribution of BSTS model using linear, DNN and mDNN.  
X-axis and y-axis represent time (in day) and search volumes (in thousand), respectively.
For each method, the means of predictive distributions and 95\% predictive intervals are shown as the blue line and shaded area, respectively.
The observed values are plotted by the red line.
} \label{image_BSTS}
\end{figure}

\subsection{Application to the Bayesian structural time series model}\label{sec6_3}
Bayesian structural time series (BSTS) models \cite{scott2014predicting, qiu2018multivariate}
provide a useful tool for time series forecasting, nowcasting, inferring causal relationships and anomaly detection \cite{brodersen2015inferring, feng2021time}.
The key of BSTS is the state space model, which is given as
\begin{align*}
y_t &= \mu_t + \bm{\beta}^{\top} \bm{x}_t + \epsilon_t , && \epsilon_t \sim \mathcal{N}(0,\sigma_{\epsilon}^2)\\
\mu_{t+1} &= r \mu_{t} + \eta_t  , && \eta_t \sim \mathcal{N}(0,\sigma_{\eta}^2)\\
\mu_0 &\sim N(a_0, \sigma_0^2)
\end{align*}
where $\bm{x}_t \in \mathbb{R}^d$, $y_t \in \mathbb{R}$ and $\mu_t \in \mathbb{R}$ denote observed input variables, output variable and unobserved local trend at time $t$, respectively.
For inference and prediction, MCMC algorithm using Kalman filter \cite{welch1995introduction, durbin2002simple} is mainly used.

In many cases, the linear component
$\bm{\beta}^{\top} \bm{x}_t$ may not be insufficient to explain complicate relations between inputs and outputs, and  DNN can be considered instead. In turn, the mBNN is a useful inferential tool for this nonlinear BSTS. To illustrate that the mBNN works well for BSTS, we analyze a real dataset consisting of
daily search volumes of several keywords collected by a Korea search platform company.
The dataset consists of daily search volumes of keywords in year 2021 associated with 
a pre-specified product (eg. shampoo). The aim is to predict the search volume of the pre-specified
product based on the search volumes of other related keywords. 

We apply the three BSTS models corresponding to the three regression components - linear, BNN and the mBNN. For BNN and the mBNN, the two hidden layer MLP with the layer sizes (100,100) is used.
The predictive intervals with the NLL and RMSE values on the data from $t=241$ to $t=365$
obtained by the posterior distribution inferred on the data from $t=1$ to $t=240$
are presented in figure \ref{image_BSTS}. It is clearly observed that the mBNN
is superior in both nowcasting ability and uncertainty quantification.
It is interesting that the predictive intervals of the linear and BNN models
are much wider than those of the mBNN, which amply indicates that the choice of an appropriate
architecture of DNN is crucial for desirable uncertainty quantification.
More details about the dataset, methodology and additional experimental results are provided in Appendix \ref{appendix_bsts}.

\begin{table}[t] 
  \caption{\textbf{Image datasets.} Performance comparison
  of NS-VI, NS-Ens, NS-MC and mBCNN on image datasets.
  When inferring the posterior of mBCNN, we use SGLD \cite{welling2011bayesian}.
  } 
  \begin{adjustbox}{center,max width=\linewidth}
  \setlength{\aboverulesep}{0.2pt}
  \setlength{\belowrulesep}{0.2pt}
  \scalebox{0.75}{\small
    \begin{tabular}{c|c|cccc}
      \hline
        & \textbf{Measure}   & \textbf{NS-VI} &  \textbf{NS-Ens} & \textbf{NS-MC} & \textbf{mBCNN}\checkmark  \\
    \hline
    \multirow{5}{*}{\rotatebox{90}{CIFAR10}}  
    &  ACC  & 0.906(0.002) & 0.926(0.003) & 0.918(0.002) & \textbf{0.932}(0.001) \\
      &  NLL & 0.298(0.004) & 0.255(0.011) & 0.525(0.006) & \textbf{0.220}(0.005) \\
      &  ECE & 0.009(0.001) & 0.012(0.002) & 0.020(0.001) & \textbf{0.008}(0.001) \\
      & FLOPs & 44.85(2.67)\% & 22.45(1.64)\% & 41.16(0.00)\% & \textbf{12.26}(0.06)\% \\
        &  Capacity & 23.24(1.56)\% & 12.78(0.75)\% & 41.07(0.00)\% & \textbf{3.47}(0.03)\% \\
    \cmidrule(r){1-6}
      \multirow{5}{*}{\rotatebox{90}{CIFAR100}}   
      & ACC  & 0.600(0.004) & 0.735(0.003) & 0.679(0.002) & \textbf{0.737}(0.001) \\
      &  NLL & 1.977(0.039) & 1.076(0.014) & 2.792(0.061) & \textbf{1.004}(0.008) \\
        & ECE & 0.006(0.000) & \textbf{0.002}(0.000) & 0.006(0.000) & \textbf{0.002}(0.000) \\
      & FLOPs & 37.38(0.64)\% & 27.02(2.30)\% & 53.50(0.00)\% & \textbf{18.34}(0.27)\% \\
        &  Capacity & 41.61(4.04)\% & 20.08(1.15)\% & 53.45(0.00)\% & \textbf{12.21}(0.07)\% \\
    \hline
    \end{tabular}
    }
  \end{adjustbox}
 \vskip -0.1in \label{table_image}
\end{table}

\subsection{Extension to convolution neural network} \label{sec6_4}

We also extend the mBNN to masked Bayesian convolution neural network (mBCNN) for image dataset.
We construct a masked CNN (mCNN) by adding masking vectors to the CNN model.
See Appendix \ref{appB_2} for details of the mBCNN.

We evaluate compression ability of the mBCNN on image datasets. 
For comparison, we use node-sparse versions of approximated Bayesian
methodologies : NS-VI, node-sparse Deep ensemble (NS-Ens) and node-sparse MC-dropout (NS-MC).
Detail description of competitors are provided in Appendix \ref{app_image}.
For each methods, ResNet18 \cite{he2016deep} architecture is used.
For fair comparison, we use five networks for inference in all the methods. 
We repeat the experiments 3 times with different seeds.

In Table \ref{table_image}, we report the means and standard errors of the performance measures on the repeated experiments.
For the performance measures, accuracy of the Bayes estimator, NLL, expected calibration error (ECE) \cite{kumar2019verified} of test data are considered.
In addition, we report inferential costs (FLOPs) and the numbers of nonzero parameters (Capacity) relative to the non-sparse model.
The results in Table \ref{table_image} show that mBCNN provides better generalization and uncertainty quantification with less inferential cost and model capacity compared to the other competitors. That is, the mBNN is a useful tool for compressing 
complex DNNs without hampering uncertainty quantification much.

\begin{figure}[t!]
\centering
\subfigure[Yacht]{\includegraphics[width=0.32\linewidth]{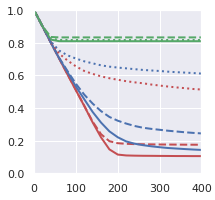}}
\hfill
\subfigure[CIFAR10]{\includegraphics[width=0.32\linewidth]{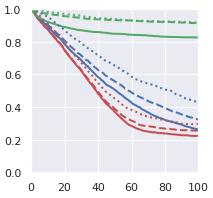}}
\hfill
\subfigure[CIFAR100]{\includegraphics[width=0.32\linewidth]{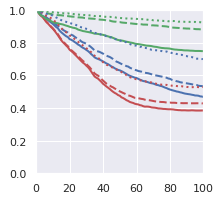}}
\caption{\textbf{Efficiency of the proposal.} 
The ratios of activated nodes (y-axis) relative to the largest DNN
are drawn as the MCMC algorithms iterate in the burn-in period (x-axis).
The solid, dashed and dotted lines correspond to (\ref{pro_uni}), (\ref{pro_reci}) and (\ref{pro_line}) for {\it birth}, respectively while blue, red and green lines correspond to (\ref{pro_uni}), (\ref{pro_reci}) and (\ref{pro_line}) for {\it death}, respectively. 
It is obvious that the number of activated nodes with  our proposal distribution (the solid red line) decreases much faster than the other proposals.
} \label{proposal_compare}
 \vskip -0.2in
\end{figure}

\subsection{Ablation study: Efficiency of the proposal}\label{sec6_5}
To illustrate the efficiency of 
our proposal distribution in Algorithm \ref{algMH}, we conduct a comparative experiment.
As an alternative to the proposal distribution with selection probability (\ref{birth}) for {\it birth} and (\ref{death}) for {\it death}, we consider the following candidates for the selection probability
of either {\it birth} or {\it death}: 
\begingroup\makeatletter\def\f@size{9}\check@mathfonts
\def\maketag@@@#1{\hbox{\m@th\normalfont#1}}%
\begin{align} 
    \bm{Q}_u \propto& \ \mathbb{I}(\bm{m}_j^{(l)} = u), \label{pro_uni}\\
    \bm{Q}_u \propto& \ \mathbb{I}(\bm{m}_j^{(l)} = u) \exp\left( -|\nabla l(\bm{M})|/2 \right),  \label{pro_reci} \\   
    \bm{Q}_u \propto& \ \mathbb{I}(\bm{m}_j^{(l)} = u) \exp\left( (1-2u) \nabla l(\bm{M})/2 \right). \label{pro_line} 
\end{align} \endgroup
While (\ref{pro_uni}) gives the same selection probability on each active (or inactive) node, 
(\ref{pro_reci}) and (\ref{pro_line}) put proposals depending on the role of each node.
The proposal of (\ref{pro_reci}) is devised to select less sensitive nodes more by giving proposal probabilities 
reciprocally proportional to $|\nabla l(\bm{M})|$, and (\ref{pro_line}) uses a linear approximation of $l$ in (\ref{hamming}) \cite{grathwohl2021oops}.

We compare the speeds of convergence of the MCMC algorithm with
the 9 proposal distributions which are all $3^2$ combinations 
of the three candidates
 (\ref{pro_uni}), (\ref{pro_reci}) and (\ref{pro_line})  and the two moves for {\it birth} and {\it death}.
Figure \ref{proposal_compare} presents how the ratios of the activated nodes 
relative to the largest DNN
decrease in the burn-in phase of the MCMC algorithm.
It is obvious that our proposal (the red solid lines)
is most fast to eliminate unnecessary nodes.
Note that the proposals involving the linear approximation of $l(\bm{M})$  do not work well, which 
suggests that DNNs are not smooth enough to be approximated linearly.

\section{Discussion}
\label{sec7}
We have proposed the mBNN which 
searches for a DNN with an appropriate complexity. 
We prove theoretical optimalities of the mBNN and develop an efficient MCMC algorithm.
By extensive numerical studies, we illustrate that the proposed BNN 
discovers well condensed DNN architectures with better prediction accuracy and uncertainty quantification compared to large DNNs. 

A node-sparse network can be considered as a dense network with a smaller width because all edges connected to survived nodes are active. This property is sharply contrast with edge-sparse networks. A key difference of the node-sparse network and a dense network with a smaller width is that the widths of each layer can vary for node-sparse networks while they should be fixed in advance for dense network. In practice, it would be difficult to find the optimal width in advance before analyzing data. mBNN is a kind of tools to find the optimal width data adaptively.

We do not insist that the proposal distribution in our MCMC algorithm is optimal.
There would be more efficient proposals. More data adaptive proposals for {\it birth}
would be possible which we leave as a future work.

Condensing the posterior distribution is also interesting. 
We have to employ multiple DNNs in the prediction phase, which would be expensive.
Summarizing multiple posterior samples into a single random DNN would be useful.
For bootstrapping, \citet{shin2021neural} proposes a similar method, which can be modified for the mBNN.

\section*{Acknowledgements}
This work was supported by National Research Foundation of Korea(NRF) grant funded by the Korea government (MSIT) (No. 2020R1A2C3A0100355014), Institute of Information \& communications Technology Planning \& Evaluation (IITP) grant funded by the Korea government(MSIT) 
[NO.2022-0-00184, Development and Study of AI Technologies to Inexpensively Conform to Evolving Policy on Ethics ] and INHA UNIVERSITY Research Grant. 



\nocite{langley00}

\bibliography{mBNN.bib}

\begin{thebibliography}{69}
\providecommand{\natexlab}[1]{#1}
\providecommand{\url}[1]{\texttt{#1}}
\expandafter\ifx\csname urlstyle\endcsname\relax
  \providecommand{\doi}[1]{doi: #1}\else
  \providecommand{\doi}{doi: \begingroup \urlstyle{rm}\Url}\fi

\bibitem[Bai et~al.(2020)Bai, Song, and Cheng]{bai2020efficient}
Bai, J., Song, Q., and Cheng, G.
\newblock Efficient variational inference for sparse deep learning with
  theoretical guarantee.
\newblock \emph{Advances in Neural Information Processing Systems},
  33:\penalty0 466--476, 2020.

\bibitem[Banbura et~al.(2010)Banbura, Giannone, and
  Reichlin]{banbura2010nowcasting}
Banbura, M., Giannone, D., and Reichlin, L.
\newblock Nowcasting.
\newblock 2010.

\bibitem[Brodersen et~al.(2015)Brodersen, Gallusser, Koehler, Remy, and
  Scott]{brodersen2015inferring}
Brodersen, K.~H., Gallusser, F., Koehler, J., Remy, N., and Scott, S.~L.
\newblock Inferring causal impact using bayesian structural time-series models.
\newblock \emph{The Annals of Applied Statistics}, pp.\  247--274, 2015.

\bibitem[Castillo \& van~der Vaart(2012)Castillo and van~der
  Vaart]{castillo2012needles}
Castillo, I. and van~der Vaart, A.
\newblock Needles and straw in a haystack: Posterior concentration for possibly
  sparse sequences.
\newblock \emph{The Annals of Statistics}, 40\penalty0 (4):\penalty0
  2069--2101, 2012.

\bibitem[Castillo et~al.(2015)Castillo, Schmidt-Hieber, and Van~der
  Vaart]{castillo2015bayesian}
Castillo, I., Schmidt-Hieber, J., and Van~der Vaart, A.
\newblock Bayesian linear regression with sparse priors.
\newblock \emph{The Annals of Statistics}, 43\penalty0 (5):\penalty0
  1986--2018, 2015.

\bibitem[Chen et~al.(2014)Chen, Fox, and Guestrin]{chen2014stochastic}
Chen, T., Fox, E., and Guestrin, C.
\newblock Stochastic gradient hamiltonian monte carlo.
\newblock In \emph{International conference on machine learning}, pp.\
  1683--1691. PMLR, 2014.

\bibitem[Ch{\'e}rief-Abdellatif(2020)]{cherief2020convergence}
Ch{\'e}rief-Abdellatif, B.-E.
\newblock Convergence rates of variational inference in sparse deep learning.
\newblock In \emph{International Conference on Machine Learning}, pp.\
  1831--1842. PMLR, 2020.

\bibitem[Chin et~al.(2020)Chin, Ding, Zhang, and Marculescu]{chin2020towards}
Chin, T.-W., Ding, R., Zhang, C., and Marculescu, D.
\newblock Towards efficient model compression via learned global ranking.
\newblock In \emph{Proceedings of the IEEE/CVF conference on computer vision
  and pattern recognition}, pp.\  1518--1528, 2020.

\bibitem[Cranmer et~al.(2021)Cranmer, Tamayo, Rein, Battaglia, Hadden,
  Armitage, Ho, and Spergel]{cranmer2021bayesian}
Cranmer, M., Tamayo, D., Rein, H., Battaglia, P., Hadden, S., Armitage, P.~J.,
  Ho, S., and Spergel, D.~N.
\newblock A bayesian neural network predicts the dissolution of compact
  planetary systems.
\newblock \emph{Proceedings of the National Academy of Sciences}, 118\penalty0
  (40), 2021.

\bibitem[Deng et~al.(2019)Deng, Zhang, Liang, and Lin]{deng2019adaptive}
Deng, W., Zhang, X., Liang, F., and Lin, G.
\newblock An adaptive empirical bayesian method for sparse deep learning.
\newblock \emph{Advances in neural information processing systems},
  2019:\penalty0 5563, 2019.

\bibitem[Durbin \& Koopman(2002)Durbin and Koopman]{durbin2002simple}
Durbin, J. and Koopman, S.~J.
\newblock A simple and efficient simulation smoother for state space time
  series analysis.
\newblock \emph{Biometrika}, 89\penalty0 (3):\penalty0 603--616, 2002.

\bibitem[Feng \& Tian(2021)Feng and Tian]{feng2021time}
Feng, C. and Tian, P.
\newblock Time series anomaly detection for cyber-physical systems via neural
  system identification and bayesian filtering.
\newblock In \emph{Proceedings of the 27th ACM SIGKDD Conference on Knowledge
  Discovery \& Data Mining}, pp.\  2858--2867, 2021.

\bibitem[Filos et~al.(2019)Filos, Farquhar, Gomez, Rudner, Kenton, Smith,
  Alizadeh, De~Kroon, and Gal]{filos2019systematic}
Filos, A., Farquhar, S., Gomez, A.~N., Rudner, T.~G., Kenton, Z., Smith, L.,
  Alizadeh, M., De~Kroon, A., and Gal, Y.
\newblock A systematic comparison of bayesian deep learning robustness in
  diabetic retinopathy tasks.
\newblock \emph{arXiv preprint arXiv:1912.10481}, 2019.

\bibitem[Frankle \& Carbin(2018)Frankle and Carbin]{frankle2018lottery}
Frankle, J. and Carbin, M.
\newblock The lottery ticket hypothesis: Finding sparse, trainable neural
  networks.
\newblock In \emph{International Conference on Learning Representations}, 2018.

\bibitem[Gal \& Ghahramani(2016)Gal and Ghahramani]{gal2016dropout}
Gal, Y. and Ghahramani, Z.
\newblock Dropout as a bayesian approximation: Representing model uncertainty
  in deep learning.
\newblock In \emph{international conference on machine learning}, pp.\
  1050--1059. PMLR, 2016.

\bibitem[Gan et~al.(2015)Gan, Chen, Henao, Carlson, and Carin]{gan2015scalable}
Gan, Z., Chen, C., Henao, R., Carlson, D., and Carin, L.
\newblock Scalable deep poisson factor analysis for topic modeling.
\newblock In \emph{International Conference on Machine Learning}, pp.\
  1823--1832. PMLR, 2015.

\bibitem[Ghosal \& Van Der~Vaart(2007)Ghosal and Van
  Der~Vaart]{ghosal2007convergence}
Ghosal, S. and Van Der~Vaart, A.
\newblock Convergence rates of posterior distributions for noniid observations.
\newblock \emph{The Annals of Statistics}, 35\penalty0 (1):\penalty0 192--223,
  2007.

\bibitem[Ghosh et~al.(2019)Ghosh, Yao, and Doshi-Velez]{ghosh2019model}
Ghosh, S., Yao, J., and Doshi-Velez, F.
\newblock Model selection in bayesian neural networks via horseshoe priors.
\newblock \emph{J. Mach. Learn. Res.}, 20\penalty0 (182):\penalty0 1--46, 2019.

\bibitem[Giannone et~al.(2008)Giannone, Reichlin, and
  Small]{giannone2008nowcasting}
Giannone, D., Reichlin, L., and Small, D.
\newblock Nowcasting: The real-time informational content of macroeconomic
  data.
\newblock \emph{Journal of monetary economics}, 55\penalty0 (4):\penalty0
  665--676, 2008.

\bibitem[Gneiting \& Raftery(2007)Gneiting and Raftery]{gneiting2007strictly}
Gneiting, T. and Raftery, A.~E.
\newblock Strictly proper scoring rules, prediction, and estimation.
\newblock \emph{Journal of the American statistical Association}, 102\penalty0
  (477):\penalty0 359--378, 2007.

\bibitem[Gomez et~al.(2019)Gomez, Zhang, Kamalakara, Madaan, Swersky, Gal, and
  Hinton]{gomez2019learning}
Gomez, A.~N., Zhang, I., Kamalakara, S.~R., Madaan, D., Swersky, K., Gal, Y.,
  and Hinton, G.~E.
\newblock Learning sparse networks using targeted dropout.
\newblock \emph{arXiv preprint arXiv:1905.13678}, 2019.

\bibitem[Grathwohl et~al.(2021)Grathwohl, Swersky, Hashemi, Duvenaud, and
  Maddison]{grathwohl2021oops}
Grathwohl, W., Swersky, K., Hashemi, M., Duvenaud, D., and Maddison, C.
\newblock Oops i took a gradient: Scalable sampling for discrete distributions.
\newblock In \emph{International Conference on Machine Learning}, pp.\
  3831--3841. PMLR, 2021.

\bibitem[Gy{\"o}rfi et~al.(2002)Gy{\"o}rfi, Kohler, Krzy{\.z}ak, and
  Walk]{gyorfi2002distribution}
Gy{\"o}rfi, L., Kohler, M., Krzy{\.z}ak, A., and Walk, H.
\newblock \emph{A distribution-free theory of nonparametric regression},
  volume~1.
\newblock Springer, 2002.

\bibitem[Han et~al.(2015)Han, Pool, Tran, and Dally]{han2015learning}
Han, S., Pool, J., Tran, J., and Dally, W.
\newblock Learning both weights and connections for efficient neural network.
\newblock \emph{Advances in neural information processing systems}, 28, 2015.

\bibitem[Harvey et~al.(2017)Harvey, Liaw, and Mehrabian]{harvey2017nearly}
Harvey, N., Liaw, C., and Mehrabian, A.
\newblock Nearly-tight vc-dimension bounds for piecewise linear neural
  networks.
\newblock In \emph{Conference on learning theory}, pp.\  1064--1068. PMLR,
  2017.

\bibitem[He et~al.(2016)He, Zhang, Ren, and Sun]{he2016deep}
He, K., Zhang, X., Ren, S., and Sun, J.
\newblock Deep residual learning for image recognition.
\newblock In \emph{Proceedings of the IEEE conference on computer vision and
  pattern recognition}, pp.\  770--778, 2016.

\bibitem[He et~al.(2017)He, Zhang, and Sun]{he2017channel}
He, Y., Zhang, X., and Sun, J.
\newblock Channel pruning for accelerating very deep neural networks.
\newblock In \emph{Proceedings of the IEEE international conference on computer
  vision}, pp.\  1389--1397, 2017.

\bibitem[Hern{\'a}ndez-Lobato \& Adams(2015)Hern{\'a}ndez-Lobato and
  Adams]{hernandez2015probabilistic}
Hern{\'a}ndez-Lobato, J.~M. and Adams, R.
\newblock Probabilistic backpropagation for scalable learning of bayesian
  neural networks.
\newblock In \emph{International conference on machine learning}, pp.\
  1861--1869. PMLR, 2015.

\bibitem[Hoffman et~al.(2014)Hoffman, Gelman, et~al.]{hoffman2014no}
Hoffman, M.~D., Gelman, A., et~al.
\newblock The no-u-turn sampler: adaptively setting path lengths in hamiltonian
  monte carlo.
\newblock \emph{J. Mach. Learn. Res.}, 15\penalty0 (1):\penalty0 1593--1623,
  2014.

\bibitem[Izmailov et~al.(2021)Izmailov, Vikram, Hoffman, and
  Wilson]{izmailov2021bayesian}
Izmailov, P., Vikram, S., Hoffman, M.~D., and Wilson, A. G.~G.
\newblock What are bayesian neural network posteriors really like?
\newblock In \emph{International Conference on Machine Learning}, pp.\
  4629--4640. PMLR, 2021.

\bibitem[Jantre et~al.(2021)Jantre, Bhattacharya, and Maiti]{jantre2021layer}
Jantre, S., Bhattacharya, S., and Maiti, T.
\newblock Layer adaptive node selection in bayesian neural networks:
  Statistical guarantees and implementation details.
\newblock \emph{arXiv preprint arXiv:2108.11000}, 2021.

\bibitem[Kingma \& Ba(2014)Kingma and Ba]{kingma2014adam}
Kingma, D.~P. and Ba, J.
\newblock Adam: A method for stochastic optimization.
\newblock \emph{arXiv preprint arXiv:1412.6980}, 2014.

\bibitem[Kingma et~al.(2015)Kingma, Salimans, and
  Welling]{kingma2015variational}
Kingma, D.~P., Salimans, T., and Welling, M.
\newblock Variational dropout and the local reparameterization trick.
\newblock \emph{Advances in neural information processing systems},
  28:\penalty0 2575--2583, 2015.

\bibitem[Kohler \& Langer(2021)Kohler and Langer]{kohler2021rate}
Kohler, M. and Langer, S.
\newblock On the rate of convergence of fully connected deep neural network
  regression estimates.
\newblock \emph{The Annals of Statistics}, 49\penalty0 (4):\penalty0
  2231--2249, 2021.

\bibitem[Kumar et~al.(2019)Kumar, Liang, and Ma]{kumar2019verified}
Kumar, A., Liang, P.~S., and Ma, T.
\newblock Verified uncertainty calibration.
\newblock \emph{Advances in Neural Information Processing Systems}, 32, 2019.

\bibitem[Lakshminarayanan et~al.(2017)Lakshminarayanan, Pritzel, and
  Blundell]{lakshminarayanan2017simple}
Lakshminarayanan, B., Pritzel, A., and Blundell, C.
\newblock Simple and scalable predictive uncertainty estimation using deep
  ensembles.
\newblock \emph{Advances in neural information processing systems}, 30, 2017.

\bibitem[Lee \& Lee(2022)Lee and Lee]{lee2022asymptotic}
Lee, K. and Lee, J.
\newblock Asymptotic properties for bayesian neural network in besov space.
\newblock In \emph{Advances in Neural Information Processing Systems}, 2022.
\newblock URL \url{https://openreview.net/forum?id=y5ziOXtKybL}.

\bibitem[Lee et~al.(2018)Lee, Ajanthan, and Torr]{lee2018snip}
Lee, N., Ajanthan, T., and Torr, P.
\newblock Snip: Single-shot network pruning based on connection sensitivity.
\newblock In \emph{International Conference on Learning Representations}, 2018.

\bibitem[Louizos et~al.(2017)Louizos, Ullrich, and
  Welling]{louizos2017bayesian}
Louizos, C., Ullrich, K., and Welling, M.
\newblock Bayesian compression for deep learning.
\newblock \emph{Advances in neural information processing systems}, 30, 2017.

\bibitem[MacKay(1992)]{mackay1992practical}
MacKay, D.~J.
\newblock A practical bayesian framework for backpropagation networks.
\newblock \emph{Neural computation}, 4\penalty0 (3):\penalty0 448--472, 1992.

\bibitem[Molchanov et~al.(2017)Molchanov, Ashukha, and
  Vetrov]{molchanov2017variational}
Molchanov, D., Ashukha, A., and Vetrov, D.
\newblock Variational dropout sparsifies deep neural networks.
\newblock In \emph{International Conference on Machine Learning}, pp.\
  2498--2507. PMLR, 2017.

\bibitem[Nalisnick et~al.(2019)Nalisnick, Hern{\'a}ndez-Lobato, and
  Smyth]{nalisnick2019dropout}
Nalisnick, E., Hern{\'a}ndez-Lobato, J.~M., and Smyth, P.
\newblock Dropout as a structured shrinkage prior.
\newblock In \emph{International Conference on Machine Learning}, pp.\
  4712--4722. PMLR, 2019.

\bibitem[Neal(2012)]{neal2012bayesian}
Neal, R.~M.
\newblock \emph{Bayesian learning for neural networks}, volume 118.
\newblock Springer Science \& Business Media, 2012.

\bibitem[Neal et~al.(2011)]{neal2011mcmc}
Neal, R.~M. et~al.
\newblock Mcmc using hamiltonian dynamics.
\newblock \emph{Handbook of markov chain monte carlo}, 2\penalty0
  (11):\penalty0 2, 2011.

\bibitem[Polson \& Ro{\v{c}}kov{\'a}(2018)Polson and
  Ro{\v{c}}kov{\'a}]{polson2018posterior}
Polson, N.~G. and Ro{\v{c}}kov{\'a}, V.
\newblock Posterior concentration for sparse deep learning.
\newblock In \emph{Proceedings of the 32nd International Conference on Neural
  Information Processing Systems}, pp.\  938--949, 2018.

\bibitem[Qiu et~al.(2018)Qiu, Jammalamadaka, and Ning]{qiu2018multivariate}
Qiu, J., Jammalamadaka, S.~R., and Ning, N.
\newblock Multivariate bayesian structural time series model.
\newblock \emph{J. Mach. Learn. Res.}, 19\penalty0 (1):\penalty0 2744--2776,
  2018.

\bibitem[Schmidt-Hieber(2020)]{schmidt2020nonparametric}
Schmidt-Hieber, J.
\newblock Nonparametric regression using deep neural networks with relu
  activation function.
\newblock \emph{The Annals of Statistics}, 48\penalty0 (4):\penalty0
  1875--1897, 2020.

\bibitem[Scott \& Varian(2014)Scott and Varian]{scott2014predicting}
Scott, S.~L. and Varian, H.~R.
\newblock Predicting the present with bayesian structural time series.
\newblock \emph{International Journal of Mathematical Modelling and Numerical
  Optimisation}, 5\penalty0 (1-2):\penalty0 4--23, 2014.

\bibitem[Shin et~al.(2021)Shin, Cho, Min, and Lim]{shin2021neural}
Shin, M., Cho, H., Min, H.-s., and Lim, S.
\newblock Neural bootstrapper.
\newblock \emph{Advances in Neural Information Processing Systems},
  34:\penalty0 16596--16609, 2021.

\bibitem[Srivastava et~al.(2014)Srivastava, Hinton, Krizhevsky, Sutskever, and
  Salakhutdinov]{srivastava2014dropout}
Srivastava, N., Hinton, G., Krizhevsky, A., Sutskever, I., and Salakhutdinov,
  R.
\newblock Dropout: a simple way to prevent neural networks from overfitting.
\newblock \emph{The journal of machine learning research}, 15\penalty0
  (1):\penalty0 1929--1958, 2014.

\bibitem[Tanaka et~al.(2020)Tanaka, Kunin, Yamins, and
  Ganguli]{tanaka2020pruning}
Tanaka, H., Kunin, D., Yamins, D.~L., and Ganguli, S.
\newblock Pruning neural networks without any data by iteratively conserving
  synaptic flow.
\newblock \emph{Advances in Neural Information Processing Systems},
  33:\penalty0 6377--6389, 2020.

\bibitem[Tsybakov(2009)]{tsybakov2009introduction}
Tsybakov, A.~B.
\newblock Introduction to nonparametric estimation, 2009.

\bibitem[Umrigar(1993)]{umrigar1993accelerated}
Umrigar, C.
\newblock Accelerated metropolis method.
\newblock \emph{Physical review letters}, 71\penalty0 (3):\penalty0 408, 1993.

\bibitem[van~der Vaart \& van Zanten(2008)van~der Vaart and van
  Zanten]{van2008rates}
van~der Vaart, A.~W. and van Zanten, J.~H.
\newblock Rates of contraction of posterior distributions based on gaussian
  process priors.
\newblock \emph{The Annals of Statistics}, 36\penalty0 (3):\penalty0
  1435--1463, 2008.

\bibitem[Wang et~al.(2019)Wang, Grosse, Fidler, and Zhang]{wang2019eigendamage}
Wang, C., Grosse, R., Fidler, S., and Zhang, G.
\newblock Eigendamage: Structured pruning in the kronecker-factored eigenbasis.
\newblock In \emph{International Conference on Machine Learning}, pp.\
  6566--6575. PMLR, 2019.

\bibitem[Wang et~al.(2015)Wang, Wang, and Yeung]{wang2015collaborative}
Wang, H., Wang, N., and Yeung, D.-Y.
\newblock Collaborative deep learning for recommender systems.
\newblock In \emph{Proceedings of the 21th ACM SIGKDD international conference
  on knowledge discovery and data mining}, pp.\  1235--1244, 2015.

\bibitem[Wang et~al.(2021)Wang, Deng, and Lin]{wang2021bayesian}
Wang, Y., Deng, W., and Lin, G.
\newblock Bayesian sparse learning with preconditioned stochastic gradient mcmc
  and its applications.
\newblock \emph{Journal of Computational Physics}, 432:\penalty0 110134, 2021.

\bibitem[Welch et~al.(1995)Welch, Bishop, et~al.]{welch1995introduction}
Welch, G., Bishop, G., et~al.
\newblock An introduction to the kalman filter.
\newblock 1995.

\bibitem[Welling \& Teh(2011)Welling and Teh]{welling2011bayesian}
Welling, M. and Teh, Y.~W.
\newblock Bayesian learning via stochastic gradient langevin dynamics.
\newblock In \emph{Proceedings of the 28th international conference on machine
  learning (ICML-11)}, pp.\  681--688. Citeseer, 2011.

\bibitem[Wen et~al.(2016)Wen, Wu, Wang, Chen, and Li]{wen2016learning}
Wen, W., Wu, C., Wang, Y., Chen, Y., and Li, H.
\newblock Learning structured sparsity in deep neural networks.
\newblock \emph{Advances in neural information processing systems},
  29:\penalty0 2074--2082, 2016.

\bibitem[Wenzel et~al.(2020)Wenzel, Roth, Veeling, Swiatkowski, Tran, Mandt,
  Snoek, Salimans, Jenatton, and Nowozin]{wenzel2020good}
Wenzel, F., Roth, K., Veeling, B., Swiatkowski, J., Tran, L., Mandt, S., Snoek,
  J., Salimans, T., Jenatton, R., and Nowozin, S.
\newblock How good is the bayes posterior in deep neural networks really?
\newblock In \emph{International Conference on Machine Learning}, pp.\
  10248--10259. PMLR, 2020.

\bibitem[Wilson \& Izmailov(2020)Wilson and Izmailov]{wilson2020bayesian}
Wilson, A.~G. and Izmailov, P.
\newblock Bayesian deep learning and a probabilistic perspective of
  generalization.
\newblock \emph{Advances in neural information processing systems},
  33:\penalty0 4697--4708, 2020.

\bibitem[Wu et~al.(2022)Wu, Rachel~Wang, and Wong]{wu2022mini}
Wu, T.-Y., Rachel~Wang, Y., and Wong, W.~H.
\newblock Mini-batch metropolis--hastings with reversible sgld proposal.
\newblock \emph{Journal of the American Statistical Association}, 117\penalty0
  (537):\penalty0 386--394, 2022.

\bibitem[Xie \& Xu(2020)Xie and Xu]{xie2020adaptive}
Xie, F. and Xu, Y.
\newblock Adaptive bayesian nonparametric regression using a kernel mixture of
  polynomials with application to partial linear models.
\newblock \emph{Bayesian Analysis}, 15\penalty0 (1):\penalty0 159--186, 2020.

\bibitem[Yarotsky(2017)]{yarotsky2017error}
Yarotsky, D.
\newblock Error bounds for approximations with deep relu networks.
\newblock \emph{Neural Networks}, 94:\penalty0 103--114, 2017.

\bibitem[Zanella(2020)]{zanella2020informed}
Zanella, G.
\newblock Informed proposals for local mcmc in discrete spaces.
\newblock \emph{Journal of the American Statistical Association}, 115\penalty0
  (530):\penalty0 852--865, 2020.

\bibitem[Zhang et~al.(2019)Zhang, Li, Zhang, Chen, and
  Wilson]{zhang2019cyclical}
Zhang, R., Li, C., Zhang, J., Chen, C., and Wilson, A.~G.
\newblock Cyclical stochastic gradient mcmc for bayesian deep learning.
\newblock In \emph{International Conference on Learning Representations}, 2019.

\bibitem[Zhang et~al.(2020)Zhang, Cooper, and De~Sa]{zhang2020asymptotically}
Zhang, R., Cooper, A.~F., and De~Sa, C.~M.
\newblock Asymptotically optimal exact minibatch metropolis-hastings.
\newblock \emph{Advances in Neural Information Processing Systems},
  33:\penalty0 19500--19510, 2020.

\bibitem[Zhang et~al.(2022)Zhang, Liu, and Liu]{zhang2022langevin}
Zhang, R., Liu, X., and Liu, Q.
\newblock A langevin-like sampler for discrete distributions.
\newblock In \emph{International Conference on Machine Learning}, pp.\
  26375--26396. PMLR, 2022.

\end{thebibliography}
\bibliographystyle{icml2023}

\newpage
\appendix
\onecolumn

\newpage
\numberwithin{equation}{section}
\section{Proofs for Main Theorems}\label{appA}
\renewcommand{\theequation}{A.\arabic{equation}}

\subsection{Additional notations}\label{appA_1}
In this section, we describe additional notations not mentioned earlier.

We define $\mathcal{F}^{\operatorname{DNN}}(L, \bm{p}, F)$ as the function class of truncated DNNs with $(L, \bm{p})$ architecture :
\begin{align*}
\mathcal{F}^{\operatorname{DNN}}(L, \bm{p}, F) := \big\{ f : f = f_{\bm{\theta} [-F, F]}^{\operatorname{DNN}} \text{ is a DNN with }(L, \bm{p}) \quad 
\\ \text{ architecture } \text{truncated on } [-F, F]\big\}.
\end{align*}
In a similar fashion, we define $\mathcal{F}^{\operatorname{mDNN}}(L, \bm{p}, F)$ as the function class of truncated mDNNs with $(L, \bm{p})$ architecture :
\begin{align*}
\mathcal{F}^{\operatorname{mDNN}}(L, \bm{p}, F) := \big\{ f : f = f_{\bm{M}, \bm{\theta} [-F,F]}^{\operatorname{mDNN}} \text{ is a mDNN with }(L, \bm{p}) \quad 
\\ \text{ architecture } \text{truncated on } [-F, F]\big\}.
\end{align*}
and $\mathcal{F}^{\operatorname{mDNN}}(L, \bm{p}, F, s)$ as
\begin{align*}
\mathcal{F}^{\operatorname{mDNN}}(L, \bm{p}, F, s) := \big\{& f : f = f_{\bm{M}, \bm{\theta} [-F,F]}^{\operatorname{mDNN}} \in \mathcal{F}^{\operatorname{mDNN}}(L, \bm{p}, F),\ \max_{l \in [L]} |\bm{m}^{(l)}|_0 \leq s \big\}.
\end{align*}

For a real number $x \in \mathbb{R}$, we denote $\left\lceil x \right\rceil := \min\{ z \in \mathbb{Z} : z \geq x \}$.
For a vector $\bm{x} \in \mathbb{R}^d$ and $m$ dimensional index vector $\bm{j} \subset [d]$, we denote $(\bm{x})_{\bm{j}} \in \mathbb{R}^m$ as the sub-vector whose elements are consist of $\bm{j}$th index of $\bm{x}$.
Let $\bm{x}^{(n)} := (\bm{x}_i)_{i=1}^n$, $\bm{X}^{(n)} := (\bm{X}_i)_{i=1}^n$.
For a real-valued function  $f : \mathcal{X} \to \mathbb{R}$  and $1 \leq p < \infty$, we denote $||f||_{p,n} := (\sum_{i=1}^{n} f(\bm{x}_i)^p/n)^{1/p}$ and $||f||_{p,\mathrm{P}_{\bm{X}}} := \left(\int_{\bm{X} \in \mathcal{X}} f(\bm{X})^p d\mathrm{P}_{\bm{X}} \right)^{1/p}$ where
$\mathrm{P}_{\bm{X}}$ is a probability measure defined on input space $\mathcal{X}$.

For two positive sequences $\{a_n\}$ and $\{b_n\}$, we denote $a_n \lesssim b_n$ if there exists a positive sequence $C>0$ such that $a_n \leq Cb_n$ for all $n \in \mathbb{N}$. 
We denote $a_n \asymp b_n$ if $a_n \lesssim b_n$ and $a_n \gtrsim b_n$ hold.
We use the little $o$ notation, that is, we write $a_n = o(b_n)$ if $\lim_{n \to \infty} a_n / b_n = 0$.

Let $\mathcal{F}$ be a set of functions $\mathcal{X} \to \mathbb{R}$ and $d_n$ be a semimetric defined on $\mathcal{F}$. We denote $\mathcal{N}(\varepsilon, \mathcal{F}, d_n)$ and $\mathcal{M}(\varepsilon, \mathcal{F}, d_n)$ as the $\varepsilon$-covering number and $\varepsilon$-packing number of $\mathcal{F}$ w.r.t. $d_n$, respectively.
Also, we denote  $V_{\mathcal{F}}^{+}$ as the VC dimension of the set $\mathcal{F}^{+}:=\left\{\left\{(x, t) \in \mathcal{X} \times \mathbb{R} ; t \leq f(x)\right\} ; f \in \mathcal{F}\right\}$. 

\subsection{Auxiliary lemmas}\label{appA_2}
First, we describe a lemma that approximates Hölder smooth functions as DNN functions.
\begin{lemma}[Theorem 2 of \citet{kohler2021rate}]\label{kohler2021rate}	
	There exists $C_L>0$ and $C_p>0$ only depending on $d$ such that for every $f_0 \in \mathcal{H}_d^\beta$ with $\left\|f_0\right\|_{\infty} \leq F$,
	there exist $f_{\hat{\bm{\theta}} \ [-F, F]}^{\operatorname{DNN}} \in \mathcal{F}^{\operatorname{DNN}}(L_n, \bm{v}_n, F)$ with
	\begin{align*}
	L_n :=& \left\lceil C_L \log n \right\rceil, \\
	v_n :=& \left\lceil C_p \left(n^\frac{d}{2\beta + d} (\log n)^{-1}\right)^{1/2} \right\rceil, \\
	\bm{v}_n :=& (d, v_n, ... , v_n, 1)^{\top} \in \mathbb{N}^{L_n + 2},
	\end{align*}
	such that
	\begin{align*}
	\left\|f_{\hat{\bm{\theta}} \ [-F, F]}^{\operatorname{DNN}}-f_{0}\right\|_{\infty} \lesssim n^{-\frac{\beta}{2\beta + d}} \log n 
	\end{align*}
	and
	\begin{align} 
    |\hat{\bm{\theta}}|_{\infty} \leq n \label{upper_param}
	\end{align}
	hold.
\end{lemma}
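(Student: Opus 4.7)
The plan is to sketch the constructive argument of Kohler and Langer (2021), Theorem 2: exhibit a truncated ReLU DNN of the prescribed architecture $(L_n, \bm{v}_n)$ that approximates any $\beta$-Hölder smooth function at rate $n^{-\beta/(2\beta+d)} \log n$, with all weights bounded by $n$. Since both the architecture and the bound depend on $\beta$ through $v_n$, the goal is really to build a single explicit network whose parameter count and depth-width split match (\ref{L_n})--(\ref{p_n1}) exactly.

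I would proceed in three stages. First, tile $[-1,1]^d$ by a regular grid of cubes of side length $h \asymp n^{-1/(2\beta+d)}$, giving $\asymp h^{-d} \asymp n^{d/(2\beta+d)}$ cubes. On each cube $C_k$ centered at $\bm{c}_k$, let $P_k$ be the degree-$\lfloor\beta\rfloor$ Taylor polynomial of $f_0$ at $\bm{c}_k$; the Hölder assumption gives the local pointwise bound $\sup_{\bm{x} \in C_k} |f_0(\bm{x}) - P_k(\bm{x})| \lesssim \|f_0\|_{\mathcal{H}^\beta}\, h^\beta \asymp n^{-\beta/(2\beta+d)}$. Second, construct a partition of unity $\{\psi_k\}$ subordinate to the cubes using tensor products of one-dimensional triangular bumps; each such bump is a fixed linear combination of a constant number of ReLU units, so each $\psi_k$ is realized exactly by a small constant-depth ReLU subnetwork. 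Third, define the target $\tilde f(\bm{x}) := \sum_k \psi_k(\bm{x})\, P_k(\bm{x})$ and realize it as one DNN.

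The principal ingredient in stage three is Yarotsky's $\epsilon$-accurate ReLU multiplication network for $(x,y) \mapsto xy$ on a bounded box, which has depth $O(\log(1/\epsilon))$ and constant width. Composing such multipliers lets one approximate each monomial of degree $\leq \lfloor\beta\rfloor$ in $P_k$, and each product $\psi_k \cdot P_k$, to accuracy $\epsilon \asymp n^{-\beta/(2\beta+d)}$ using a subnetwork of depth $\asymp \log n$ and constant width per subnetwork. There are $\asymp n^{d/(2\beta+d)}$ such subnetworks, and the total parameter count is $\asymp n^{d/(2\beta+d)} \log n$; distributing the computation across $L_n \asymp \log n$ layers yields the required per-layer width $v_n \asymp (n^{d/(2\beta+d)}/\log n)^{1/2}$, since $L_n v_n^2$ matches the total budget. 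Combining local Taylor error with the multiplication error via the triangle inequality gives $\|\tilde f - f_0\|_\infty \lesssim n^{-\beta/(2\beta+d)} \log n$, and truncation to $[-F,F]$ does not increase the error because $\|f_0\|_\infty \leq F$.

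The weight bound $|\hat{\bm{\theta}}|_\infty \leq n$ follows because the partition-of-unity and Yarotsky-multiplication weights are absolute constants depending only on $d$ and $\beta$, while the Taylor coefficients are controlled by $\|f_0\|_{\mathcal{H}^\beta}$; all weights are thus $O(1)$, which is $\leq n$ for $n$ large. The main technical obstacle is the depth-width balancing: off-the-shelf Yarotsky-type constructions produce shallower, wider networks, so one must arrange the $\asymp h^{-d}$ parallel local subnetworks into a narrow deep architecture of width $v_n$ by routing intermediate activations across layers rather than stacking subnetworks side-by-side. This bookkeeping, which is what differentiates Kohler and Langer's result from earlier approximation bounds using wider shallow networks, is the step where care is needed to hit the exact $(L_n,\bm{v}_n)$ architecture required by Section~\ref{sec4}.
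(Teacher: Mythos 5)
You should first note that the paper does not prove this statement at all: Lemma~\ref{kohler2021rate} is imported verbatim as Theorem~2 of \citet{kohler2021rate}, and the authors' only original contribution here is the remark that the weight bound (\ref{upper_param}) is not stated in that reference but ``can be easily confirmed by following their proof.'' So your proposal is not an alternative route to the paper's argument --- it is an attempt to reconstruct the external result that the paper treats as a black box. Your blueprint (grid of mesh $h \asymp n^{-1/(2\beta+d)}$, local Taylor polynomials, a ReLU partition of unity, Yarotsky-type approximate multiplication, then a depth--width rearrangement) is indeed the right family of ideas and matches the general strategy of \citet{kohler2021rate} and of Yarotsky and Schmidt--Hieber before them.

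However, the sketch has a genuine gap precisely at the step that makes this lemma nontrivial. A fully connected network of depth $L_n$ and width $v_n$ has $\asymp L_n v_n^2 \asymp n^{d/(2\beta+d)}$ parameters, which is a factor $\log n$ \emph{smaller} than the total budget $\asymp n^{d/(2\beta+d)}\log n$ you assign to the $\asymp h^{-d}$ local subnetworks; more importantly, placing those subnetworks ``side by side'' in a fully connected architecture forces width $\asymp n^{d/(2\beta+d)}$, i.e.\ roughly $v_n^2$, not $v_n$. Matching a raw parameter count does not show that the computation can be routed through a net of width $v_n$; this compression is exactly the content of Kohler and Langer's hierarchical/recursive construction, and you name it as ``the step where care is needed'' without supplying the idea. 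Separately, your justification of (\ref{upper_param}) --- that all weights are absolute constants --- is not correct: localizing to cubes of side $h$ requires weights of order $h^{-1} \asymp n^{1/(2\beta+d)}$, which diverges with $n$. The correct (and sufficient) claim, which is what the paper asserts, is that the weights in the construction are polynomially bounded in $n$, hence $\leq n$ for all large $n$.
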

Note that the upper bound (\ref{upper_param}) is not mentioned in statement of \citet{kohler2021rate}, but it can be easily confirmed by following their proof. Next, we describe a standard tool for establishing concentration rates.

\begin{lemma}[Theorem 4 of \citet{ghosal2007convergence}]\label{ghosal2007convergence}	
	Let $\left(\mathfrak{Y}^{(n)}, \mathcal{A}^{(n)}, P_{\eta}^{(n)}: \eta \in \mathcal{F}_n \right)$ be a sequence statistical experiments with observations $Y^{(n)}$. 
	We consider the case where the observation $Y^{(n)}$ is a vector $Y^{(n)} = (Y_1 , \dots, Y_n)^{\top}$ of independent observations $Y_i$. 
	We assume that the distribution $P_{\eta , i}$ of the $i$th component $Y_i$ possesses a density $p_{\eta, i}$ relative to Lebesgue measure for $i \in [n]$.
	We define
	\begin{align*}
	K_i (\eta_0, \eta) =& \int \log (p_{\eta_0, i} / p_{\eta, i}) dP_{\eta_0, i},\\
	V_{2,0;i} (\eta_0, \eta) =& \int  \left( \log (p_{\eta_0, i} / p_{\eta, i}) - K_i (\eta_0, \eta)\right)^2 dP_{\eta_0, i}, 
	\end{align*}
	and
	\begin{align*}
	B_{n}^{*}\left(\eta_{0}, \varepsilon_n ; 2\right)=\Big\{\eta \in \mathcal{F}_n:  \frac{1}{n}\sum_{i=1}^n K_i \left(\eta_{0}, \eta\right) \leq \varepsilon_n^{2}, 
	\ \frac{1}{n}\sum_{i=1}^n V_{2, 0 ; i}\left(\eta_{0}, \eta\right) \leq \varepsilon_n^{2}\Big\}.
	\end{align*}
	Let $h_n$ be a semimetric on $\mathcal{F}_n$ with the property that there exist universal constants $\xi>0$ and $K>0$ such that for every $\varepsilon > 0$ and for each $\eta_1 \in \mathcal{F}_n$ with $h_n(\eta_1, \eta_0)>\varepsilon$, there exists a test $\phi_n$ such that
	\begin{gather}
	P_{\eta_{0}}^{(n)} \phi_{n} \leq e^{-K n \varepsilon^{2}},
	\sup _{\eta_2 \in \mathcal{F}_n: h_{n}\left(\eta_2, \eta_{1}\right)<\varepsilon \xi} P_{\eta_2}^{(n)}\left(1-\phi_{n}\right) \leq e^{-K n \varepsilon^{2}}. \nonumber
	\end{gather}	
	Let $\varepsilon_{n}>0, \varepsilon_{n} \to 0$ and $(n\varepsilon_{n}^2)^{-1} = O(1)$.
	If for every sufficiently large $j \in \mathbb{N}$,
	\begin{gather}
	\sup _{\varepsilon>\varepsilon_{n}} \log N\left(\frac{1}{2} \varepsilon \xi,\left\{\eta \in \mathcal{F}_{n}: h_{n}\left(\eta, \eta_{0}\right)<\varepsilon\right\}, h_{n}\right) \leq n \varepsilon_{n}^{2}, \nonumber \\
	\frac{\Pi_{n}\left(\eta \in \mathcal{F}_{n}: j \varepsilon_{n}<h_{n}\left(\eta, \eta_{0}\right) \leq 2 j \varepsilon_{n}\right)}{\Pi_{n}\left(B_{n}\left(\eta_{0}, \varepsilon_{n} ; 2\right)\right)} \leq e^{K n \varepsilon_{n}^{2} j^{2} / 2} \nonumber
	\end{gather}
	for all but finite many $n$, then we have that
	\begin{equation*}
	P_{\eta_{0}}^{(n)} \Pi_{n}\left(\eta \in \mathcal{F}_n: h_{n}\left(\eta, \eta_{0}\right) \geq M_{n} \varepsilon_{n} \mid X^{(n)}\right) \rightarrow 0
	\end{equation*}
	for every $M_n \to \infty$.
\end{lemma}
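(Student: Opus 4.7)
The plan is to follow the standard three-ingredient recipe of Ghosal--Ghosh--van der Vaart (tests, local entropy, prior mass on KL-type neighborhoods), adapted to independent non-identically distributed data. Writing $A_n=\{\eta:h_n(\eta,\eta_0)\ge M_n\varepsilon_n\}$ and $\Lambda_n(\eta)=\prod_{i=1}^n p_{\eta,i}(Y_i)/p_{\eta_0,i}(Y_i)$, the standard starting point is the decomposition
\begin{equation*}
\Pi_n(A_n\mid Y^{(n)}) \;=\; \frac{\int_{A_n}\Lambda_n(\eta)\,d\Pi_n(\eta)}{\int \Lambda_n(\eta)\,d\Pi_n(\eta)} \;\le\; \phi_n \;+\; \frac{\int_{A_n}\Lambda_n(\eta)(1-\phi_n)\,d\Pi_n(\eta)}{\int \Lambda_n(\eta)\,d\Pi_n(\eta)},
\end{equation*}
so it suffices to bound the denominator below and the numerator above with high $P_{\eta_0}^{(n)}$-probability, and to exhibit a global test $\phi_n$ whose size is negligible.

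First I would establish the evidence lower bound on the denominator. Restricting the integral to $B_n^*:=B_n^*(\eta_0,\varepsilon_n;2)$ and applying Jensen's inequality to $\log(\cdot)$ gives
\begin{equation*}
\log\!\Bigl(\tfrac{1}{\Pi_n(B_n^*)}\!\int_{B_n^*}\!\Lambda_n(\eta)\,d\Pi_n(\eta)\Bigr) \;\ge\; \tfrac{1}{\Pi_n(B_n^*)}\!\int_{B_n^*}\!\sum_{i=1}^n \log\!\bigl(p_{\eta,i}/p_{\eta_0,i}\bigr)(Y_i)\,d\Pi_n(\eta).
\end{equation*}
Under $P_{\eta_0}^{(n)}$ the right-hand side has mean $-\tfrac{1}{\Pi_n(B_n^*)}\int_{B_n^*}\sum_i K_i(\eta_0,\eta)d\Pi_n\ge -n\varepsilon_n^2$ and, by Fubini and the definition of $V_{2,0;i}$, variance at most $n\varepsilon_n^2$. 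A Chebyshev bound then yields an event of probability $\ge 1-(Dn\varepsilon_n^2)^{-1}$ on which the denominator is at least $\Pi_n(B_n^*)\,e^{-(1+D)n\varepsilon_n^2}$; since $(n\varepsilon_n^2)^{-1}=O(1)$, this event has high probability for large $D$.

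Next I would construct the global test by slicing $A_n$ into shells $S_{n,j}=\{j\varepsilon_n<h_n(\eta,\eta_0)\le 2j\varepsilon_n\}$ for integers $j\ge M_n$. On each $S_{n,j}$, the local entropy hypothesis yields a $\tfrac{1}{2}\xi j\varepsilon_n$-cover whose log-cardinality is at most $n\varepsilon_n^2$ (used with $\varepsilon=2j\varepsilon_n$), so by the testing hypothesis applied at each cover centre and a union bound, one obtains tests $\phi_{n,j}$ with
\begin{equation*}
P_{\eta_0}^{(n)}\phi_{n,j}\le e^{n\varepsilon_n^2-Kn j^2\varepsilon_n^2}, \qquad \sup_{\eta\in S_{n,j}}P_\eta^{(n)}(1-\phi_{n,j})\le e^{-Kn j^2\varepsilon_n^2}.
\end{equation*}
Setting $\phi_n=\max_{j\ge M_n}\phi_{n,j}$, summing the geometric series in $j^2$ gives $P_{\eta_0}^{(n)}\phi_n\to 0$. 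For the numerator, taking $P_{\eta_0}^{(n)}$ expectation and using Fubini together with the per-shell type~II error bound yields
\begin{equation*}
E_{\eta_0}\!\int_{A_n}\!\Lambda_n(1-\phi_n)\,d\Pi_n \;\le\; \sum_{j\ge M_n}\Pi_n(S_{n,j})\,e^{-Knj^2\varepsilon_n^2}.
\end{equation*}
Combining with the denominator bound and invoking the prior-mass ratio hypothesis $\Pi_n(S_{n,j})/\Pi_n(B_n^*)\le e^{Knj^2\varepsilon_n^2/2}$, the posterior mass on $A_n$ is bounded by $e^{(1+D)n\varepsilon_n^2}\sum_{j\ge M_n}e^{-Knj^2\varepsilon_n^2/2}$, which tends to $0$ in probability as soon as $M_n\to\infty$.

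The main obstacle I anticipate is the evidence lower bound step: the neighbourhood $B_n^*$ is defined by averaged KL and variance conditions rather than by a metric ball, so one must carefully separate the quenched (integrating over $\eta\in B_n^*$) and annealed (integrating over $Y^{(n)}$) randomness via Fubini, and verify that the Chebyshev bound is tight enough that the exponent $-(1+D)n\varepsilon_n^2$ does not overwhelm the factor $e^{Knj^2\varepsilon_n^2/2}$ in the final summation. The testing and entropy parts are routine given the hypotheses; everything hinges on ensuring that the constant $K$ in the testing bound dominates the constant $D$ from the denominator argument, which is why the ratio hypothesis only asks for $Kn\varepsilon_n^2 j^2/2$ on the right-hand side.
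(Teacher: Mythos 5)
The paper does not prove this statement: it is imported verbatim as Theorem~4 of Ghosal and van der Vaart (2007), so there is no in-paper proof to compare against. Your sketch is exactly the standard argument behind that theorem --- the test/numerator/denominator decomposition, the Jensen--Chebyshev evidence lower bound over $B_n^*(\eta_0,\varepsilon_n;2)$, and the shell-by-shell tests built from the local entropy and testing hypotheses --- and it is correct in outline. Two small points to tighten if you were to write it out in full: the bound $\operatorname{Var}\bigl(\int X_\eta\,d\Pi_n(\eta)\bigr)\le\int \operatorname{Var}(X_\eta)\,d\Pi_n(\eta)$ is not Fubini but Minkowski's integral inequality for the $L_2$-norm followed by Jensen (this is where the condition on $V_{2,0;i}$ actually enters), and in the shell argument the cover centres must be taken inside $S_{n,j}$ (or the radii adjusted) so that each centre satisfies $h_n(\eta_1,\eta_0)>j\varepsilon_n$ and the testing hypothesis applies at level $\varepsilon=j\varepsilon_n$. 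Also note that since $(n\varepsilon_n^2)^{-1}=O(1)$ only bounds $n\varepsilon_n^2$ away from zero, the Chebyshev exceptional event has probability $O(1/D)$ rather than $o(1)$, and one concludes by letting $D\to\infty$ after taking $\limsup_n$; your phrasing already gestures at this.
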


Next, we state the lemma which describes an upper bound of supremum norm distance of two DNN functions whose parameters are similar.
\begin{lemma} \label{lemma_similar}
	Consider two DNN models $f_{\bm{\theta}_1}^{\operatorname{DNN}} : [-1,1]^d \to \mathbb{R} , f_{\bm{\theta}_2}^{\operatorname{DNN}} : [-1,1]^d \to \mathbb{R}$ with $(L , \bm{p})$ architecture, where $L\in \mathbb{N}$ and $\bm{p} = (d, p, p, ... , p, 1)^{\top} \in \mathbb{N}^{L+2}$ for some $p \in \mathbb{N}$. If $|\bm{\theta}_1|_{\infty} \leq B$, $|\bm{\theta}_2|_{\infty} \leq B$ and $|\bm{\theta}_1 - \bm{\theta}_2|_{\infty} \leq \delta$ holds for some $B>0$ and $\delta>0$, then 
	\begin{equation*}
	\|f_{\bm{\theta}_1}^{\operatorname{DNN}} - f_{\bm{\theta}_2}^{\operatorname{DNN}}\|_{\infty} \leq d p^{L} B^{L+1} (L+1)\delta
	\end{equation*}
	holds.
\end{lemma}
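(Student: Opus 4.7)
The plan is to proceed by induction on layer depth, tracking two quantities at each hidden layer $l \in \{0,1,\ldots,L\}$: a magnitude bound $M_l$ on $|h_l|_\infty$, where $h_l$ is the post-ReLU activation vector (identical recursion for both networks), and a perturbation bound $\Delta_l$ on $|h_l^{(1)} - h_l^{(2)}|_\infty$ between the two networks, followed by one more step for the affine output layer $L+1$.

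First I would set up the magnitude recursion. Using $|W\bm{x}+\bm{b}|_\infty \le (\text{number of columns of }W)\cdot |W|_\infty \cdot |\bm{x}|_\infty + |\bm{b}|_\infty$, the assumption $|\bm{\theta}_i|_\infty \le B$, the fact that $\rho$ is 1-Lipschitz and fixes $0$, and the initial bound $M_0 = |\bm{x}|_\infty \le 1$, I obtain $M_1 \le dB + B$ and $M_l \le pBM_{l-1} + B$ for $2 \le l \le L$. Unrolling gives a closed-form geometric expression that I would upper bound crudely by $M_l \lesssim d(pB)^{l-1}B$ (absorbing the residual geometric tail into the leading term using $pB \ge 1$).

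Next, for the perturbation recursion I decompose
\begin{equation*}
W_l^{(1)} h_{l-1}^{(1)} - W_l^{(2)} h_{l-1}^{(2)} = W_l^{(1)}\bigl(h_{l-1}^{(1)} - h_{l-1}^{(2)}\bigr) + \bigl(W_l^{(1)} - W_l^{(2)}\bigr) h_{l-1}^{(2)},
\end{equation*}
add the bias difference, and apply 1-Lipschitzness of $\rho$ to obtain $\Delta_l \le p^{(l-1)} B\, \Delta_{l-1} + p^{(l-1)}\delta\, M_{l-1} + \delta$ with $\Delta_0 = 0$. Unrolling this linear recursion and substituting the $M_{l-1}$ bound from the previous step gives $\Delta_L \lesssim L\cdot d p^L B^L\,\delta$. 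Finally I apply the same single-step inequality (without ReLU) to the output layer to get $\|f_{\bm{\theta}_1}^{\operatorname{DNN}} - f_{\bm{\theta}_2}^{\operatorname{DNN}}\|_\infty \le pB\,\Delta_L + p\delta\, M_L + \delta$, which yields the stated bound after bounding each term by the dominant $dp^L B^{L+1}(L+1)\delta$.

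The main obstacle is not the mechanics but the bookkeeping of constants: the statement gives the specific form $dp^L B^{L+1}(L+1)\delta$, so one must carefully absorb lower-order contributions, namely the $+\delta$ bias terms at each layer, the geometric-series residuals in the bound on $M_l$, and the $+1$ in $(d+1)$ from the $p^{(0)} = d$ input layer, into the dominant exponential term. This is clean provided one uses the mild implicit inequalities $B \ge 1$ and $pB \ge 2$, which are harmless in the regime of interest since the prior allows $B$ to grow polynomially in $n$ and $p \ge 2$ for any non-trivial width. Beyond this accounting, the argument is the standard "Lipschitz-in-parameters" estimate for feed-forward ReLU networks.
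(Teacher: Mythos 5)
Your proposal is correct and follows essentially the same route as the paper's proof: both track a magnitude bound on the hidden-layer activations ($\left\| |\bm{h}_{\bm{\theta},L'}|_{\infty}\right\|_{\infty} \leq d p^{L'-1}B^{L'}$ in the paper, your $M_l$) and a perturbation bound propagated through the same one-step decomposition $W^{(1)}h^{(1)}-W^{(2)}h^{(2)} = W(h^{(1)}-h^{(2)})+(W^{(1)}-W^{(2)})h$, then unroll the linear recursion to get the factor $(L+1)$. Your explicit accounting of the bias terms and the implicit inequalities needed to absorb them is, if anything, more careful than the paper's.
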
	
\begin{proof}[proof of Lemma \ref{lemma_similar}]
	Define
	\begin{align*}
	f_{\bm{\theta}_1}^{\operatorname{DNN}}(\cdot)=  A_{L+1 , 1} \circ \rho \circ A_{L,1} \dots \circ \rho \circ A_{1,1} (\cdot),\\
	f_{\bm{\theta}_2}^{\operatorname{DNN}}(\cdot)=  A_{L+1 , 2} \circ \rho \circ A_{L,2} \dots \circ \rho \circ A_{1,2} (\cdot).
	\end{align*}
	Also, we define $\bm{h}_{\bm{\theta}, L^{\prime}} : [-1,1]^d \to \mathbb{R}^{p_{L^{\prime}}}$ for $L^{\prime} \in [L]$ as the DNN model whose output is $L^{\prime}$-th hidden layer of $f_{\bm{\theta}}^{\operatorname{DNN}}$. 
	In other words,    
	\begin{align*}
	\bm{h}_{\bm{\theta}_1, L^{\prime}}(\cdot)=  A_{L^{\prime} , 1} \circ \rho \circ A_{L^{\prime}-1,1} \dots \circ \rho \circ A_{1,1} (\cdot), \\
	\bm{h}_{\bm{\theta}_2, L^{\prime}}(\cdot)=  A_{L^{\prime} , 2} \circ \rho \circ A_{L^{\prime}-1,2} \dots \circ \rho \circ A_{1,2} (\cdot).
	\end{align*}
	We let $\bm{h}_{\bm{\theta}, L+1} (\cdot) = f_{\bm{\theta}}^{\operatorname{DNN}} (\cdot)$.
	Since
	\begin{align*}
	    \left\| |\bm{h}_{\bm{\theta}_1, L^{\prime} +1} - \bm{h}_{\bm{\theta}_2, L^{\prime} +1} |_{\infty}\right\|_{\infty} \leq
	    p |\bm{\theta}_1 - \bm{\theta}_2|_{\infty} 
	    \left\| |\bm{h}_{\bm{\theta}_1, L^{\prime}}|_{\infty}\right\|_{\infty}
	    + p|\bm{\theta}_2|_{\infty}
	    \left\| |\bm{h}_{\bm{\theta}_1, L^{\prime}} - \bm{h}_{\bm{\theta}_2, L^{\prime}}|_{\infty}\right\|_{\infty}
	\end{align*}
	and
	\begin{equation*}
	\left\| |\bm{h}_{\bm{\theta_1}, L^{\prime}}|_{\infty}\right\|_{\infty} \leq d p^{L^{\prime}-1} B^{L^{\prime}}
	\end{equation*}
	hold, we can show that $|\bm{\theta}_1 - \bm{\theta}_2|_{\infty} \leq \delta$ implies
	\begin{equation*}
	\left\| |\bm{h}_{\bm{\theta}_1, L^{\prime}}(x) - \bm{h}_{\bm{\theta}_2, L^{\prime}}(x)|_{\infty}\right\|_{\infty} \leq  
	d p^{L^{\prime}-1} B^{L^{\prime}} L^{\prime}\delta
	\end{equation*}
	for every $L^{\prime} \in [L+1]$ by recursion.
\end{proof}

Lastly, we state the lemma about empirical process theory.
\begin{lemma}[Theorem 19.3 of \citet{gyorfi2002distribution}]\label{gyorfi2002distribution}
	Let $\boldsymbol{X}, \boldsymbol{X}_1, \dots, \boldsymbol{X}_n$ be independent and identically distributed random vectors with values in $\mathbb{R}^d$. Let $K_1, K_2 \geq 1$ be constants and let $\mathcal{G}$ be a class of functions $g : \mathbb{R}^d \to \mathbb{R}$ with
	\begin{equation*}
	|g(\boldsymbol{x})| \leq K_1, \quad \mathbb{E}(g(\boldsymbol{X})^2) \leq K_2 \mathbb{E}(g(\boldsymbol{X})).
	\end{equation*}
	Let $0<\kappa<1$ and $\alpha>0$. Assume that
	\begin{equation*}
	\sqrt{n} \kappa \sqrt{1-\kappa} \sqrt{\alpha} \geq 288 \max \left\{2 K_{1}, \sqrt{2 K_{2}}\right\}
	\end{equation*}
	and that, for all $\boldsymbol{x}_1 , \dots , \boldsymbol{x}_n \in \mathbb{R}^d$ and for all $t \geq \frac{\alpha}{8}$,
	\begin{align*}
	\frac{\sqrt{n} \kappa(1-\kappa) t}{96 \sqrt{2} \max \left\{K_{1}, 2 K_{2}\right\}} 
	\geq \int_{\frac{\kappa(1-\kappa)t}{16 \max \left\{K_{1}, 2 K_{2}\right\}}}^{\sqrt{t}}  
	\sqrt{\log \mathcal{N}\left(u,\left\{g \in \mathcal{G}: \frac{1}{n} \sum_{i=1}^{n} g\left(\boldsymbol{x}_{i}\right)^{2} \leq 16 t\right\}, ||\cdot||_{1,n}\right)} d u.
	\end{align*}
	Then,
	\begin{align*}
	\mathbf{P}\left\{\sup _{g \in \mathcal{G}} \frac{\left|\mathbb{E}\{g(\boldsymbol{X})\}-\frac{1}{n} \sum_{i=1}^{n} g\left(\boldsymbol{X}_{i}\right)\right|}{\alpha+\mathbb{E}\{g(\boldsymbol{X})\}}>\kappa\right\} \nonumber 
	\leq 60 \exp \left(-\frac{n \alpha \kappa^{2}(1-\kappa)}{128 \cdot 2304 \max \left\{K_{1}^{2}, K_{2}\right\}}\right).
	\end{align*}
\end{lemma}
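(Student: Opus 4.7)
The plan is to recognize Theorem~\ref{cor1} as a direct by-product of the sieve construction already used in the proofs of Theorems~\ref{theorem1} and~\ref{theorem2}. Define the sieve
\[
\mathcal{F}_n := \bigl\{w = (\bm M, \bm \theta) : \max_{l \in [L_n]} |\bm m^{(l)}|_0 \leq \mathfrak{s}_n\bigr\}.
\]
The claim is exactly that $\Pi_n(\mathcal{F}_n^c \mid \mathcal{D}^{(n)}) \overset{\mathbb{P}_0^n}{\to} 0$, so I run the standard numerator/denominator decomposition for posterior probabilities and show that the prior cost of leaving the sieve dwarfs the Kullback--Leibler prior-mass lower bound around $f_0$.

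For the numerator, write
\[
N_n := \int_{\mathcal{F}_n^c} \frac{p_w(\mathcal{D}^{(n)})}{p_0(\mathcal{D}^{(n)})}\, d\Pi_n(w),
\]
so that under $\mathbb{P}_0^n$ its expected value equals $\Pi_n(\mathcal{F}_n^c)$, and Markov's inequality reduces the numerator analysis to a prior-mass bound. Since the layer-wise masks are a priori independent, a union bound combined with a tail sum for the super-Gaussian prior on each $s^{(l)}$ in (\ref{prior_s}) gives
\[
\Pi_n(\mathcal{F}_n^c) \leq L_n \sum_{s > \mathfrak{s}_n} e^{-(\lambda \log n)^5 s^2} \lesssim L_n\, e^{-(\lambda \log n)^5 \mathfrak{s}_n^2} = \exp\bigl(-c (\log n)^6 n^{d/(2\beta+d)}\bigr)
\]
for some positive constant $c$, after substituting $\mathfrak{s}_n^2 \asymp n^{d/(2\beta+d)} \log n$.

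For the denominator, the proofs of Theorems~\ref{theorem1}/\ref{theorem2} produce, via approximating $f_0$ by a sparse mDNN whose maximal width $v_n = \lceil C_p(n^{d/(2\beta+d)}/\log n)^{1/2}\rceil$ sits well inside the sieve by Lemma~\ref{kohler2021rate}, the prior-mass lower bound $\Pi_n(B_n^*(f_0, \varepsilon_n; 2)) \geq e^{-C_1 n \varepsilon_n^2}$. A standard Chebyshev-type evidence lower bound then yields
\[
D_n := \int \frac{p_w(\mathcal{D}^{(n)})}{p_0(\mathcal{D}^{(n)})}\, d\Pi_n(w) \geq e^{-(C_1 + 2) n \varepsilon_n^2}
\]
with $\mathbb{P}_0^n$-probability tending to one. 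Combining, modulo a vanishing event,
\[
\Pi_n(\mathcal{F}_n^c \mid \mathcal{D}^{(n)}) \leq \frac{N_n}{D_n} \lesssim \frac{\exp(-c (\log n)^6 n^{d/(2\beta+d)})}{\exp(-(C_1+2) n \varepsilon_n^2)},
\]
which tends to $0$ because $n \varepsilon_n^2 \asymp n^{d/(2\beta+d)} (\log n)^{2\gamma}$ and we may pick $\gamma \in (5/2, 3)$ so that $2\gamma < 6$.

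The main obstacle is simply reconciling the two exponential scales. The prior in (\ref{prior_s}) is tuned so that the $(\log n)^5$ penalty on $(s^{(l)})^2$, evaluated at $s^{(l)} = \mathfrak{s}_n$, produces a $(\log n)^6$ factor in the exponent of $\Pi_n(\mathcal{F}_n^c)$ that strictly beats the $(\log n)^{2\gamma}$ factor arising in $n \varepsilon_n^2$. Recognizing this precise calibration is what forces $\mathfrak{s}_n$ to be the right sparsity threshold and gives the posterior its data-adaptive control on the network width.
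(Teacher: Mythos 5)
Your proposal does not address the statement at hand. The statement is Lemma~\ref{gyorfi2002distribution}, a uniform \emph{relative} deviation inequality for empirical processes (Theorem 19.3 of Gy\"orfi et al., 2002): it bounds the probability that $\sup_{g\in\mathcal{G}} |\mathbb{E}\{g(\boldsymbol{X})\} - \tfrac1n\sum_i g(\boldsymbol{X}_i)|/(\alpha + \mathbb{E}\{g(\boldsymbol{X})\})$ exceeds $\kappa$, under a boundedness condition, a variance--mean domination condition $\mathbb{E}(g^2)\le K_2\mathbb{E}(g)$, and a Dudley-type entropy-integral condition. A proof of this would proceed by symmetrization, peeling over shells of $\mathbb{E}\{g(\boldsymbol{X})\}$, and chaining against the covering numbers $\mathcal{N}(u,\cdot,\|\cdot\|_{1,n})$ appearing in the hypothesis. (In the paper this lemma is simply imported from the cited textbook and not reproved; it is then \emph{applied} in Step 2 of the proofs of Theorems~\ref{theorem1} and~\ref{theorem2} to pass from the empirical norm $\|\cdot\|_{2,n}$ to the population norm $\|\cdot\|_{2,\mathrm{P}_X}$.)

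What you have written instead is a proof sketch of Theorem~\ref{cor1} (the guaranteed-sparsity result): you construct the sieve $\mathcal{F}_n$ of masks with width at most $\mathfrak{s}_n$, bound the prior mass of its complement using the tail of the prior (\ref{prior_s}), lower-bound the evidence via the KL prior-mass bound, and compare exponential scales. That argument is broadly consistent with how the paper handles $\mathcal{F}_n\setminus\mathcal{F}_n'$ in Step 3 of the proof of Theorem~\ref{theorem1} (via Lemma 1 of Ghosal and van der Vaart rather than an explicit Chebyshev evidence bound), but it has no bearing on the empirical-process inequality you were asked to prove: none of the quantities $\kappa$, $\alpha$, $K_1$, $K_2$, the entropy integral, or the ratio-type supremum is engaged anywhere in your argument. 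The gap is therefore total --- you would need to start over with an empirical-process argument (symmetrization plus peeling plus chaining) or, as the paper does, simply cite the textbook result.
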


\subsection{Proof of Theorem \ref{theorem1}}\label{appA_3}

Let $\tau := \gamma - \frac{5}{2}$.
It is enough to show the main statement for $0 <\tau < 1$.
Note that $\varepsilon_{n} = n^{-\frac{\beta}{2\beta+d}} (\log n)^{\gamma}  
= n^{-\frac{\beta}{2\beta+d}} (\log n)^{\tau + \frac{5}{2}}$.
For $C_p$ defined in Lemma \ref{kohler2021rate}, we define $s_n$ as
\begin{align*}
s_n := \left\lceil C_p \left(n^\frac{d}{2\beta + d} (\log n)^{3\tau} \right)^{1/2} \right\rceil
\end{align*}
and $\bm{s}_n = (d, s_n, \dots, s_n, 1)^{\top} \in \mathbb{N}^{L_n + 2}$. 
We define
\begin{align*} 
    T_n := (d+1) p_n + (L_n -1) p_n (p_n + 1) + (p_n + 1) 
\end{align*}
and
\begin{align*} 
    S_n := (d+1) s_n + (L_n -1) s_n (s_n + 1) + (s_n + 1) 
\end{align*}
as the numbers of parameters in the DNNs with $(L_n , \bm{p}_n)$ and $(L_n , \bm{s}_n)$ architectures, respectively.
Let $\mathcal{F}_n$ be the set of pairs of truncated mDNN with $(L_n, \bm{p}_n)$ architecture and variances of the Gaussian noise,     
\begin{align*}
\mathcal{F}_n := \Big\{(f , \sigma^2)^{\top} \ : \ f \in \mathcal{F}^{\operatorname{mDNN}}(L_n, \bm{p}_n, F),\ 0 < \sigma^2 \leq \sigma_{\operatorname{max}}^2 \Big\}. 
\end{align*}
Also, we let $\mathcal{F}_n^{\prime} \subset \mathcal{F}_n$ by
\begin{align*}
\mathcal{F}_n^{\prime} := \Big\{(f , \sigma^2)^{\top} \ : \ f  \in \mathcal{F}^{\operatorname{mDNN}}(L_n, \bm{p}_n, F, s_n),\ 0 < \sigma^2 \leq \sigma_{\operatorname{max}}^2 \Big\}. 
\end{align*}

In the \hyperref[st1]{first step} of the proof, we fix $\{ \boldsymbol{x}^{(n)} \}_{n=1}^{\infty}$ and show 
\begin{equation} \label{empirical result_2}
\mathbb{E}_0 \left[ \Pi_n \left( (f, \sigma^2)^{\top} \in \mathcal{F}_n^{\prime} : || f - f_0 ||_{2, n} +
|\sigma^2 - \sigma_0^2 |> M_n \varepsilon_{n}  \middle\vert \mathcal{D}^{(n)}\right) \middle\vert \boldsymbol{X}^{(n)} = \boldsymbol{x}^{(n)}\right] \rightarrow 0
\end{equation}
as $n \to \infty$ for any $M_n \to \infty$. 

In the \hyperref[st2]{second step} of proof, we extend empirical $L_2$ error to expected $L_2$ error. 
In other words, we show
\begin{equation} \label{main result_2_1}
\mathbb{E}_0 \left[ \Pi_n \left( (f, \sigma^2)^{\top} \in \mathcal{F}_n^{\prime} : || f - f_0 ||_{2, \mathrm{P}_{X}}+
|\sigma^2 - \sigma_0^2 | > M_n \varepsilon_{n}  \middle\vert \mathcal{D}^{(n)}\right)\right] \rightarrow 0
\end{equation}
as $n \to \infty$ for any $M_n \to \infty$.

In the \hyperref[st3]{last step} of proof, we show 
\begin{equation} \label{main result_2_2}
\mathbb{E}_0 \left[ \Pi_n \left( (f, \sigma^2)^{\top} \in \left(\mathcal{F}_n \setminus \mathcal{F}_n^{\prime} \right) \middle\vert \mathcal{D}^{(n)}\right)\right] \rightarrow 0.
\end{equation}
and the proof of Theorem \ref{theorem1} is done by (\ref{main result_2_1}) and (\ref{main result_2_2}).

\textbf{Step 1}\label{st1}
For fixed $\{ \boldsymbol{x}^{(n)} \}_{n=1}^{\infty}$, let $P_{(f, \sigma^2), i}$ and $p_{(f, \sigma^2), i}$ be the probability measure and density corresponding to Gaussian distribution $N(f(\bm{x}_i), \sigma^2)$, respectively.
We define the semimetric $h_n^2$ on $\mathcal{F}_n^{\prime}$ as the average of the squares of the Hellinger distances for the distributions of the $n$ individual observations. In other words, for $(f_1, \sigma_1^2), (f_2, \sigma_2^2) \in \mathcal{F}_n^{\prime}$,
\begin{align*}
    h_{n}^{2}\left((f_1, \sigma_1^2), (f_2, \sigma_2^2)\right) := \frac{1}{n} \sum_{i=1}^{n} \int\left(\sqrt{p_{(f_1, \sigma_1^2), i}}-\sqrt{p_{(f_2, \sigma_2^2), i}}\right)^{2} 
    d P_{(f_1, \sigma_1^2), i}.
\end{align*}
Note that $h_n^2$ satisfies
\begin{align*} 
    (||f_1 - f_2||_{2,n} + |\sigma_1^2 - \sigma_2^2|)^2
    &\leq 2 \left( || f_1 - f_2 ||_{2, n}^2 + |\sigma_1^2 - \sigma_2^2 |^2 \right)  \\
     &\lesssim h_n^2 \left((f_1 , \sigma_1^2 ), (f_2 , \sigma_2^2 )\right).
\end{align*}
by Lemma B.1 of \citet{xie2020adaptive}.
Hence, to prove (\ref{empirical result_2}), it is suffices to show 
\begin{align}
\mathbb{E}_0 \left[ \Pi_n \left( (f, \sigma^2)^{\top} \in \mathcal{F}_n^{\prime} : 
h_n \left((f , \sigma^2 ), (f_0 , \sigma_0^2 )\right)> M_n \varepsilon_{n}  \middle\vert \mathcal{D}^{(n)}\right) \middle\vert \boldsymbol{X}^{(n)} = \boldsymbol{x}^{(n)}\right] \rightarrow 0. \label{emp_h}    
\end{align}
Since Hellinger distance possesses an exponentially powerful local test with respect to both the type-I and type-II errors (Lemma 2 of \citet{ghosal2007convergence}), we can use the standard tool to establish concentration rates that we restate in Lemma \ref{ghosal2007convergence} for the convenience of the reader.

If we define semimetric $d_n$ on $\mathcal{F}_n^{\prime}$ as 
\begin{align*}
d_n^2 \left((f_1 , \sigma_1^2 ), (f_2 , \sigma_2^2 )\right) := ||f_1 - f_2||_{1,n} + |\sigma_1^2 - \sigma_2^2|^2,	
\end{align*}
then $h_n^2(\cdot) \lesssim d_n^2(\cdot)$ holds by by Lemma B.1 of \citet{xie2020adaptive}
and hence $\mathcal{N}\left(\varepsilon ,\mathcal{F}_{n}^{\prime}, h_{n}\right) 
\leq  \mathcal{N}\left(\varepsilon^2 ,\mathcal{F}_{n}^{\prime}, d_n^2 \right)$.
Also, by the fact that $||f_1 - f_2||_{1,n} \leq \frac{\varepsilon^2}{2}$ and
$|\sigma_1^2 - \sigma_2^2|^2 \leq \frac{\varepsilon^2}{2}$ implies
$||f_1 - f_2||_{1,n} + |\sigma_1^2 - \sigma_2^2|^2 \leq  \varepsilon^2$
and for every $f_{\bm{M}, \bm{\theta} \ [-F,F]}^{\operatorname{mDNN}} \in  \mathcal{F}^{\operatorname{mDNN}}(L_n, \bm{p}_n, F, s_n)$ there exist $\bm{\psi}_{\bm{M}, \bm{\theta}} \in \mathbb{R}^{S_n}$ and $f_{\bm{\psi}_{\bm{M}, \bm{\theta}} \ [-F,F]}^{\operatorname{DNN}} \in  \mathcal{F}^{\operatorname{DNN}}(L_n, \bm{s}_n, F)$ such that 
$f_{\bm{M}, \bm{\theta} \ [-F,F]}^{\operatorname{mDNN}} 
= f_{\bm{\psi}_{\bm{M}, \bm{\theta}} \ [-F,F]}^{\operatorname{DNN}}$ holds, we get
\begin{align*}
\mathcal{N}\left(\varepsilon ,\mathcal{F}_{n}^{\prime}, h_{n}\right) 
\leq & \mathcal{N}\left(\varepsilon^2 ,\mathcal{F}_{n}^{\prime}, d_n^2 \right)\\
\leq &\mathcal{N}\left(\frac{\varepsilon^2}{2} , 
\mathcal{F}^{\operatorname{mDNN}}(L_n, \bm{p}_n, F, s_n), ||\cdot||_{1,n}\right)
\frac{\sqrt{2}\sigma_{max}^2}{\varepsilon}\\
\leq &\mathcal{N}\left(\frac{\varepsilon^2}{2} , 
\mathcal{F}^{\operatorname{DNN}}(L_n, \bm{s}_n, F), ||\cdot||_{1,n}\right)
\frac{\sqrt{2}\sigma_{max}^2}{\varepsilon}.
\end{align*}
Since functions in $\mathcal{F}^{DNN}(L_n, \bm{s}_n, F)$ are bounded by $[-F, F]$, there exists $c_1>0$ such that
\begin{align*}
\mathcal{N}\left(\frac{\varepsilon^2}{2} , 
\mathcal{F}^{\operatorname{DNN}}(L_n, \bm{s}_n, F), ||\cdot||_{1,n}\right) \leq& \mathcal{M}\left(\frac{\varepsilon^2}{2} ,\mathcal{F}^{DNN}(L_n, \bm{s}_n, F), ||\cdot||_{1,n}\right) \\
\leq& 3\left(\frac{8 e F}{\epsilon^{2}} \log \frac{12 e F}{\epsilon^{2}}\right)^{V_{\mathcal{F}^{\operatorname{DNN}}(L_n, \bm{s}_n,F)}^{+}}\\
\leq& 3\left(\frac{8 e F}{\epsilon^{2}} \log \frac{12 e F}{\epsilon^{2}}\right)
^{c_1 L_n S_n \log S_n}		
\end{align*}
holds for every $\varepsilon > 0$ by Theorem 9.4 of \citet{gyorfi2002distribution} and Theorem 6 of \citet{harvey2017nearly}.
Hence,
\begin{align} 
\sup _{\varepsilon>\varepsilon_{n}} \log \mathcal{N} \Big(\varepsilon, \mathcal{F}_{n}^{\prime}, h_{n}\Big)
\lesssim& L_n S_n \log S_n \log n \nonumber\\
\asymp& n^{\frac{d}{2\beta + d}} (\log n)^{4+3\tau} \nonumber  \\
\leq & n \varepsilon_{n}^2 \label{entropyupp} 
\end{align}
holds.

Now, we define
\begin{align*}
K_i ((f_0, \sigma_0^2), (f, \sigma^2)) =& \int \log (p_{(f_0, \sigma_0^2), i} / p_{(f, \sigma^2), i}) dP_{(f_0, \sigma_0^2), i},\\
V_{2,0;i} ((f_0, \sigma_0^2), (f, \sigma^2)) =& \int  \left( \log (p_{(f_0, \sigma_0^2), i} / p_{(f, \sigma^2), i} - K_i ((f_0, \sigma_0^2), (f, \sigma^2))\right)^2 dP_{(f_0, \sigma_0^2), i}
\end{align*}
and
\begin{align*}
B_{n}^{*}\left((f_0, \sigma_0^2), \varepsilon ; 2\right)=\big\{(f, \sigma^2) \in \mathcal{F}_n^{\prime}:  \frac{1}{n}\sum_{i=1}^n K_i \left((f_0, \sigma_0^2), (f, \sigma^2)\right) \leq \varepsilon^{2}, \\
\frac{1}{n}\sum_{i=1}^n V_{2, 0 ; i}\left((f_0, \sigma_0^2), (f, \sigma^2)\right) \leq \varepsilon^{2}\big\}.
\end{align*}
with notations on Lemma \ref{ghosal2007convergence}.
In addition, for $\varepsilon>0$, define
\begin{align*}
A_{n}^{*}\left((f_0 , \sigma_0^2), \varepsilon ; 2\right) := \Big\{ (f , \sigma^2) \in \mathcal{F}_n^{\prime} : \underset{i}{\max} |f(\bm{x}_i) - f_0(\bm{x}_i)| \leq \frac{\sigma_0 \varepsilon}{2}, \nonumber \\
\sigma^2 \in [\sigma_0^2 , (1+\varepsilon^2)\sigma_0^2] \Big\}.
\end{align*}
Then for every $f  \in A_{n}^{*}\left((f_0, \sigma_0^2), \varepsilon ; 2\right)$ and $i \in [n],$
\begin{align*}
K_i((f_0, \sigma_0^2), (f, \sigma^2)) = \frac{1}{2} \log \frac{\sigma^2}{\sigma_0^2} + \frac{\sigma_0^2 + (f_0(\bm{x}_i) - f(\bm{x}_i))^2}{2\sigma^2} - \frac{1}{2}
\leq \varepsilon^2
\end{align*} 
and
\begin{align*}
V_{2,0;i} ((f_0, \sigma_0^2), (f, \sigma^2)) =& \operatorname{Var}_{f_0, \sigma_0^2}\left(-\frac{(Y_i - f_0(\bm{x}_i))^2}{2\sigma_0^2}
+ \frac{(Y_i - f(\bm{x}_i))^2}{2\sigma^2}\right)\\
=& \operatorname{Var}_{f_0, \sigma_0^2}\left( -\frac{1}{2}(1-\frac{\sigma_0^2}{\sigma^2})Z_i^2 + \frac{\sigma_0 (f_0(\bm{x}_i) - f(\bm{x}_i)) Z_i}{\sigma^2}\right)
\leq \varepsilon^2
\end{align*}
where $Y_i \sim N(f_0(\bm{x}_i), \sigma_0^2)$ and $Z_i := \frac{Y_i - f_0(\bm{x}_i)}{\sigma_0} \sim N(0,1)$.
Hence, we can conclude that 
\begin{align}
    A_{n}^{*}\left((f_0, \sigma_0^2), \varepsilon_n ; 2\right) \subset B_{n}^{*}\left((f_0, \sigma_0^2), \varepsilon_n ; 2\right) \label{A_in_B}.
\end{align}

Now we define $v_n$ as
\begin{align*}
v_n := \left\lceil C_p \left(n^\frac{d}{2\beta + d} (\log n)^{-1} \right)^{1/2} \right\rceil,
\end{align*}
which is the width of the network in Lemma \ref{kohler2021rate}.
Let
$\bm{v}_n = (d, v_n, \dots, v_n, 1)^{\top} \in \mathbb{N}^{L_n + 2}$, and let
\begin{align*} 
    V_n := (d+1) v_n + (L_n -1) v_n (v_n + 1) + (v_n + 1) \label{V_n},
\end{align*}
which is the number of parameters in DNN with $(L_n , \bm{v}_n)$ architecture.
For any $\bm{\theta} \in \mathbb{R}^{T_n}$ and any $\bm{M}$ with $|\bm{m}^{(l)}|_0 = v_n$ for $l \in [L]$,
there exists $V_n$ dimension index vector $\bm{j}_{\bm{M}} \subset [T_n]$ such that 
$f_{ \bm{\psi}_{\bm{M}, \bm{\theta}}}^{\operatorname{DNN}} = f_{\bm{M}, \bm{\theta}}^{\operatorname{mDNN}}$
holds for $\bm{\psi}_{\bm{M}, \bm{\theta}} := (\bm{\theta})_{\bm{j}_{\bm{M}}} \in \mathbb{R}^{V_n}$.
In other words, $f_{ \bm{\psi}_{\bm{M}, \bm{\theta}}}^{\operatorname{DNN}}$ is the sub-network of
$f_{\bm{M}, \bm{\theta}}^{\operatorname{DNN}}$ consisting of the unmasked nodes.
Also, there exists $\hat{\bm{\psi}} \in [-n, n]^{V_n}$ such that
\begin{equation}
\left\|f_{\hat{\bm{\psi}} \ [-F,F] }^{\operatorname{DNN}} - f_{0}\right\|_{\infty} 
< \frac{\sigma_0 \varepsilon_n}{4} \label{thetahat_def}
\end{equation}
satisfies for large n by Lemma \ref{kohler2021rate}. 

With (\ref{A_in_B}), (\ref{thetahat_def}) and Lemma \ref{lemma_similar}, we can obtain the lower bound of $\Pi_n\left(B_{n}^{*}\left((f_0 , \sigma_0^2), \varepsilon_n ; 2\right)\right)$ by
\begin{align}
&\Pi_n\left( B_{n}^{*}\left((f_0 , \sigma_0^2), \varepsilon_n ; 2\right) \right) \nonumber\\
\geq &\Pi_n\left( A_{n}^{*}\left((f_0 , \sigma_0^2), \varepsilon_n ; 2\right) \right) \nonumber\\
= &\Pi_n \Big( \left\{ (\bm{M}, \bm{\theta})  : 
\underset{i}{\max}\ |f_{\bm{M}, \bm{\theta} \ [-F,F]}^{\operatorname{mDNN}}(\bm{x}_i) - f_0(\bm{x}_i)| \leq \frac{\sigma_0 \varepsilon_n}{2} \right\} \Big)
\Pi_n \Big( \sigma^2 \in [\sigma_0^2 , (1+\varepsilon_n^2)\sigma_0^2] \Big)\nonumber\\
\geq & \Pi_n \left( \left\{ (\bm{M}, \bm{\theta}) : |\bm{m}^{(1)}|_0=\cdots = |\bm{m}^{(L)}|_0 = v_n ,\  \underset{i}{\max}\ \left|f_{ \bm{\psi}_{\bm{M}, \bm{\theta}} \ [-F,F]}^{\operatorname{DNN}}(\bm{x}_i) - f_0(\bm{x}_i)\right| \leq \frac{\sigma_0 \varepsilon_n}{2} \right\} \right)
\nonumber\\
& \qquad \times \Pi_n \Big( \sigma^2 \in [\sigma_0^2 , (1+\varepsilon_n^2)\sigma_0^2] \Big) \nonumber\\
\geq & \Pi_n \left( \left\{ (\bm{M}, \bm{\theta}) : |\bm{m}^{(1)}|_0=\cdots = |\bm{m}^{(L)}|_0 = v_n ,\  \underset{i}{\max}\ \left|f_{ \bm{\psi}_{\bm{M}, \bm{\theta}} \ [-F,F]}^{\operatorname{DNN}}(\bm{x}_i) - f_{\hat{\bm{\psi}} \ [-F,F] }^{\operatorname{DNN}}(\bm{x}_i)\right| \leq \frac{\sigma_0 \varepsilon_n}{4} \right\} \right)
\nonumber\\
& \qquad \times \Pi_n \Big( \sigma^2 \in [\sigma_0^2 , (1+\varepsilon_n^2)\sigma_0^2] \Big) \nonumber\\
\geq & \Pi_n \left( \left\{ (\bm{M}, \bm{\theta}) : |\bm{m}^{(1)}|_0=\cdots = |\bm{m}^{(L)}|_0 = v_n ,\   \left|\bm{\psi}_{\bm{M}, \bm{\theta}}-\hat{\bm{\psi}}\right|_{\infty} 
\leq \frac{\sigma_0 \varepsilon_n}{4d v_n^{L_n} n^{L_n + 1} (L_n + 1) } \right\} \right)
\nonumber\\
& \qquad \times \Pi_n \Big( \sigma^2 \in [\sigma_0^2 , (1+\varepsilon_n^2)\sigma_0^2] \Big) \nonumber\\
\gtrsim& \exp \left(-(\lambda \log n)^5 v_n^2 L_n \right) 
\exp\left(-V_n (\log n)^{2}\right) \varepsilon_{n}^2  \nonumber\\
\gtrsim& \exp \left(-\lambda^5 C_p^2 C_L (\log n)^5 n^{\frac{d}{2\beta + d}}\right)
\exp \left(- C_p^2 C_L (\log n)^{2} n^{\frac{d}{2\beta+d} }\right) n^{-1}. \label{blower}
\end{align}
Hence we have that 
\begin{align} \label{lower}
\Pi_n\left( B_{n}^{*}\left(\eta_{0}, \varepsilon_n ; 2\right) \right) \geq e^{-n \varepsilon_{n}^2} 
\end{align}
for all but finite many $n$.
Hence by (\ref{entropyupp}), (\ref{lower}) and Lemma \ref{ghosal2007convergence}, the proof of (\ref{emp_h}) is done. $\quad \square$

\textbf{Step 2.}\label{st2}
Since (\ref{empirical result_2}) holds for arbitrary $\{ \boldsymbol{x}^{(n)} \}_{n=1}^{\infty}$,
\begin{equation} \label{empirical result_2_2}
\mathbb{E}_0 \left[ \Pi_n \left( (f, \sigma^2)^{\top} \in \mathcal{F}_n^{\prime} : || f - f_0 ||_{2, n} +
|\sigma^2 - \sigma_0^2 |> M_n \varepsilon_{n}  \middle\vert \mathcal{D}^{(n)}\right) \right] \rightarrow 0
\end{equation}
also holds.
Next, we will check the conditions in Lemma \ref{gyorfi2002distribution} for
\begin{align*}
\mathcal{G} :=& \left\{ g \ : \ g=(f_{\bm{M}, \bm{\theta} \ [-F,F]}^{\operatorname{mDNN}} - f_0)^2 , f_{\bm{M}, \bm{\theta} \ [-F,F]}^{\operatorname{mDNN}} \in \mathcal{F}^{\operatorname{mDNN}}(L_n, \bm{p}_n, F, s_n) \right\},\\
\kappa :=& \frac{1}{2} ,\ \alpha := \varepsilon_{n}^2 ,\ K_1 = K_2 = 4F^2 .
\end{align*}
First, it is easy to check $||g(\boldsymbol{x})||_{\infty} \leq 4F^2$ and $\mathbb{E}(g(\boldsymbol{X})^2) \leq 4F^2 \mathbb{E}(g(\boldsymbol{X}))$ for $g \in \mathcal{G}$.
Also, for every $f_{\bm{M}, \bm{\theta} \ [-F,F]}^{\operatorname{mDNN}} \in  \mathcal{F}^{\operatorname{mDNN}}(L_n, \bm{p}_n, F, s_n)$, 
there exist $\bm{\psi}_{\bm{M}, \bm{\theta}} \in \mathbb{R}^{S_n}$ and $f_{\bm{\psi}_{\bm{M}, \bm{\theta}} \ [-F,F]}^{\operatorname{DNN}} \in  \mathcal{F}^{\operatorname{DNN}}(L_n, \bm{s}_n, F)$ such that 
$f_{\bm{M}, \bm{\theta} \ [-F,F]}^{\operatorname{mDNN}} 
= f_{\bm{\psi}_{\bm{M}, \bm{\theta}} \ [-F,F]}^{\operatorname{DNN}}$ holds.
Since
\begin{equation*}
\left\| (f_{\bm{\psi}_1 \ [-F,F]}^{\operatorname{DNN}} - f_0)^2 
- (f_{\bm{\psi}_2 \ [-F,F]}^{\operatorname{DNN}} - f_0)^2 \right\|_{n, 1}  
\leq 4F \left\| f_{\bm{\psi}_1 \ [-F,F]}^{\operatorname{DNN}}  - f_{\bm{\psi}_2 \ [-F,F]}^{\operatorname{DNN}}\right\|_{n, 1} \label{step2_ineq1}
\end{equation*}
holds for $\bm{\psi}_1, \bm{\psi}_2 \in \mathbb{R}^{S_n}$, there exists $c_2>0$ such that
\begin{align*}
\mathcal{N} \left( u, \mathcal{G}, ||\cdot||_{n, 1} \right)
\leq& \mathcal{N}\left( \frac{u}{4F}, \mathcal{F}^{\operatorname{DNN}}(L_n, \bm{s}_n, F), ||\cdot||_{n, 1} \right) \\
\leq& \mathcal{M}\left( \frac{u}{4F}, \mathcal{F}^{\operatorname{DNN}}(L_n, \bm{s}_n, F), ||\cdot||_{n, 1} \right) \\
\leq& 3\left(\frac{16 e F^2}{u} \log \frac{24 e F^2}{u}\right)^{V_{\mathcal{F}^{\operatorname{DNN}}(L_n, \bm{s}_n, F)}^{+}}\\
\lesssim& n^{c_2 S_n L_n \log S_n}
\end{align*}
for $u \geq n^{-1}$ by Theorem 9.4 of \citet{gyorfi2002distribution} and Theorem 6 of \citet{harvey2017nearly}.
Hence for all $t \geq \frac{\varepsilon_n^2}{8}$,
\begin{align*}
\int_{\frac{\kappa(1-\kappa)t}{16 \max \left\{K_{1}, 2 K_{2}\right\}}}^{\sqrt{t}}  
\sqrt{\log \mathcal{N} \left( u, \mathcal{G}, ||\cdot||_{n, 1} \right)} d u 
\lesssim& \sqrt{t} \left( n^{\frac{d}{2\beta + d}} (\log n)^{4+3\tau} \right)^{\frac{1}{2}}\\
=& o\left( \frac{\sqrt{n} t/4}{96 \sqrt{2} \max \left\{K_1, 2K_2 \right\}} \right)
\end{align*}
holds.
To sum up, we conclude that
\begin{align}
\mathbf{P}\left\{\sup _{f \in \mathcal{F}^{\operatorname{DNN}}(L_n, \bm{p}_n^{\prime})} \frac{\left| ||f-f_0||_{2, \mathrm{P}_{X}}^2  - ||f-f_0||_{2, n}^2 \right|}{\varepsilon_{n}^2+||f-f_0||_{2, \mathrm{P}_{X}}^2}>\frac{1}{2}\right\}
\leq 60 \exp \left(-\frac{n \varepsilon_{n}^2 / 8}{128 \cdot 2304 \cdot 16F^4}\right)
\label{last}
\end{align}
holds for all but finite many $n$ by Lemma \ref{gyorfi2002distribution}. 
Hence by (\ref{empirical result_2_2}) and (\ref{last}), the proof of (\ref{main result_2_1}) is done. $\quad \square$

\textbf{Step 3.}\label{st3}
Since
\begin{align*}
    \left( \frac{1}{2ks} - \frac{1}{4 k^2 s^3} \right)e^{-ks^2} 
    \leq \int_{s}^{\infty} e^{-kt^2} dt
    \leq \frac{1}{2ks} e^{-ks^2}
\end{align*}
for any $k>0$ and $s>0$,
\begin{align*}
\Pi_n(|\bm{m}^{(l)}|_0 > s_n) 
\leq & \frac{\sum_{s = s_n + 1}^{p_n} e^{- (\lambda \log n)^5 {s}^2}}{e^{-(\lambda \log n)^5}}\\
\lesssim& e^{- (\lambda \log n)^5 {s_n}^2} e^{(\lambda \log n)^5} (\log n)^{-5} 
\end{align*}
holds for every $l \in [L]$. Then we have that
\begin{align}
\Pi_{n}\left(\mathcal{F}_n \setminus \mathcal{F}_{n}^{\prime}\right) 
=& \Pi_{n}\left( \bigcup_{l \in [L]} \{|\bm{m}^{(l)}|_0 > s_n\}\right) \nonumber\\
\leq& L_n \cdot \Pi_{n}\left( \{|\bm{m}^{(1)}|_0 > s_n\}\right) \nonumber\\
\lesssim& \exp\left(-\lambda^5 C_p^2 (\log n)^{5+3\tau} n^{\frac{d}{2\beta + d} }\right)
\exp \left( (\lambda \log n)^5 \right) (\log n)^{-4}. \label{last2}	
\end{align}
By (\ref{blower}) and (\ref{last2}), we obtain
\begin{align*}
&\frac{\Pi_{n}\left(\mathcal{F}_n \setminus \mathcal{F}_{n}^{\prime}\right)}
{\Pi_{n}\left(B_{n}^{\star}\left(\eta_{0}, \varepsilon_{n} ; 2\right)\right)e^{-2n\varepsilon_{n}^2}}\\ 
& \quad \lesssim \frac{\exp\left(-\lambda^5 C_p^2 (\log n)^{5+3\tau} n^{\frac{d}{2\beta + d} }\right) \exp \left( (\lambda \log n)^5 \right)}
{\exp(-\lambda^5 C_p^2 C_L (\log n)^5 n^{\frac{d}{2\beta + d}}) 
	\exp(-C_p^2 C_L (\log n)^2 n^{\frac{d}{2\beta + d}})n^{-1} 
	\exp(-2 (\log n)^{5+2\tau} n^{\frac{d}{2\beta+d}})}\\
& \quad =o(1),
\end{align*}
and thus we have
\begin{equation*}
\mathbb{E}_0 \left[ \Pi_n \left( (f, \sigma^2)^{\top} \in \left(\mathcal{F}_n \setminus \mathcal{F}_n^{\prime} \right) \middle\vert \mathcal{D}^{(n)}\right)\right] \rightarrow 0.
\end{equation*}
by apply Lemma 1 of \citet{ghosal2007convergence}, which completes the proof of (\ref{main result_2_2}).
$\quad \square$

\subsection{Proof of Theorem \ref{theorem2}}\label{appA_4}
The proof is a slight modification of the proof of Theorem \ref{theorem1}.
Let $\tau := \gamma - \frac{5}{2}$. 
It suffices to show the main statement for $0 <\tau < 1$.
Note that $\varepsilon_{n} = n^{-\frac{\beta}{2\beta+d}} (\log n)^{\gamma}  
= n^{-\frac{\beta}{2\beta+d}} (\log n)^{\tau + \frac{5}{2}}$.
For $C_p$ defined in Lemma \ref{kohler2021rate}, we define $s_n$ as
\begin{align*}
s_n := \left\lceil C_p \left(n^\frac{d}{2\beta + d} (\log n)^{3\tau} \right)^{1/2} \right\rceil
\end{align*}
and $\bm{s}_n = (d, s_n, \dots, s_n, 1)^{\top} \in \mathbb{N}^{L_n + 2}$. 
We define
\begin{align*} 
    T_n := (d+1) p_n + (L_n -1) p_n (p_n + 1) + (p_n + 1) 
\end{align*}
and
\begin{align*} 
    S_n := (d+1) s_n + (L_n -1) s_n (s_n + 1) + (s_n + 1) 
\end{align*}
as the numbers of parameters in the DNNs with $(L_n , \bm{p}_n)$ and $(L_n , \bm{s}_n)$ architectures, respectively.
Let $\mathcal{F}_n$ be the set of truncated mDNN with the $(L_n, \bm{p}_n)$ architecture,     
\begin{align*}
\mathcal{F}_n := \mathcal{F}^{\operatorname{mDNN}}(L_n, \bm{p}_n, F).
\end{align*}
Also, we let $\mathcal{F}_n^{\prime} \subset \mathcal{F}_n$ by
\begin{align*}
\mathcal{F}_n^{\prime} := \mathcal{F}^{\operatorname{mDNN}}(L_n, \bm{p}_n, F, s_n)
\end{align*}

In the \hyperref[st1-cl]{first step} of proof, we fix $\{ \boldsymbol{x}^{(n)} \}_{n=1}^{\infty}$ and show 
\begin{equation} \label{empirical result_2_cl}
\mathbb{E}_0 \left[ \Pi_n \left( f \in \mathcal{F}_n^{\prime} : || \phi \circ f - \phi \circ f_0 ||_{2, n} 
> M_n \varepsilon_{n}  \middle\vert \mathcal{D}^{(n)}\right) \middle\vert \boldsymbol{X}^{(n)} = \boldsymbol{x}^{(n)}\right] \rightarrow 0
\end{equation}
as $n \to \infty$ for any $M_n \to \infty$. 

In the \hyperref[st2]{second step} of proof, we extend empirical $L_2$ error to expected $L_2$ error. 
In other words, we show
\begin{equation} \label{main result_2_1_cl}
\mathbb{E}_0 \left[ \Pi_n \left( f \in \mathcal{F}_n^{\prime} : || \phi \circ f - \phi \circ f_0 ||_{2, \mathrm{P}_{X}} > M_n \varepsilon_{n}  \middle\vert \mathcal{D}^{(n)}\right)\right] \rightarrow 0
\end{equation}
as $n \to \infty$ for any $M_n \to \infty$.

Then, since we already showed 
\begin{equation} \label{main result_2_2_cl}
\mathbb{E}_0 \left[ \Pi_n \left( f \in \left(\mathcal{F}_n \setminus \mathcal{F}_n^{\prime} \right) \middle\vert \mathcal{D}^{(n)}\right)\right] \rightarrow 0
\end{equation}
in the \hyperref[st3]{last step} of the proof of Theorem \ref{theorem1}, the proof of Theorem \ref{theorem2} is done by (\ref{main result_2_1_cl}) and (\ref{main result_2_2_cl}).

\textbf{Step 1}\label{st1-cl}
For fixed $\{ \boldsymbol{x}^{(n)} \}_{n=1}^{\infty}$, let $P_{f, i}$ and $p_{f, i}$ be the probability measure and density corresponding to the Bernoulli distribution $\operatorname{Ber}(\phi \circ f(\bm{x}_i))$, respectively.
We define the semimetric $h_n^2$ on $\mathcal{F}_n^{\prime}$ as the average of the squares of the Hellinger distances for the distributions of the $n$ individual observations. In other words, for $f_1 , f_2 \in \mathcal{F}_n^{\prime}$,
\begin{align*}
    h_{n}^{2}\left(f_1, f_2\right) := \frac{1}{n} \sum_{i=1}^{n} \int\left(\sqrt{p_{f_1 , i }}-\sqrt{p_{f_2 , i}}\right)^{2} 
    d P_{f_1 , i}.
\end{align*}
Also, we define semimetric $d_n$ on $\mathcal{F}_n^{\prime}$ as 
\begin{align*}
d_n \left(f_1 , f_2 \right) := ||\phi \circ f_1 - \phi \circ f_2||_{2,n}.	
\end{align*}

Note that since $f_1 , f_2 \in \mathcal{F}_n^{\prime}$ are bounded,
\begin{align*}
    d_n^2 \left(f_1 , f_2 \right) 
    =& \frac{1}{n} \sum_{i=1}^{n} (\phi \circ f_1(\bm{x}_i)-\phi \circ f_2(\bm{x}_i))^2 \\
    = &  \frac{1}{n} \sum_{i=1}^{n} \left( \sqrt{\phi \circ f_1(\bm{x}_i)}-\sqrt{\phi \circ f_2(\bm{x}_i)}\right)^2 
    \left(\sqrt{\phi \circ f_1(\bm{x}_i)}+\sqrt{\phi \circ f_2(\bm{x}_i)}\right)^2  \\
    \lesssim & h_{n}^{2}\left(f_1, f_2\right)  
\end{align*}
and
\begin{align*}
    d_{n}^{2}\left(f_1, f_2\right) 
    = &  \frac{1}{2n} \sum_{i=1}^{n} \Bigg\{ \left( \sqrt{\phi \circ f_1(\bm{x}_i)}-\sqrt{\phi \circ f_2(\bm{x}_i)}\right)^2 
    \left(\sqrt{\phi \circ f_1(\bm{x}_i)}+\sqrt{\phi \circ f_2(\bm{x}_i)}\right)^2  \\
    & + \left( \sqrt{1-\phi \circ f_1(\bm{x}_i)}-\sqrt{1-\phi \circ f_2(\bm{x}_i)}\right)^2 
    \left(\sqrt{1-\phi \circ f_1(\bm{x}_i)}+\sqrt{1-\phi \circ f_2(\bm{x}_i)}\right)^2\Bigg\}\\
    & \gtrsim h_{n}^{2}\left(f_1, f_2\right)
\end{align*}
holds.
Hence, to prove (\ref{empirical result_2_cl}), it suffices to show 
\begin{align}
\mathbb{E}_0 \left[ \Pi_n \left( (f, \sigma^2)^{\top} \in \mathcal{F}_n^{\prime} : 
h_n \left(f, f_0\right)> M_n \varepsilon_{n}  \middle\vert \mathcal{D}^{(n)}\right) \middle\vert \boldsymbol{X}^{(n)} = \boldsymbol{x}^{(n)}\right] \rightarrow 0. \label{emp_h_cl}    
\end{align}

Since Hellinger distance possesses an exponentially powerful local test with respect to both the type-I and type-II errors (Lemma 2 of \citet{ghosal2007convergence}), we can use standard tools to establish concentration rates that we restate in Lemma \ref{ghosal2007convergence} for the convenience of the reader.

Since $\phi$ is L1-Lipschitz function, 
\begin{align}
\mathcal{N}\left(\varepsilon ,\mathcal{F}_{n}^{\prime}, h_{n}\right) 
\lesssim &  \mathcal{N}\left(\varepsilon ,\mathcal{F}_{n}^{\prime}, d_n \right) \nonumber\\
\leq &  \mathcal{N}
\left(\varepsilon ,\mathcal{F}_{n}^{\prime}, ||\cdot||_{2,n} \right).  \nonumber
\end{align}
For every $f_{\bm{M}, \bm{\theta} \ [-F,F]}^{\operatorname{mDNN}} \in  \mathcal{F}^{\operatorname{mDNN}}(L_n, \bm{p}_n, F, s_n)$ there exist $\bm{\psi}_{\bm{M}, \bm{\theta}} \in \mathbb{R}^{S_n}$ and $f_{\bm{\psi}_{\bm{M}, \bm{\theta}} \ [-F,F]}^{\operatorname{DNN}} \in  \mathcal{F}^{\operatorname{DNN}}(L_n, \bm{s}_n, F)$ such that 
$f_{\bm{M}, \bm{\theta} \ [-F,F]}^{\operatorname{mDNN}} 
= f_{\bm{\psi}_{\bm{M}, \bm{\theta}} \ [-F,F]}^{\operatorname{DNN}}$ holds. So we get
\begin{align}
\mathcal{N}
\left(\varepsilon ,\mathcal{F}_{n}^{\prime}, ||\cdot||_{2,n} \right) 
= &
\mathcal{N}
\left(\varepsilon ,\mathcal{F}^{\operatorname{DNN}}(L_n, \bm{s}_n, F), ||\cdot||_{2,n} \right) \nonumber\\
\leq & \mathcal{M}
\left(\varepsilon ,\mathcal{F}^{\operatorname{DNN}}(L_n, \bm{s}_n, F), ||\cdot||_{2,n} \right). \nonumber
\end{align}
Since functions in $\mathcal{F}^{DNN}(L_n, \bm{s}_n, F)$ are bounded by $[-F, F]$, there exists $c_3>0$ such that
\begin{align*}
    \mathcal{M}
\left(\varepsilon ,\mathcal{F}^{\operatorname{DNN}}(L_n, \bm{s}_n, F), ||\cdot||_{2,n} \right)
\leq& 3\left(\frac{8 e F^2}{\epsilon^{2}} \log \frac{12 e F^2}{\epsilon^{2}}\right)^{V_{\mathcal{F}^{\operatorname{DNN}}(L_n, \bm{s}_n,F)}^{+}}\\
\leq& 3\left(\frac{8 e F^2}{\epsilon^{2}} \log \frac{12 e F^2}{\epsilon^{2}}\right)
^{c_3 L_n S_n \log S_n}	
\end{align*}
holds for every $\varepsilon > 0$ by Theorem 9.4 of \citet{gyorfi2002distribution} and Theorem 6 of \citet{harvey2017nearly}.
To sum up,
\begin{align} 
\sup _{\varepsilon>\varepsilon_{n}} \log \mathcal{N} \Big(\varepsilon, \mathcal{F}_{n}^{\prime}, h_{n}\Big)
\lesssim& L_n S_n \log S_n \log n \nonumber\\
\asymp& n^{\frac{d}{2\beta + d}} (\log n)^{4+3\tau} \nonumber  \\
\leq & n \varepsilon_{n}^2 \label{entropyupp_cl} 
\end{align}
holds.

Now, we define
\begin{align*}
K_i (f_0, f) =& \int \log (p_{f_0 , i} / p_{f , i}) dP_{f_0 , i},\\
V_{2,0;i} (f_0, f) =& \int  \left( \log (p_{f_0 , i} / p_{f , i}) - K_i (f_0, f)\right)^2 dP_{f_0 , i}, 
\end{align*}
and
\begin{align*}
B_{n}^{*}\left(f_0, \varepsilon ; 2\right)=\big\{f \in \mathcal{F}_n^{\prime}:  \frac{1}{n}\sum_{i=1}^n K_i \left(f_0, f\right) \leq \varepsilon^{2}, \\
\frac{1}{n}\sum_{i=1}^n V_{2, 0 ; i}\left(f_0, f\right) \leq \varepsilon^{2}\big\}.
\end{align*}
For $\varepsilon>0$, define
\begin{align*}
A_{n}^{*}\left(f_0 , \varepsilon ; 2\right) := \Big\{ f \in \mathcal{F}_n^{\prime} : \underset{i}{\max} |f(\bm{x}_i) - f_0(\bm{x}_i)| \leq  \varepsilon \Big\}.
\end{align*}
Then by Lemma 3.2 of \citet{van2008rates}, we can get 
\begin{align*}
    A_{n}^{*}\left(f_{0}, \varepsilon_n ; 2\right) \subset B_{n}^{*}\left(f_{0}, \varepsilon_n ; 2\right).
\end{align*}
Now by following the proof of (\ref{blower}), we obtain 
\begin{align} \label{lower_cl}
\Pi_n\left( B_{n}^{*}\left(\eta_{0}, \varepsilon_n ; 2\right) \right) \geq e^{-n \varepsilon_{n}^2} 
\end{align}
for all but finite many $n$.
Hence by (\ref{entropyupp_cl}), (\ref{lower_cl}) and Lemma \ref{ghosal2007convergence}, the proof of (\ref{emp_h_cl}) is done. $\quad \square$

\textbf{Step 2.}\label{st2_cl}
Since (\ref{empirical result_2_cl}) holds for arbitrary $\{ \boldsymbol{x}^{(n)} \}_{n=1}^{\infty}$,
\begin{equation} \label{empirical result_2_2_cl}
\mathbb{E}_0 \left[ \Pi_n \left( f \in \mathcal{F}_n^{\prime} : || \phi \circ f - \phi \circ f_0 ||_{2, n} > M_n \varepsilon_{n}  \middle\vert \mathcal{D}^{(n)}\right) \right] \rightarrow 0
\end{equation}
also holds.
Next, we will check the conditions in Lemma \ref{gyorfi2002distribution} for
\begin{align*}
\mathcal{G} :=& \left\{ g \ : \ g=(\phi \circ f_{\bm{M}, \bm{\theta} \ [-F,F]}^{\operatorname{mDNN}} - \phi \circ f_0)^2 , f_{\bm{M}, \bm{\theta} \ [-F,F]}^{\operatorname{mDNN}} \in \mathcal{F}^{\operatorname{mDNN}}(L_n, \bm{p}_n, F, s_n) \right\}\\
\kappa :=& \frac{1}{2} ,\ \alpha := \varepsilon_{n}^2 ,\ K_1 = K_2 = 1 .
\end{align*}
First, it is easy to check $||g(\boldsymbol{x})||_{\infty} \leq 1$ and $E(g(\boldsymbol{X})^2) \leq E(g(\boldsymbol{X}))$ for $g \in \mathcal{G}$.
Also, for every $f_{\bm{M}, \bm{\theta} \ [-F,F]}^{\operatorname{mDNN}} \in  \mathcal{F}^{\operatorname{mDNN}}(L_n, \bm{p}_n, F, s_n)$, 
there exist $\bm{\psi}_{\bm{M}, \bm{\theta}} \in \mathbb{R}^{S_n}$ and $f_{\bm{\psi}_{\bm{M}, \bm{\theta}} \ [-F,F]}^{\operatorname{DNN}} \in  \mathcal{F}^{\operatorname{DNN}}(L_n, \bm{s}_n, F)$ such that 
$f_{\bm{M}, \bm{\theta} \ [-F,F]}^{\operatorname{mDNN}} 
= f_{\bm{\psi}_{\bm{M}, \bm{\theta}} \ [-F,F]}^{\operatorname{DNN}}$ holds.
Since
\begin{align*}
\Big\| (\phi \circ f_{\bm{\psi}_1 \ [-F,F]}^{\operatorname{DNN}} - \phi \circ f_0)^2 
- (\phi \circ f_{\bm{\psi}_2 \ [-F,F]}^{\operatorname{DNN}} - \phi \circ f_0)^2 \Big\|_{n, 1}  \\
\leq 4 \left\|\phi \circ f_{\bm{\psi}_1 \ [-F,F]}^{\operatorname{DNN}}  - \phi \circ f_{\bm{\psi}_2 \ [-F,F]}^{\operatorname{DNN}}\right\|_{n, 1} \\
\leq 4 \left\|  f_{\bm{\psi}_1 \ [-F,F]}^{\operatorname{DNN}} - f_{\bm{\psi}_2 \ [-F,F]}^{\operatorname{DNN}}\right\|_{n, 1}
\end{align*}
holds for $\bm{\psi}_1, \bm{\psi}_2 \in \mathbb{R}^{S_n}$, there exists $c_4>0$ such that
\begin{align*}
\mathcal{N} \left( u, \mathcal{G}, ||\cdot||_{n, 1} \right)
\leq& \mathcal{N}\left( \frac{u}{4}, \mathcal{F}^{\operatorname{DNN}}(L_n, \bm{s}_n, F), ||\cdot||_{n, 1} \right) \\
\leq& \mathcal{M}\left( \frac{u}{4}, \mathcal{F}^{\operatorname{DNN}}(L_n, \bm{s}_n, F), ||\cdot||_{n, 1} \right) \\
\leq& 3\left(\frac{16 eF}{u} \log \frac{24 eF}{u}\right)^{V_{\mathcal{F}^{\operatorname{DNN}}(L_n, \bm{s}_n, F)}^{+}}\\
\lesssim& n^{c_4 S_n L_n \log S_n}
\end{align*}
for $u \geq n^{-1}$ by Theorem 9.4 in \citet{gyorfi2002distribution} and Theorem 6 of \citet{harvey2017nearly}.
Hence for all $t \geq \frac{\varepsilon_n^2}{8}$,
\begin{align*}
\int_{\frac{\kappa(1-\kappa)t}{16 \max \left\{K_{1}, 2 K_{2}\right\}}}^{\sqrt{t}}  
\sqrt{\log \mathcal{N} \left( u, \mathcal{G}, ||\cdot||_{n, 1} \right)} d u 
\lesssim& \sqrt{t} \left( n^{\frac{d}{2\beta + d}} (\log n)^{4+3\tau} \right)^{\frac{1}{2}}\\
=& o\left( \frac{\sqrt{n} t/4}{96 \sqrt{2} \max \left\{K_1, 2K_2 \right\}} \right)
\end{align*}
holds.
To sum up, we conclude that
\begin{align}
\mathbf{P}\left\{\sup _{f \in \mathcal{F}^{\operatorname{DNN}}(L_n, \bm{p}_n^{\prime})} \frac{\left| ||f-f_0||_{2, \mathrm{P}_{X}}^2  - ||f-f_0||_{2, n}^2 \right|}{\varepsilon_{n}^2+||f-f_0||_{2, \mathrm{P}_{X}}^2}>\frac{1}{2}\right\}
\leq 60 \exp \left(-\frac{n \varepsilon_{n} / 8}{128 \cdot 2304}\right).
\label{last3}
\end{align}
holds for all but finite many $n$ by Lemma \ref{gyorfi2002distribution}. 
Hence by (\ref{empirical result_2_2_cl}) and (\ref{last3}), the proof of (\ref{main result_2_1_cl}) is done. $\quad \square$

\subsection{Proof of Theorem \ref{cor1}}\label{appA_5}
Theorem holds by (\ref{main result_2_2}) for nonparametric regression (\ref{reg})
and by (\ref{main result_2_2_cl}) for binary classification (\ref{cla}). 
$\quad \square$

\newpage
\section{Theoretical results for hierarchical composition functions} \label{App_comp}
In Section \ref{sec4}, we only describe the results of Holder continuous functions for the sake of simplicity.
However, theoretical results for mBNN can be easily extended from the Holder continuous functions to the hierarchical compositional structured function considered in \citet{kohler2021rate}.
First, we define the function class of hierarchical composition functions.
\begin{definition}[hierarchical composition function]
Let $f : \mathbb{R}^d \to \mathbb{R}$ and $\mathcal{P}$ be a subset of $(0,\infty) \times \mathbb{N}$.
\begin{itemize}
\item[a)] We say that $f$ satisfies a hierarchical composition model of level 0 with order and
smoothness constraint $\mathcal{P}$, if there exists a $K \in [d]$ such that
$$f(\bm{x})=x^{(K)} \quad \text { for all } \bm{x}=\left(x^{(1)}, \ldots, x^{(d)}\right)^{\top} \in \mathbb{R}^d.$$
\item[b)] We say that $f$ satisfies a hierarchical composition model of level $q + 1$ with order and smoothness constraint $\mathcal{P}$, if there exist $(\beta, K) \in \mathcal{P}$, $g : \mathbb{R}^K \to \mathbb{R}$ and $f_1, \ldots, f_K: \mathbb{R}^d \to \mathbb{R}$ such that $g \in \mathcal{H}_K^\beta$, $f_1, \ldots, f_K$ satisfy a hierarchical
composition model of level $q$ with order and smoothness constraint $\mathcal{P}$ and
$$f(\bm{x})=g\left(f_1(\bm{x}), \ldots, f_K(\bm{x})\right) \quad \text { for all } \bm{x} \in \mathbb{R}^d.$$
\item[c)] For $q \in \mathbb{N}$ and $\mathcal{P} \subset (0,\infty) \times \mathbb{N}$, consider the hierarchical composition function of level $q$ with constraint $\mathcal{P}$, where for each function $g$ in the definition can be of different smoothness $p_g=q_g+s_g$ ($q_g \in \mathbb{N}_0$ and $s_g \in(0,1]$) and of different input dimension $K_g$, where $\left(p_g, K_g\right) \in \mathcal{P}$. 
Assume the maximal input dimension and the maximal smoothness
of $g$ are bounded.
Assume that each $g$ is Lipschitz continuous and all partial derivatives of order less than or equal to $q_g$ are bounded. 
We define the set of functions that satisfy these conditions as $\mathcal{H}(q,\mathcal{P})$.
\end{itemize}
\end{definition}

We describe a lemma that approximates hierarchical composition functions as DNN functions.
\begin{lemma}[Theorem 3 of \citet{kohler2021rate}]\label{kohler2021rate_comp}	
	There exists $C_L>0$ and $C_p>0$ only depending on $d$ such that for every $f_0 \in \mathcal{H}(q,\mathcal{P})$ with $\left\|f_0\right\|_{\infty} \leq F$, 
	there exist $f_{\hat{\bm{\theta}} \ [-F, F]}^{\operatorname{DNN}} \in \mathcal{F}^{\operatorname{DNN}}(L_n, \bm{v}_n, F)$ with
	\begin{align*}
	L_n :=& \left\lceil C_L \log n \right\rceil, \\
	v_n :=& \left\lceil C_p (\log n)^{-1/2} \max_{(\beta,K)\in \mathcal{P}} n^\frac{K}{2(2\beta + K)}  \right\rceil, \\
	\bm{v}_n :=& (d, v_n, ... , v_n, 1)^{\top} \in \mathbb{N}^{L_n + 2},
	\end{align*}
	such that
	\begin{align*}
	\left\|f_{\hat{\bm{\theta}} \ [-F, F]}^{\operatorname{DNN}}-f_{0}\right\|_{\infty} \lesssim \max_{(\beta,K)\in \mathcal{P}} n^{-\frac{\beta}{2\beta + K}} \log n 
	\end{align*}
	and
	\begin{align} 
    |\hat{\bm{\theta}}|_{\infty} \leq n \label{upper_param_comp}
	\end{align}
	hold.
\end{lemma}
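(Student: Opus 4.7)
The statement is cited verbatim from Theorem 3 of \citet{kohler2021rate}, so the plan is to outline how their argument specializes to yield the three conclusions (depth $L_n$, width $v_n$, sup-norm approximation, parameter bound), using Lemma \ref{kohler2021rate} (the Hölder version) as the base case.

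First I would do induction on the composition level $q$. The level-$0$ case is trivial since a coordinate projection $\bm{x}\mapsto x^{(K)}$ can be realized exactly by a single-layer ReLU network with parameters in $\{-1,0,1\}$ (using the identity trick $x=\rho(x)-\rho(-x)$). For the inductive step, write $f_0(\bm{x})=g(f_1(\bm{x}),\ldots,f_K(\bm{x}))$ with $g\in\mathcal{H}_K^{\beta}$ and each $f_j$ of level $q$. Apply Lemma \ref{kohler2021rate} (restated with input dimension $K$ rather than $d$) to approximate $g$ by a DNN $\tilde g$ with depth $O(\log n)$ and width $O\bigl((\log n)^{-1/2}n^{K/(2(2\beta+K))}\bigr)$ to sup-norm accuracy $n^{-\beta/(2\beta+K)}\log n$ with weights bounded by $n$; apply the induction hypothesis to obtain networks $\tilde f_j$ for the $f_j$. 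Compose $\tilde g\circ(\tilde f_1,\ldots,\tilde f_K)$.

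The error analysis then uses that $g$ is Lipschitz (guaranteed by the definition of $\mathcal{H}(q,\mathcal P)$), so
\begin{align*}
\bigl\|\tilde g(\tilde f_1,\ldots,\tilde f_K)-g(f_1,\ldots,f_K)\bigr\|_\infty
&\le \|\tilde g-g\|_\infty + \mathrm{Lip}(g)\max_j\|\tilde f_j-f_j\|_\infty.
\end{align*}
Iterating this bound $q$ times (with $q$ fixed and $|\mathcal P|$ finite) produces the advertised rate $\max_{(\beta,K)\in\mathcal P} n^{-\beta/(2\beta+K)}\log n$, the prefactor being a fixed function of the Lipschitz constants and the tree size.

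The main obstacle, and the reason this is nontrivial rather than an immediate corollary, is presenting the composed network with a \emph{uniform} architecture: a common depth $L_n=\lceil C_L\log n\rceil$ and common width $v_n$ at every hidden layer, independent of which node of the composition tree each layer is implementing. I would resolve this by (i) padding shallower subtrees with identity layers $x\mapsto\rho(x)-\rho(-x)$ to equalise the depth of all parallel branches, and (ii) running the $K$ parallel approximators $\tilde f_1,\ldots,\tilde f_K$ as block-diagonal sub-networks inside a single wider network whose total width is the sum (bounded by the stated $v_n$ after taking the worst-case $(\beta,K)\in\mathcal P$ and absorbing the finite multiplicative factor into $C_p$). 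Both tricks preserve the parameter bound $|\hat{\bm\theta}|_\infty\le n$ because the added weights are $\pm 1$, and they preserve the approximation error because identity insertions are exact; the constants $C_L,C_p$ then arise as $q$ times the constants coming from Lemma \ref{kohler2021rate} maximised over the finite set $\mathcal P$.
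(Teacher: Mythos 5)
The paper does not prove this statement at all: it is imported verbatim as Theorem~3 of Kohler and Langer (2021), exactly as Lemma~\ref{kohler2021rate} imports their Theorem~2, and the only authorial comment in the paper on either lemma is that the weight bound $|\hat{\bm\theta}|_\infty\le n$ is not in the cited statement but ``can be easily confirmed by following their proof.'' So there is no in-paper argument to compare against; what you have written is a reconstruction of the external proof. As such a reconstruction it is sound and follows the standard route (and essentially the route of the cited work): induction on the composition level, approximation of each component $g$ by the Hölder result with input dimension $K$ in place of $d$, error propagation via the Lipschitz constants that the definition of $\mathcal{H}(q,\mathcal{P})$ explicitly assumes, and synchronization of the architecture by identity-padding $x\mapsto\rho(x)-\rho(-x)$ and block-diagonal parallelization, with the $\pm1$ padding weights preserving $|\hat{\bm\theta}|_\infty\le n$. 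Two points you gloss over are worth flagging. First, when you bound $\|\tilde g(\tilde f_1,\ldots,\tilde f_K)-g(f_1,\ldots,f_K)\|_\infty$ by $\|\tilde g-g\|_\infty+\mathrm{Lip}(g)\max_j\|\tilde f_j-f_j\|_\infty$, the first term is evaluated at the point $(\tilde f_1(\bm x),\ldots,\tilde f_K(\bm x))$, which may leave the cube on which the approximation of $g$ is constructed; one must either build the approximant of $g$ on a slightly enlarged cube or truncate the inner networks, a standard but necessary fix. Second, your own accounting makes $C_L$ and $C_p$ depend on $q$, $\mathcal{P}$, and the tree size, whereas the lemma as stated claims dependence on $d$ only; that looseness is inherited from the paper's transcription of the cited theorem rather than introduced by you, but you should not claim your constants match the stated dependence without absorbing $q$ and the bounds on $K$ and the smoothness (which the definition of $\mathcal{H}(q,\mathcal{P})$ does assume are bounded) into them.
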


In Section \ref{sec4}, we assume the true regression function (or the logit of the true conditional probability) $f_0$ belongs to the $\beta$-Holder class.
If $d$ is relatively large compared to $\beta$, concentration rate $n^{-\frac{\beta}{(2\beta + d)}}$ can be extremely slow.
However, mBNN can avoid curse of dimensionality for hierarchical composition function, which is demonstrated in the following theorem.
\begin{theorem}[Theoretical results for hierarchical composition functions]\label{theorem_comp}
    For $f_0 \in \mathcal{H}(q,\mathcal{P})$, Theorem \ref{theorem1} and \ref{theorem2} hold  with the concentration rate 
    $$\varepsilon_{n}= \max_{(\beta,K)\in \mathcal{P}} n^{-\beta /(2 \beta+K)} \log ^{\gamma}(n)$$ 
    for $\gamma > \frac{5}{2}$.
    Furthermore, Theorem \ref{cor1} holds with
    $$
    \mathfrak{s}_n := \left\lceil C_p (\log n)^{1/2}  \max_{(\beta,K)\in \mathcal{P}}n^\frac{K}{2(2\beta + K)}  \right\rceil
    $$   
\end{theorem}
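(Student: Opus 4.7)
The plan is to follow the three-step structure of the proofs of Theorems \ref{theorem1} and \ref{theorem2} line by line, substituting the hierarchical-composition approximation Lemma \ref{kohler2021rate_comp} for the Hölder approximation Lemma \ref{kohler2021rate}. Write $\alpha_{n} := \max_{(\beta,K)\in\mathcal{P}} n^{-\beta/(2\beta+K)}$ and $\omega_{n} := \max_{(\beta,K)\in\mathcal{P}} n^{K/(2(2\beta+K))}$ for the target rate (without logs) and the target width scale. With $\tau:=\gamma-\tfrac{5}{2}\in(0,1)$, redefine the intermediate sieve width as $s_{n}:=\lceil C_{p}(\log n)^{3\tau/2}\omega_{n}\rceil$ and the approximation width as $v_{n}:=\lceil C_{p}(\log n)^{-1/2}\omega_{n}\rceil$, exactly as in Lemma \ref{kohler2021rate_comp}. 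Let $S_{n}$ and $V_{n}$ be the corresponding parameter counts and define $\mathcal{F}_{n}$ and $\mathcal{F}_{n}^{\prime}$ as in the proofs of Theorems \ref{theorem1} and \ref{theorem2} using these new widths. Note that $n\varepsilon_{n}^{2}\asymp\omega_{n}^{2}(\log n)^{2\gamma}$ and $L_{n}S_{n}\log S_{n}\asymp\omega_{n}^{2}(\log n)^{3\tau+2}\log\log n$, which still gives $L_{n}S_{n}\log S_{n}\log n=o(n\varepsilon_{n}^{2})$ since $2\gamma=2\tau+5>3\tau+3$ for $\tau<1$.

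First I would redo Step 1 (empirical Hellinger concentration) exactly as before. The covering number bound (\ref{entropyupp}) goes through with the new $S_{n}$ because only the rate $\varepsilon_{n}$ and the width $s_{n}$ are replaced. For the prior mass lower bound (\ref{blower}), Lemma \ref{kohler2021rate_comp} supplies a DNN parameter vector $\hat{\bm{\psi}}\in[-n,n]^{V_{n}}$ with $\|f_{\hat{\bm{\psi}}[-F,F]}^{\operatorname{DNN}}-f_{0}\|_{\infty}\lesssim\alpha_{n}\log n\le\sigma_{0}\varepsilon_{n}/4$ for large $n$; the rest of the argument (choosing masking vectors so that $f_{\bm{M},\bm{\theta}[-F,F]}^{\operatorname{mDNN}}$ matches this DNN via the sub-vector $\bm{\psi}_{\bm{M},\bm{\theta}}$ and then bounding $|\bm{\psi}_{\bm{M},\bm{\theta}}-\hat{\bm{\psi}}|_{\infty}$ using Lemma \ref{lemma_similar}) is unchanged. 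This yields $\Pi_{n}(B_{n}^{*}(\eta_{0},\varepsilon_{n};2))\ge e^{-n\varepsilon_{n}^{2}}$ for $n$ large, and Lemma \ref{ghosal2007convergence} applies verbatim.

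Second, Step 2 (extending from empirical to population $L_{2}$ distance via Lemma \ref{gyorfi2002distribution}) depends only on the covering numbers of $\mathcal{F}_{n}^{\prime}$ (which I already bounded) and on $n\varepsilon_{n}^{2}\to\infty$, so it transfers without change. Step 3 (controlling the complement $\mathcal{F}_{n}\setminus\mathcal{F}_{n}^{\prime}$) also transfers: the bound $\Pi_{n}(|\bm{m}^{(l)}|_{0}>s_{n})\lesssim e^{-(\lambda\log n)^{5}s_{n}^{2}}e^{(\lambda\log n)^{5}}(\log n)^{-5}$ gives a tail that is still dominated by $\Pi_{n}(B_{n}^{*}(\eta_{0},\varepsilon_{n};2))e^{-2n\varepsilon_{n}^{2}}$, because $(\lambda\log n)^{5}s_{n}^{2}\asymp(\log n)^{5+3\tau}\omega_{n}^{2}$ dominates all the other exponents appearing in the denominator, just as in the original proof. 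Classification (Theorem \ref{theorem2}) requires only the cosmetic changes carried out in its original proof (Bernoulli likelihood, Lemma 3.2 of \citet{van2008rates} to relate Hellinger to $\|\phi\circ f-\phi\circ f_{0}\|_{2,n}$); these are independent of which approximation lemma is used.

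Finally, Theorem \ref{cor1} for hierarchical composition follows as a by-product of Step 3, because the event $\{\max_{l}|\bm{m}^{(l)}|_{0}>\mathfrak{s}_{n}\}$ with $\mathfrak{s}_{n}:=\lceil C_{p}(\log n)^{1/2}\omega_{n}\rceil$ is handled by the same Gaussian-type tail bound for the prior on $s^{(l)}$, now giving a posterior mass tending to zero. The only place where I expect to have to be careful is verifying the log-exponent bookkeeping: one must check that $3\tau+3<2\gamma=2\tau+5$ in the entropy-versus-rate comparison and that the tail exponent $(\log n)^{5+3\tau}\omega_{n}^{2}$ strictly dominates the prior-mass exponents $\lambda^{5}C_{p}^{2}C_{L}(\log n)^{5}\omega_{n}^{2}$, $C_{p}^{2}C_{L}(\log n)^{2}\omega_{n}^{2}$, and $2(\log n)^{5+2\tau}\omega_{n}^{2}$. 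This is the main (but routine) obstacle; everything else is a direct substitution.
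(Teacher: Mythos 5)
Your proposal is correct and follows essentially the same route as the paper, whose own proof simply says to rerun Appendices A.3–A.5 with $\varepsilon_n$ and $s_n$ redefined via $\max_{(\beta,K)\in\mathcal{P}}$ and with Lemma \ref{kohler2021rate_comp} replacing Lemma \ref{kohler2021rate}; you carry out exactly that substitution and verify the same bookkeeping. (One immaterial slip: $\log S_n\asymp\log n$ rather than $\log\log n$, so the entropy term is $\omega_n^2(\log n)^{4+3\tau}$ and the relevant comparison is $4+3\tau<5+2\tau$, which still holds for $\tau<1$.)
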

\begin{proof}
    We can follow every step in Appendix \ref{appA_3}, \ref{appA_4} and \ref{appA_5} by simply changing the definition of $\varepsilon_{n}$ to
    $$\varepsilon_{n} = \max_{(\beta,K)\in \mathcal{P}} n^{-\beta /(2 \beta+K)} (\log n)^{\gamma}   
= \max_{(\beta,K)\in \mathcal{P}} n^{-\beta /(2 \beta+K)} (\log n)^{\tau + \frac{5}{2}}$$
and definition of $s_{n}$ to
\begin{align*}
s_n := \left\lceil C_p \left(\max_{(\beta,K)\in \mathcal{P}} n^\frac{K}{2\beta + K} (\log n)^{3\tau} \right)^{1/2} \right\rceil.
\end{align*}
\end{proof}

\newpage

\section{More detail for the mBNN}\label{appB}

\subsection{MCMC algorithm} \label{appB_1}
First we keep $\bm{M}^{(t)}$ fixed and update $\bm{\theta}^{(t)}$ (and ${\sigma^2}^{(t)}$) using existing MCMC algorithm, then we update $\bm{M}^{(t)}$ using MH algorithm.
In practical, we can update $\bm{M}^{(t)}$ using only the data in single mini-batch for large scale dataset.
Algorithm \ref{algMC} is a brief summary of our algorithm.
\begin{algorithm}[h]
	\caption{Proposed MCMC algorithm} \label{algMC}
	\textbf{INPUT:} $T_{\operatorname{total}}, T_{\operatorname{MH}}, n_{\operatorname{MH}} \in \mathbb{N}$
	\begin{algorithmic}[1]
	    \FOR{$t=1$ to $T_{\operatorname{total}}$}
		\STATE Update $\bm{\theta}^{(t)}$ (and ${\sigma^2}^{(t)}$) using existing MCMC algorithm (e.g. HMC, SGLD).
		\IF{$t \ \% \ T_{\operatorname{MH}}= 0$}
		\STATE Update $\bm{M}^{(t)}$ using Algorithm \ref{algMH}, $n_{\operatorname{MH}}$ times.
		\ENDIF
		\ENDFOR
	\end{algorithmic}
\end{algorithm}

\subsection{Masked Bayesian CNN} \label{appB_2}
First, we introduce how to apply masking variables for masked Bayesian CNN.
Most of CNN architectures consist of a mixture of sequences of convolution layer and RELU activation function.
For a given CNN, the corresponding masked CNN is constructed by simply adding masking parameters to the CNN model.
For the $l$-th convolution layer, the masked CNN screens the channels using binary masking vector whose dimension is equal to the number of channels in $l$-th layer.
Figure \ref{figmcnn} is a illustration of the masked convolution layer.

\begin{figure*}[h]
\centering
\subfigure[Original convolution layer]{\includegraphics[width=0.45\linewidth]{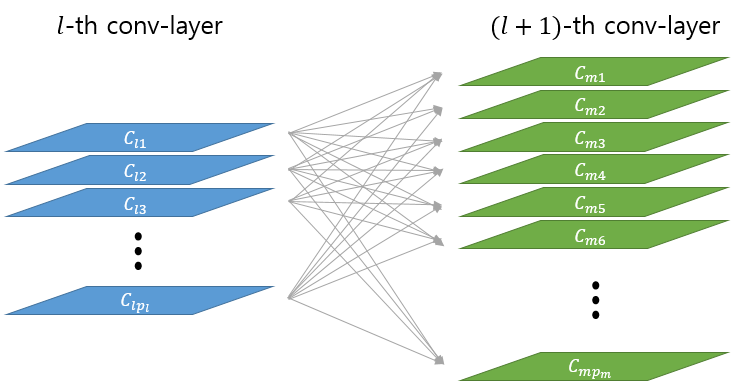}}
\hfill
\subfigure[Convolution layer with masking vector]{\includegraphics[width=0.45\linewidth]{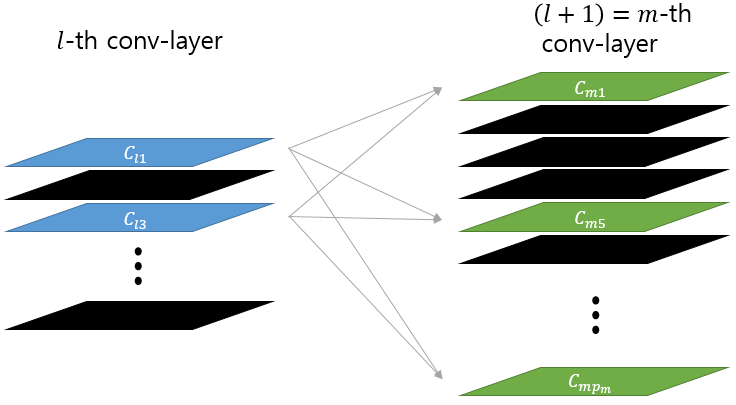}}
\caption{\textbf{Convolution layer and masked convolution layer.}  Masking vector $\bm{m}^{(l)}$ screens channels of the $l$-th layer. When $\left(\bm{m}^{(l)}\right)_j = 0,$ 
the $j$-th channel of the $l$-th layer becomes inactive.
Blue and green rectangles represent input and output channels, and black rectangles represent masked channels. 
} \label{figmcnn}
\end{figure*}

As we did in Section \ref{sec_3.2}, we can simply implement this by converting RELU activation functions into masked-RELU activation functions. 
For instance, since Resnet18 structure has 17 RELU activation functions, we use 17 masking vectors whose dimensions are equal to the number of channels of the corresponding hidden layer.

Then, the prior $\Pi_n$ given as (\ref{prior_s}), (\ref{prior_m}) and (\ref{prior_theta}) is adopted on parameters $\bm{M}$ and $\bm{\theta}$, where $\bm{M}$ is the concatenate of the all masking vectors and $\bm{\theta}$ is the concatenation of the filters, weight matrices and bias vectors in the CNN model.

\newpage
\section{Detailed settings for the experiments} \label{App_setting}
\renewcommand{\theequation}{C.\arabic{equation}}
\subsection{Noisy polynomial regression}
\citet{hernandez2015probabilistic} considers a noisy polynomial regression problem, where
the input $X$ and output $Y$ are sampled from
\begin{align}
\begin{split} \label{poly_setting}
X \sim& \operatorname{Uniform}(-4,4), \\
Y =& X^3 + \epsilon, \qquad \qquad \epsilon \sim N(0,9). \\
\end{split}
\end{align}
Following their setting, We first generate 20 training examples from (\ref{poly_setting}), and compare the predictive distribution of the mBNN with that of BNN. 
For the both methods, the two hidden layer MLP with the layer sizes (1000,1000) and Cauchy prior with scale 0.3 are used.
The prediction distributions are approximated by 1000 MCMC samples which are obtained by HMC with the step size $10^{-2}$ and NUTs \cite{hoffman2014no}, 300 burn-in samples and thinning interval 10.
For the mBNN, we use $N_{\max} = 3$, $\lambda=0.1$, $T_{MH}=1$ and $n_{MH}=2$.

After we obtain the predictive distributions,  we generate 1000 test examples from (\ref{poly_setting}).
For each method and test examples, we obtain the 95\% predictive interval of the predictive distribution.
That is, for the $i$-th test example $x^{(i)}$, we sample $y^{(i)}_1, \dots, y^{(i)}_N$ from the predictive distribution (which is expressed as a Gaussian mixture distribution where the mixing distribution is the empirical distribution on the MCMC samples), and then obtain the 95\% predictive interval by the 2.5\% and 97.5\% quantiles among $y^{(i)}_1, \dots, y^{(i)}_N.$

\subsection{UCI dataset}
For BNN and the mBNN, two hidden layer MLP with the layer sizes (1000,1000) and Cauchy prior with scale 1.0 are used.
The prediction distributions are approximated by 20 MCMC samples which are obtained
by HMC with step size $10^{-2}$ and NUTs, 2 burn-in samples and thinning interval 200.
For the mBNN, we use $N_{\max} = 3$, $\lambda=0.1$, $T_{MH}=1$ and $n_{MH}=10$.

For the node-sparse VI (NS-VI), we adopt Bayesian compression using group normal-Jeffreys prior \cite{louizos2017bayesian}. 
Note that most of existing node-sparse VI have the same spirit with \citet{louizos2017bayesian} in the sense of using hierarchical scale-mixture priors to prune nodes.
Among those algorithms, this method is chosen in consideration of versatility and reproducibility.
For UCI datasets, we use Adam \cite{kingma2014adam} optimizer with learning rate 0.01, 500 epoch and batch size 100.

\subsection{Bayesian structural time series model} \label{appendix_bsts}
The state space model with a general regression function $f_{\bm{\psi}}(\cdot)$, which is  parameterized by $\bm{\psi}$, can be expressed as 
\begin{align*}
y_t &= \mu_t + f_{\bm{\psi}}(\bm{x}_t) + \epsilon_t , && \epsilon_t \sim \mathcal{N}(0,\sigma_{\epsilon}^2)\\
\mu_{t+1} &= r \mu_{t} + \eta_t  , && \eta_t \sim \mathcal{N}(0,\sigma_{\eta}^2)\\
\mu_0 &\sim N(a_0, \sigma_0^2)
\end{align*}
where $0<r<1$ and $\bm{x}_t \in \mathbb{R}^d$, $y_t \in \mathbb{R}$ and $\mu_t \in \mathbb{R}$ denote observed input vector, output variable and unobserved local trend at time $t$, respectively.
Based on the state space model, BSTS model aims to forecast the current value of $y$ using 
the current values of other variables. 
This problem is called “nowcasting”\cite{banbura2010nowcasting}, which is widely used for economics \cite{giannone2008nowcasting}, inferring causal relationships \cite{brodersen2015inferring}, anomaly detection \cite{feng2021time}, to name just a few.

The dataset consists of daily search volumes of keywords in year 2021 associated with a pre-specified product (eg. shampoo).
We aim to predict the level of interest of the pre-specified product based on the past and current search volumes of related keywords. To be more specific, $y_t$ is the daily search volumes of the pre-specified product
and $\bm{x}_t$ is the vector of the daily search volumes of the related keywords.

For regression function $f_{\bm{\psi}}(\bm{x})$, we consider the linear, BNN and the mBNN models.
For the linear model, we adopt i.i.d standard Gaussian prior on  $\bm{\psi}$,
while for BNN and the mBNN we adopt the priors considered in the paper. 
We use inverse-gamma prior on $\sigma_{\epsilon}^2$ and standard Gaussian prior on $\mu_0$.
We use $r=0.95$ and $\sigma_{\eta}^2 = 0.1$.
Using the past dataset $\{\bm{x}_t , y_t\}_{t=1}^{T}$ where $T=240$,
we generate MCMC samples of $\bm{\psi}$, $\sigma_{\epsilon}^2$ and $\{\mu_t\}_{t=1}^T$ from the posterior distribution using an MCMC algorithm.
To be more specific, we first sample $\bm{\psi}$ from  its conditional posterior distribution.
For the  linear model, $\bm{\psi}$ can be  sampled easily since its conditional posterior distribution is also Gaussian.
For BNN and the mBNN, we use the MCMC algorithm developed in this paper.
Once $\bm{\psi}$ is sampled,
we sample $\sigma_{\epsilon}^2$ and sample $\mu_t$ using the Kalman filter \cite{welch1995introduction, durbin2002simple}. 

For every method, the prediction distributions are approximated by 20 MCMC samples with 10 burn-in samples and thinning interval 10. 
We use standard Gaussian prior for the linear model, and 
For BNN and the mBNN, two hidden layer MLP with the layer sizes (100,100) and Cauchy prior with scale 1.0 are used.
For the mBNN, we use $N_{\max} = 3$, $\lambda=0.1$, $T_{MH}=1$ and $n_{MH}=2$.

\subsection{Image dataset} \label{app_image}
For NS-VI, starting from a pretrained network,  
we use the Adam optimizer with the learning rate 0.0001, 200 epoch and batch size 100 for optimization.

For node-sparse Deep ensemble (NS-Ens), we first train each member of the ensemble to be node-wise sparse DNNs using LeGR \cite{chin2020towards},
and then combine them  as \citet{lakshminarayanan2017simple} does.
We use the default hyper-parameters provided in \cite{chin2020towards}.

As MC-dropout \cite{gal2016dropout} uses the weights obtained by Dropout \cite{srivastava2014dropout} multiplied by a random masking vectors, 
we use the node-wise sparse weights obtained by Targeted Dropout \cite{gomez2019learning} multiplied by a random masking vectors for node-sparse MC-Dropout (NS-MC).
Specifically, let $\gamma \in [0,1]$ and $\alpha \in [0,1]$ be the pre-specified proportions for pruning and drop probabilities, respectively.
Before each gradient step, we select nodes having the $\gamma \times 100 \%$ lowest magnitude ($L^2$-norm of the connected weights) on each layer, and then randomly mask the selected nodes with probability $\alpha$.
This implies that the expected ratio of nodes to survive during each step is $(1-\alpha \gamma)$.
For inference, we multiply a random masking vector to the nodes.
Through careful tuning, we choose $(\alpha,\gamma) = (0.9, 0.4)$ for CIFAR10 and $(\alpha,\gamma) = (0.9, 0.3)$ for CIFAR100. 
For optimization, we use the Adam optimizer with the learning rate 0.001, 200 epoch and batch size 100.

For the mBNN, we use SGLD \cite{welling2011bayesian} with the batch-size 100, step size $10^{-3}$ with the Cosine scheduler and temperature $1/\sqrt{n}$, which are commonly used in SG-MCMC literature \cite{zhang2019cyclical, wenzel2020good}. 
MCMC samples are obtained after 5 burn-in samples and thinning interval 20.
We use $N_{\max} = 3$, $\lambda=0.05$, $T_{MH}=10$ (batch) and $n_{MH}=1$.

\newpage
\section{Additional experimental results} \label{App_additional}

\paragraph{Additional results on various $\lambda$} 
In Section \ref{sec6_2}, we conduct an experiment to demonstrate the necessity
of masking variables on real dataset, where
we report the result of the mBNN with $\lambda=0.1$.
Asymptotically the mBNN automatically finds 
an appropriate network architecture for given data and complexity of the true regression model
as long as $\lambda$ remains bounded, but
finite sample performance varies according to the choice of $\lambda.$

\begin{table}[h!] 
  \caption{\textbf{Additional results on various $\lambda$.} Performance of BNN, NS-VI and the mBNN (with various $\lambda$) on UCI datasets.}
  \begin{adjustbox}{center,max width=\linewidth}
  \setlength{\aboverulesep}{0.2pt}
  \setlength{\belowrulesep}{0.2pt}
  \scalebox{0.9}{
    \begin{tabular}{c|c|c|ccccc}
      \hline
       \textbf{Dataset} & \textbf{Method} & $\lambda$ & \textbf{R in 95\% C.I.} &  \textbf{RMSE} & \textbf{NLL} & \textbf{CRPS} & \textbf{\# of activated nodes} \\
    \hline
    \multirow{5}{*}{Boston} 
    &  BNN  & $\cdot$ & 0.912(0.006) & 3.411(0.145) & 2.726(0.087) & 1.738(0.052) & (1000,1000)  \\
    &  NS-VI  & $\cdot$ & 0.746(0.010) &  3.079(0.179) & 4.242(0.568) & 1.661(0.069) & (13, 21) \\
    & mBNN & $0.075$ & 0.924(0.142) & 2.961(0.142) & 2.547(0.079) & 1.535(0.057) & (102, 44) \\
      &  mBNN & $0.1$ & 0.933(0.007) & 2.902(0.143) & 2.472(0.085) &  1.462(0.055)  & (29, 12)  \\
      & mBNN & $0.125$ & 0.939(0.061) & 2.790(0.170) & 2.381(0.063) & 1.431(0.061) & (11, 5) \\
    \hline
      \multirow{5}{*}{Concrete}   
      & BNN  & $\cdot$ & 0.908(0.008) & 5.080(0.178) & 3.091(0.055) & 2.658(0.072) & (1000,1000) \\
    &  NS-VI  & $\cdot$ & 0.568(0.011) & 5.046(0.149) & 9.713(0.686) & 2.965(0.088) & (30, 19)  \\
    & mBNN & $0.075$ & 0.916(0.127) & 4.908(0.127) & 3.023(0.034) & 2.626(0.062) & (108,54) \\
        & mBNN & $0.1$ & 0.912(0.009) & 4.913(0.180) & 3.027(0.046) & 2.628(0.087)  & (35, 17) \\
    & mBNN  & $0.125$ & 0.924(0.006) & 5.002(0.139) & 3.035(0.035) & 2.683(0.068) & (16, 7) \\
    \hline
     \multirow{5}{*}{Energy}
     & BNN	 & $\cdot$ & 0.945(0.005) & 0.591(0.017) & 0.902(0.025) & 0.322(0.007) & (1000,1000) \\
    &  NS-VI  & $\cdot$ & 0.913(0.006) & 1.322(0.117) & 1.792(0.121) & 0.720(0.055) & (7, 10)  \\
    & mBNN & $0.075$ & 0.943(0.005) & 0.473(0.012) & 0.676(0.030) & 0.254(0.005) & (211, 210) \\
      & mBNN & $0.1$ & 0.945(0.004) & 0.474(0.015) & 0.670(0.034) & 0.256(0.006)  & (35, 17)\\
      & mBNN  & $0.125$ & 0.955(0.004) & 0.510(0.026) & 0.736(0.047) & 0.278(0.013) & (19, 13)\\
     \hline
     \multirow{5}{*}{Yacht}
     & BNN	& $\cdot$ & 0.977(0.006) & 0.675(0.048) & 1.011(0.056) & 0.332(0.013) & (1000,1000) \\
    &  NS-VI  & $\cdot$ & 0.932(0.011) & 1.842(0.096) & 1.915(0.063) & 0.969(0.042) & (102, 54) \\
    & mBNN  & $0.075$ & 0.976(0.006) & 0.604(0.054) & 0.920(0.084) & 0.295(0.015) & (317, 193) \\
      & mBNN  & $0.1$ & 0.953(0.008) & 0.664(0.044) & 0.932(0.062) & 0.318(0.015)  & (120, 76)\\
      & mBNN  & $0.125$ & 0.954(0.011) & 0.621(0.054) & 0.966(0.102) & 0.301(0.023) & (42,26) \\
    \hline
    \end{tabular}
    }
  \end{adjustbox}
 \label{table_UCI_appendix}
\end{table}

In Table \ref{table_UCI_appendix}, we report the means and standard errors of the performance measures of the mBNN
with various values of $\lambda.$ The results indicate that the mBNN performs stably with respect to the choice of
$\lambda$ even though the level of sparsity is proportional to $\lambda.$ In practice,
$\lambda$ can be selected based on either validation data or putting a prior on $\lambda.$

\newpage

\paragraph{Compatibility of proposed MCMC algorithm with mini-batching}
The bias of MH with mini-batch has been discussed, and recently there have been many works on this topic. Specifically, TunaMH \cite{zhang2020asymptotically} and MHBT \cite{wu2022mini} are representative examples, and both algorithms ensure the convergence of mini-batch MH. Both algorithms propose unbiased mini-batch MH through appropriate modifications of batch-sampling method and acceptance rate. That is, their methods can be applied to our MH algorithm without much modification. We conduct experiments with mBNN utilizing MHBT on CIFAR10 and CIFAR100 to obtain the following results in Table \ref{MBNN_MHBT}.
The results are similar to the original MH algorithm (without caring biases due to mini-batches).
\begin{table}[h!] \label{MBNN_MHBT}
  \caption{\textbf{mBNN utilizing MHBT} Performance of mBNN utilizing MHBT on image datasets.}
  \centering
\begin{tabular}{c|c|c|c|c|c}
\hline
Dataset  & ACC   & NLL   & ECE   & FLOPs & Capacity \\
\hline
CIFAR10  & 0.933 & 0.206 & 0.006 & 12.92\% & 3.82\%   \\
\hline
CIFAR100 & 0.740 & 0.982 & 0.002 & 21.52\% & 14.27\% \\
\hline
\end{tabular}
\end{table}

\paragraph{Additional results for Section \ref{sec6_5}}

In Figure \ref{proposal_compare}, we report the results only on Yacht, CIFAR10 and CIFAR100 datasets to illustrate the efficiency of our proposal distribution in Algorithm \ref{algMH}.
Here, we report the results for the other datasets which are presented in Figure \ref{proposal_compare_Appendix}. 
The patterns are similar in that our proposal distribution works most efficiently.

\begin{figure}[h!]
\centering
\subfigure[Noisy polynomial]{\includegraphics[width=0.24\linewidth]{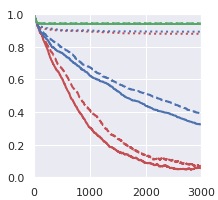}}
\hfill
\subfigure[Boston]{\includegraphics[width=0.24\linewidth]{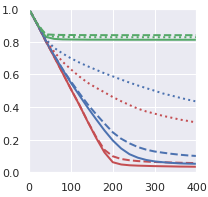}}
\hfill
\subfigure[Concrete]{\includegraphics[width=0.24\linewidth]{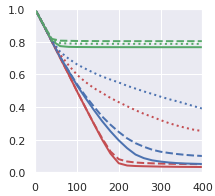}}
\hfill
\subfigure[Energy]{\includegraphics[width=0.24\linewidth]{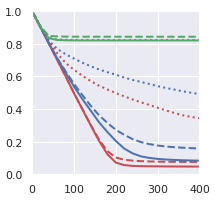}}
\hfill\\
\hfill
\subfigure[Yacht]{\includegraphics[width=0.24\linewidth]{image/manuscript/Yacht_9method.PNG}}
\hfill
\subfigure[CIFAR10]{\includegraphics[width=0.24\linewidth]{image/manuscript/CIFAR10_9method.PNG}}
\hfill
\subfigure[CIFAR100]{\includegraphics[width=0.24\linewidth]{image/manuscript/CIFAR100_9method.PNG}}
\hfill
\caption{\textbf{Efficiency of the proposal.} 
The ratios of activated nodes (y-axis) relative to the largest DNN
are presented as the MCMC algorithms iterate in the burn-in period (x-axis).
The solid, dashed and dotted lines correspond to (\ref{pro_uni}), (\ref{pro_reci}) and (\ref{pro_line}) for {\it birth}, respectively while blue, red and green lines correspond to (\ref{pro_uni}), (\ref{pro_reci}) and (\ref{pro_line}) for {\it death}, respectively. 
} \label{proposal_compare_Appendix}
\end{figure}

\end{document}